\newtheoremstyle{examplestyle}
  {1.1\topsep} 
  {1.1\topsep} 
  {\itshape} 
  {} 
  {\bfseries} 
  {.} 
  {0.5em} 
  {} 
\theoremstyle{examplestyle}
\newtheorem{theorem}{Theorem}
\newtheorem{lemma}{Lemma}
\newtheorem{proposition}{Proposition}
\newtheorem{remark}{Remark}
\newcommand{\rnum}[1]{\expandafter{\romannumeral #1\relax}}
\newcommand{\RNum}[1]{\uppercase\expandafter{\romannumeral #1\relax}}
\newcommand{\tpdv}[2]{\frac{\partial{#1}}{\partial{#2}}}
\newcommand{\pdv}[2]{\frac{\partial{#1}}{\partial{#2}}}
\def\CS{{\mathcal S}} 
\def\CP{{\mathcal P}}
\def\BR{{\mathbb R}}
\def\BN{{\mathbb N}}
\newcommand{\E}{\mathbb E}
\newcommand{\mE}{\mathbb E}
\DeclareMathOperator*{\argmin}{arg\,min}
\newcommand{\ff}{f^*\!\!~}
\newcommand\Tstrut{\rule{0pt}{2.0ex}}         
\begin{document}


\twocolumn[
\icmltitle{Lipschitz Generative Adversarial Nets}



\icmlsetsymbol{equal}{*}

\begin{icmlauthorlist}
\icmlauthor{Zhiming Zhou}{sjtu}
\icmlauthor{Jiadong Liang}{pku}
\icmlauthor{Yuxuan Song}{sjtu}
\icmlauthor{Lantao Yu}{stanford}
\icmlauthor{Hongwei Wang}{stanford}
\icmlauthor{Weinan Zhang}{sjtu}
\icmlauthor{Yong Yu}{sjtu}
\icmlauthor{Zhihua Zhang}{pku}
\end{icmlauthorlist}

\icmlaffiliation{sjtu}{Shanghai Jiao Tong University}
\icmlaffiliation{pku}{Peking University}
\icmlaffiliation{stanford}{Stanford University}

\icmlcorrespondingauthor{Zhiming Zhou}{heyohai@apex.sjtu.edu.cn}

\icmlkeywords{Lipschitz-Continuity, Generative Adversarial Nets, Convergence, Objective}

\vskip 0.3in
]

\printAffiliationsAndNotice{}

\begin{abstract}
In this paper we show that generative adversarial networks (GANs) without restriction on the discriminative function space commonly suffer from the problem that the gradient produced by the discriminator is uninformative to guide the generator. By contrast, Wasserstein GAN (WGAN), where the discriminative function is restricted to $1$-Lipschitz, does not suffer from such a gradient uninformativeness problem. We further show in the paper that the model with a compact dual form of Wasserstein distance, where the Lipschitz condition is relaxed, may also theoretically suffer from this issue. This implies the importance of Lipschitz condition and motivates us to study the general formulation of GANs with Lipschitz constraint, which leads to a new family of GANs that we call Lipschitz GANs (LGANs). We show that LGANs guarantee the existence and uniqueness of the optimal discriminative function as well as the existence of a unique Nash equilibrium. We prove that LGANs are generally capable of eliminating the gradient uninformativeness problem. According to our empirical analysis, LGANs are more stable and generate consistently higher quality samples compared with WGAN.
\end{abstract} 

\begin{table*}[ht]
\caption{Comparison of different objectives in GANs.}
\label{table1}
\centering
\vspace{-2pt}
\resizebox{1.0\textwidth}{!}{
\begin{tabular}{|c|c|c|c|c|c|c|c|}
\hline 
                    & \multirow{2}{*}{$\phi$}                & \multirow{2}{*}{$\varphi$}            & \multirow{2}{*}{$\mathcal{F}$}                   & \multirow{2}{*}{$\ff(x)$}  & Gradient   & Gradient   & $\ff(x)$  \\ 
                  & & & & & Vanishing & Uninformative & Uniqueness \\ 
\hline 
Vanilla GAN       & $-\log(\sigma(-x))$  & $-\log(\sigma(x))$  & $\{f\colon \BR^n \to \BR \}$    & $\log\frac{\CP_r(x)}{\CP_g(x)}$  
& \textcolor{red}{Yes} & \textcolor{red}{Yes} & \textcolor{blue}{Yes} \\
\hline
Least-Squares GAN       & $(x-\alpha)^2$        & $(x-\beta)^2$       & $\{f \colon  \BR^n \to  \BR \}$    & $\frac{\alpha \cdot \CP_g(x)+\beta \cdot \CP_r(x)}{\CP_r(x)+\CP_g(x)}$ 
& \textcolor{blue}{No} & \textcolor{red}{Yes} & \textcolor{blue}{Yes}  \\ 
\hline 
$\mu$-Fisher GAN    & $x$                   & $-x$                & $\{f \colon \BR^n \to \BR, ~ \E_{x\sim \mu}    |f(x)|^2 \leq 1 \}$   & $\frac{1}{\mathcal{F}_{\mu}(\CP_r,\CP_g)}\frac{\CP_r(x)-\CP_g(x)}{\mu(x)}$ 
& \textcolor{blue}{No} & \textcolor{red}{Yes} & \textcolor{blue}{Yes} \\
\hline 
Wasserstein GAN      & $x$                   & $-x$                & $\{f \colon \BR^n \to \BR, ~ k(f) \leq 1 \}$ & $\mathsmaller{N/A}$  
& \textcolor{blue}{No} & \textcolor{blue}{No} & \textcolor{red}{No} \\
\hline 
Lipschitz GAN       & \multicolumn{2}{|c|}{any $\phi$ and $\varphi$ satisfying Eq.~(\ref{eq_solvable})}  & $\{f \colon \BR^n \to \BR \}$; $k(f)$ is penalized & $\mathsmaller{N/A}$  
& \textcolor{blue}{No} & \textcolor{blue}{No} &  \textcolor{blue}{Yes} \\
\hline 
\end{tabular}
}
\end{table*}

\section{Introduction}
Generative adversarial networks (GANs) \citep{gan}, as one of the most successful generative models, have shown promising results in various challenging tasks. GANs are popular and widely used, but they are notoriously hard to train \citep{gan_tutorial}. The underlying obstacles, though have been heavily studied \citep{principled_methods,lucic2017gans,heusel2017gans,mescheder2017numerics,mescheder2018training,yadav2017stabilizing}, are still not fully understood. 

The objective of GAN is usually defined as a distance metric between the real distribution $\CP_r$ and the generative distribution $\CP_g$, which implies that $\CP_r=\CP_g$ is the unique global optimum. The nonconvergence of traditional GANs has been considered as a result of ill-behaving distance metric \cite{principled_methods}, i.e., the distance between $\CP_r$ and $\CP_g$ keeps constant when their supports are disjoint. 
\citet{wgan} accordingly suggested using the Wasserstein distance, which can properly measure the distance between two distributions no matter whether their supports are disjoint. 

In this paper, we conduct a further study on the convergence of GANs from the perspective of the informativeness of the gradient of \emph{the optimal discriminative function $\ff$}. We show that for GANs that have no restriction on the discriminative function space, e.g., the vanilla GAN and its most variants, $\ff(x)$ is only related to the densities of the local point $x$ and does not reflect any information about 
other points in the distributions. We demonstrate that under these circumstances, the gradient of the optimal discriminative function with respect to its input, on which the generator updates generated samples, usually tells nothing about the real distribution. We refer to this phenomenon as the \emph{gradient uninformativeness}, which is substantially different from the gradient vanishing and is a fundamental cause of nonconvergence of GANs. 

According to the analysis of \citet{wgangp}, Wasserstein GAN can avoid the gradient uninformativeness problem. Meanwhile, we show in the paper that the Lipschitz constraint in the Kantorovich-Rubinstein dual of the Wasserstein distance can be relaxed, leading to a new equivalent dual; and with the new dual form, the gradient may also not reflect any information about how to refine $\CP_g$ towards $\CP_r$. It suggests that Lipschitz condition would be a vital element for resolving the gradient uninformativeness problem.

Motivated by the above analysis, we investigate the general formulation of GANs with Lipschitz constraint. We show that under a mild condition, penalizing Lipschitz constant guarantees the existence and uniqueness of the optimal discriminative function as well as the existence of the unique Nash equilibrium between $\ff$ and $\CP_g$ where $\CP_r=\CP_g$. It leads to a new family of GANs that we call Lipschitz GANs (LGANs). We show that LGANs are generally capable of eliminating the gradient uninformativeness in the manner that with the optimal discriminative function, the gradient for each generated sample, if nonzero, will point towards some real sample. This process continues until the Nash equilibrium $\CP_r=\CP_g$ is reached. 

The remainder of this paper is organized as follows. In Section~\ref{sec_prelims}, we provide some preliminaries that will be used in this paper. In Section~\ref{sec_gradient_uninformative}, we study the gradient uninformativeness issue in detail. In Section~\ref{sec_lip}, we present LGANs and their theoretical analysis. We conduct the empirical analysis in Section~\ref{sec_exp}. Finally, we discuss related work in Section~\ref{sec_related_work} and conclude the paper in Section~\ref{sec_conclusion}. 

\section{Preliminaries}
\label{sec_prelims}

In this section we first give some notions and then present a general formulation for generative  adversarial networks. 

\subsection{Notation and Notions}

Given two metric spaces $(X, d_X)$ and $(Y, d_Y)$, a function $f\colon X \to Y$ is said to be Lipschitz continuous if there exists a constant $k\geq 0$ such that
\begin{equation} \label{eq_lip}
d_Y(f(x_1), f(x_2)) \leq k \cdot d_X(x_1, x_2), \forall \; x_1, x_2 \in X.
\end{equation}
In this paper and in most existing GANs, the metrics $d_X$ and $d_Y$ are by default Euclidean distance which we also denote by $\lVert \cdot \rVert$. The smallest constant $k$ is called the (best)
Lipschitz constant of  $f$,   denoted by $k(f)$. 

The first-order Wasserstein distance $W_1$ between two probability distributions is defined as
\begin{equation}\label{eq_w_primal}
W_1(\CP_r, \CP_g) =  \inf_{\pi \in \Pi(\CP_r,\CP_g)} \, \E_{(x,y) \sim \pi} \, [d(x, y)],
\end{equation}
where $\Pi(\CP_r, \CP_g)$ denotes the set of all probability measures with marginals $\CP_r$ and $\CP_g$. It can be interpreted as the minimum cost of transporting the distribution $\CP_g$ to the distribution $\CP_r$. We use $\pi^*$ to denote the optimal transport plan, and let $\CS_r$ and $\CS_g$ denote the supports of $\CP_r$ and $\CP_g$, respectively. We say two distributions are disjoint if their supports are disjoint. 

The Kantorovich-Rubinstein (KR) duality \cite{oldandnew} provides a way of more efficiently computing of Wasserstein distance. The duality states that
\begin{equation} \label{eq_w_dual_lip}
\begin{aligned}
W_1(\CP_r, \CP_g) &= {\sup}_{f} \,\, \E_{x \sim \CP_r} \, [f(x)] - \E_{x \sim \CP_g} \, [f(x)],  \, \\
&\emph{s.t.} \, f(x) - f(y) \leq d(x, y), \,\, \forall x, \forall y. 
\end{aligned}
\end{equation}
The constraint in Eq.~\eqref{eq_w_dual_lip} implies that $f$ is Lipschitz continuous with $k(f)\leq 1$. Interestingly, we have a more compact dual form of the Wasserstein distance. That is, 
\begin{equation}
\begin{aligned}
W_1(\CP_r,\CP_g) = {\sup}_{\mathsmaller{f}} \,\, \E_{x \sim \CP_r} \, [f(x)] - \E_{x \sim \CP_g} \, [f(x)], \, \\
\emph{s.t.} \, f(x) - f(y) \leq d(x, y), \,\, \forall x \in \CS_r, \forall y \in \CS_g. 
\end{aligned}
\label{eq_w_dual_form_1}
\end{equation}
The proof for this  dual form is given in Appendix~\ref{app_dual_form}.
We see that this new dual relaxes
the Lipschitz continuity condition of the  dual form in Eq.~\eqref{eq_w_dual_lip}. 

\subsection{Generative Adversarial Networks (GANs)}

Typically,  GANs can be formulated as
\begin{equation} \label{eq_gan_formulation}
\begin{aligned}
	&\min_{f \in \mathcal{F}} \ J_D \triangleq \E_{z \sim \CP_z} [ \phi(f(g(z))) ] + \E_{x \sim \CP_r} [ \varphi(f(x)) ], \\ 
	&\min_{g \in \mathcal{G}} \ J_G \triangleq \E_{z \sim \CP_z} [ \psi(f(g(z))) ],
\end{aligned}
\end{equation}
where $\CP_z$ is the source distribution of the generator in $\BR^m$ and $\CP_r$ is the target (real) distribution in $\BR^n$. The generative function $g \colon \BR^m\to  \BR^n$ learns to output samples that share the same dimension as samples in $\CP_r$, while the discriminative function $f\colon  \BR^n \to \BR$ learns to output a score indicating the authenticity of a given sample. Here
$\mathcal{F}$ and $\mathcal{G}$ denote discriminative and generative function spaces, respectively; and $\phi$, $\varphi$, $\psi$: $\BR \rightarrow \BR$ are loss metrics. 
We denote the implicit distribution of the generated samples by $\CP_g$. 

We list the choices of $\mathcal{F}$, $\phi$ and $\varphi$ in some representative GAN models in Table~\ref{table1}. In these GANs, the gradient that the generator receives from the discriminator with respect to (w.r.t.) a generated sample $x \in \CS_g$ is 
\begin{equation} \label{eq_gen_grad}
\nabla_{\!x} J_G(x) \triangleq \nabla_{\!x} \psi(f(x)) = \nabla_{\!f(x)} \psi(f(x)) \cdot \nabla_{\!x} f(x), 
\end{equation}
where the first term $\nabla_{\!f(x)} \psi(f(x))$ is a step-related scalar, and the second term $\nabla_{\!x} f(x)$ is a vector with the same dimension as $x$ which indicates the direction that the generator should follow for optimizing the generated sample $x$. 

We use $\ff$ to denote the optimal discriminative function, i.e., $\ff\triangleq\argmin_{f \in \mathcal{F}} J_D$. For further notation, we let $\mathring{J}_D(x) \triangleq \CP_g(x) \phi(f(x)) + \CP_r(x) \varphi(f(x))$. It has $J_D= \int{\mathring{J}_D(x) d x}$.  

\subsection{The Gradient Vanishing}

The gradient vanishing problem has been typically thought as a key factor for causing the nonconvergence of GANs, i.e., the gradient becomes zero when the discriminator is perfectly trained.  

\citet{gan} addressed this problem by using an alternative objective for the generator. Actually, only the scalar $\nabla_{\!f(x)} \psi(f(x))$ is changed. The Least-Squares GAN \citep{lsgan}, which aims at addressing the gradient vanishing problem, also focused on $\nabla_{\!f(x)} \psi(f(x))$. 

\citet{principled_methods} provided a new perspective for understanding the gradient vanishing. They argued that $\CS_r$ and $\CS_g$ are usually disjoint and the gradient vanishing stems from the ill-behaving of traditional distance metrics, i.e., the distance between $\CP_r$ and $\CP_g$ remains constant when they are disjoint. The Wasserstein distance was thus used \cite{wgan} as an alternative metric, which can properly measure the distance between two distributions no matter whether they are disjoint. 

\section{The Gradient Uninformativeness} \label{sec_gradient_uninformative}

In this paper we pay our main attention on the gradient direction of the optimal discriminative function, i.e., $\nabla_{\!x} \ff(x)$, along which the generated sample $x$ is updated. We show that for many distance metrics, such a gradient may fail to bring any useful information about $\CP_r$.
Consequently, $\CP_g$ is not guaranteed to converge to $\CP_r$. 
We name this pheno-menon as the \emph{gradient uninformativeness} 
and argue that it is a fundamental factor of resulting in nonconvergence and instability in the training of traditional GANs.

The gradient uninformativeness is substantially different from the gradient vanishing. The gradient vanishing is about the scalar term $\nabla_{\!f(x)} \psi(f(x))$ in $\nabla_{\!x} J_G(x)$ or the overall scale of $\nabla_{\!x} J_G(x)$, while the gradient uninformativeness is about the direction of $\nabla_{\!x} J_G(x)$, which is defined by $\nabla_{\!x} \ff(x)$. The two issues are orthogonal, though they sometimes exist simultaneously. See Table~\ref{table1} for a summary of issues for representative GANs.

Next, we discuss the gradient uninformativeness in the taxonomy of restrictions on the discriminative function space $\mathcal{F}$. We will show that for unrestricted GANs, gradient uninformativeness commonly exists; for restricted GANs, such an issue might still exist; and with Lipschitz condition, it generally does not exist.

\subsection{Unrestricted GANs}

For many GAN models, there is no restriction on $\mathcal{F}$. Typical cases include  $\!f$-divergence based GANs, such as the vanilla GAN \cite{gan}, Least-Squares GAN \cite{lsgan} and $f$-GAN \cite{fgan}. 

In these GANs, the value of the optimal discriminative function at each point $\ff(x)$ is independent of other points and only reflects the local densities $\CP_r(x)$ and $\CP_g(x)$: 
\vspace{-1pt}
\begin{equation*} 
\begin{aligned}
\ff(x) = \argmin_{f(x) \in \BR} \ \CP_g(x)  \phi(f(x)) + \CP_r(x) \varphi(f(x)), \,\, \forall x. \nonumber
\end{aligned} 
\vspace{-1pt}
\end{equation*}
Hence, for each generated sample $x$ which is not surrounded by real samples (there exists $\epsilon\!>\!0$ such that for all $y$ with $0\!<\!\lVert y-x \rVert\!<\!\epsilon$, it holds that $y \notin \CS_r$), $\ff(x)$ in the surrounding of $x$ would contain no information about $\CP_r$. Thus $\nabla_{\!x} \ff(x)$, the gradient that $x$ receives from the optimal discriminative function, does not reflect any information about $\CP_r$. 

Typical situation is that $\CS_r$ and $\CS_g$ are disjoint, which is common in practice according to \citep{principled_methods}. To further distinguish the gradient uninformativeness from the gradient vanishing, we consider an ideal case: $\CS_r$ and $\CS_g$ are totally overlapped and both consist of $n$ discrete points, but their probability masses over these points are different. In this case, $\nabla_{\!x} \ff(x)$ for each generated sample is still uninformative, but the gradient does not vanish. 

\subsection{Restricted GANs: Fisher GAN as an Instance }

Some GANs impose restrictions on $\mathcal{F}$. Typical instances are the Integral Probability Metric (IPM) based GANs \cite{fishergan,sobolevgan,bellemare2017cramer} and the Wasserstein GAN \cite{wgan}. We next show that GANs with restriction on $\mathcal{F}$ might also suffer from the gradient uninformativeness. 

The optimal discriminative function of $\mu$-Fisher IPM $\mathcal{F}_{\mu}(\CP_r, \CP_g)$, the generalized objective of the Fisher GAN \citep{sobolevgan}, has the following form:
\vspace{-1pt}
\begin{equation}
\begin{aligned}
\ff(x)= \frac{1}{\mathcal{F}_{\mu}(\CP_r, \CP_g)}\frac{\CP_r(x)-\CP_g(x)}{\mu(x)},
\end{aligned}
\vspace{-0.5pt}
\end{equation}
where $\mu$ is a distribution whose support covers $\CS_r$ and $\CS_g$, and $\frac{1}{\mathcal{F}_{\mu}(\CP_r, \CP_g)}$ is a constant. 
It can be observed that $\mu$-Fisher IPM also defines $\ff(x)$ at each point according to the local densities and does not reflect information of other locations. Similar as above, we can conclude that for each generated sample that is not surrounded by real samples, $\nabla_{\!x} \ff(x)$ is uninformative. 

\subsection{The Wasserstein GAN} \label{sec_w_also_suffer}

As shown by \citet{wgangp}, the gradient of the optimal discriminative function in the KR dual form of the Wasserstein distance has the following property: 
\begin{proposition} \label{lem:1}
Let $\pi^*$ be the optimal transport plan in Eq.~(\ref{eq_w_primal}) and $x_t=t x + (1-t) y$ with $0 \leq t \leq 1$. If the optimal discriminative function $\ff$ in Eq.~(\ref{eq_w_dual_lip}) is differentiable and $\pi^*(x,x)=0\,$ for all $x$,  then it holds that
\vspace{-2pt}
\begin{equation} \label{lipgrad}
{{\rm P}_{(x, y)\sim \pi^*}} \left[\nabla_{\!x_t} \ff(x_t) = \frac{y-x}{\lVert y - x \rVert}\right] = 1. 
\end{equation}
\end{proposition}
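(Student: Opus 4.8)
The plan is to exploit two facts in tandem: the optimal discriminator $\ff$ \emph{simultaneously} attains the Kantorovich--Rubinstein supremum in Eq.~(\ref{eq_w_dual_lip}) while $\pi^*$ attains the primal infimum in Eq.~(\ref{eq_w_primal}), and $\ff$ is $1$-Lipschitz. Together these pin $\ff$ down so tightly along every $\pi^*$-typical transport segment --- it must increase at exactly the maximal rate its Lipschitz constant allows --- that its gradient at $x_t$ is forced, and Cauchy--Schwarz then reads it off.

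First I would derive a saturation identity along $\pi^*$. By KR duality, $W_1(\CP_r,\CP_g)=\E_{x\sim\CP_r}[\ff(x)]-\E_{x\sim\CP_g}[\ff(x)]$; taking the $\pi^*$-marginals in the order that matches the statement (the $x$-marginal being $\CP_g$ and the $y$-marginal $\CP_r$), this rewrites as $\E_{(x,y)\sim\pi^*}[\ff(y)-\ff(x)]$. On the other hand $W_1(\CP_r,\CP_g)=\E_{(x,y)\sim\pi^*}[\lVert x-y\rVert]$. Subtracting gives $\E_{(x,y)\sim\pi^*}\big[\,\lVert x-y\rVert-(\ff(y)-\ff(x))\,\big]=0$, and the integrand is pointwise nonnegative because the constraint in Eq.~(\ref{eq_w_dual_lip}) yields $\ff(y)-\ff(x)\le\lVert x-y\rVert$; hence it vanishes $\pi^*$-almost surely, i.e.\ $\ff(y)-\ff(x)=\lVert x-y\rVert$ for $\pi^*$-a.e.\ $(x,y)$. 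Since $\pi^*(x,x)=0$, we also have $x\neq y$ on this full-measure set, so $\tfrac{y-x}{\lVert y-x\rVert}$ is well defined there.

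Next I would propagate this to the whole segment and differentiate. Fix such a pair $(x,y)$ and any $s\in[0,1]$; $1$-Lipschitzness applied to the two legs gives $\ff(y)-\ff(x_s)\le\lVert y-x_s\rVert=s\lVert x-y\rVert$ and $\ff(x_s)-\ff(x)\le\lVert x_s-x\rVert=(1-s)\lVert x-y\rVert$, whose sum is $\lVert x-y\rVert=\ff(y)-\ff(x)$, so both inequalities must be equalities. Hence $s\mapsto\ff(x_s)=\ff(x)+(1-s)\lVert x-y\rVert$ is affine; reading off the slope, $\ff$ increases at unit rate in the direction $w\triangleq\tfrac{y-x}{\lVert y-x\rVert}$, so --- using differentiability of $\ff$ at $x_t$, one-sidedly if $t\in\{0,1\}$ --- its gradient there satisfies $\langle\nabla_{\!x_t}\ff(x_t),w\rangle=1$. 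Moreover $\lVert\nabla_{\!x_t}\ff(x_t)\rVert\le1$ because $\ff$ is $1$-Lipschitz and differentiable at $x_t$, so $1=\langle\nabla_{\!x_t}\ff(x_t),w\rangle\le\lVert\nabla_{\!x_t}\ff(x_t)\rVert\,\lVert w\rVert\le1$, and the equality case of Cauchy--Schwarz forces $\nabla_{\!x_t}\ff(x_t)=w=\tfrac{y-x}{\lVert y-x\rVert}$. As this holds for $\pi^*$-a.e.\ $(x,y)$, the probability in Eq.~(\ref{lipgrad}) equals $1$.

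The two places I expect to need care are the measure-theoretic passage in the second paragraph --- a nonnegative integrand with zero integral vanishes almost everywhere, which also tacitly uses that the dual supremum is genuinely attained by some $\ff$ (the standing meaning of ``the optimal discriminative function''; rigorously this needs compactness of the supports) --- and the endpoints $t\in\{0,1\}$ in the third paragraph, where Lipschitzness controls only the on-segment side of the difference quotient, so one must first extract the one-sided directional derivative and only then invoke differentiability. Everything else is the elementary interplay between the Lipschitz inequality and Cauchy--Schwarz.
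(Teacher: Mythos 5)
Your proof is correct and follows essentially the same route as the paper's own machinery: your third paragraph is the argument of Theorem~\ref{theorem_gradient} (Appendix~\ref{lip_direction}) specialized to $k=1$ --- linearity of $\ff$ along the segment from saturation of the two Lipschitz legs, then the directional derivative and the equality case of Cauchy--Schwarz. The only part the paper does not spell out (it defers Proposition~\ref{lem:1} to the citation of \citet{wgangp}) is your duality-saturation step $\ff(y)-\ff(x)=\lVert y-x\rVert$ for $\pi^*$-a.e.\ $(x,y)$, which you supply correctly via the nonnegative-integrand-with-zero-integral argument, together with appropriate care at the endpoints $t\in\{0,1\}$.
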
 

This proposition indicates: (i) for each generated sample $x$, there exists a real sample $y$ such that $\nabla_{\!x_t} \ff(x_t)=\frac{y-x}{\lVert y - x \rVert}$ for all linear interpolations $x_t$ between $x$ and $y$, i.e., the gradient at any $x_t$ is pointing towards the real sample $y$; (ii) these $(x, y)$ pairs match the optimal coupling $\pi^*$ in the optimal transport perspective. It implies that WGAN is able to overcome the gradient uninformativeness as well as the gradient vanishing. 

Our concern turns to the reason why WGAN can avoid gradient uninformativeness. 
To address this question, we alternatively apply the compact dual of the Wasserstein distance in Eq.~\eqref{eq_w_dual_form_1} and study the optimal discriminative function. 

Since there is generally no closed-form solution for $\ff$ in Eq.~(\ref{eq_w_dual_form_1}), we take an illustrative example, but the conclusion is general. 
Let $Z \sim U[0,1]$ be a uniform variable on interval $[0, 1]$, $\CP_r$ be the distribution of $(1, Z)$ in $\BR^2$, and $\CP_g$ be the distribution of $(0, Z)$ in $\BR^2$. According to Eq.~(\ref{eq_w_dual_form_1}),  we have an optimal $\ff$ as follows
\vspace{-3pt}
\begin{equation}
\label{eq9}
\ff(x)=
\begin{cases}
\begin{aligned}
&1, \,\,\,\,\,\, &&\forall x \in \CS_r; \\
&0, \,\,\,\,\,\, &&\forall x \in \CS_g. 
\end{aligned}
\end{cases}
\vspace{-3pt}
\end{equation}
Though having the constraint ``$f(x) - f(y) \leq d(x, y),$ $\, \forall x \in \CS_r, \forall y \in \CS_g$,'' the Wasserstein distance in this dual form also only defines the values of $\ff(x)$ on $\CS_r$ and $\CS_g$. For each generated sample $x$ which is isolated or at the boundary (there does not exist $\epsilon\!>\!0$ such that it holds $y \in \CS_r\cup\CS_g$ for all $y$ with $0\!<\!\lVert y-x \rVert\!<\!\epsilon$), the gradient of $\ff(x)$ is theoretically undefined and thus cannot provide useful information about $\CP_r$. We can consider the more extreme case where $\CS_g$ are isolated points to make it clearer. 


These examples imply that Lipschitz condition would be critical for resolving the gradient uninformativeness problem. Motivated by this, we study the general formulation of GANs with Lipschitz constraint, which leads to a family of more general GANs that we call Lipschitz GANs. We will see that in Lipschitz GANs, the similarity measure between $\CP_r$ and $\CP_g$ might not be some Wasserstein distance, but they still perform very well.

\section{Lipschitz GANs} \label{sec_lip}

Lipschitz continuity recently becomes popular in GANs. It was observed that introducing Lipschitz continuity as a regularization of the discriminator leads to improved stability and sample quality \citep{wgan,kodali2017convergence,fedus2017many,sngan,qi2017loss}. 

In this paper, we investigate the general formulation of GANs with Lipschitz constraint, where the Lipschitz constant of discriminative function is penalized via a quadratic loss, to theoretically analyze the properties of such GANs. In particular, we define the Lipschitz Generative Adversarial Nets (LGANs) as: 
\begin{align} 
&\min_{f \in \mathcal{F}} \E_{z \sim \CP_z} [ \phi(f(g(z))) ] + \E_{x \sim \CP_r} [ \varphi(f(x)) ] + \lambda \cdot k(f)^2,  \nonumber \\ 
&\min_{g \in \mathcal{G}} \E_{z \sim \CP_z} [ \psi(f(g(z))) ]. \label{eq_lgan}
\vspace{-1pt}
\end{align}
In this work, we further assume that the loss functions $\phi$ and  $\varphi$ satisfy the following conditions: 
\vspace{-1pt}
\begin{equation}
\begin{cases} \label{eq_solvable}
\phi'(x) > 0, \varphi'(x) < 0, \\[2pt]
\phi''(x) \geq 0, \varphi''(x) \geq 0, \\[2pt]
\exists \, a, \, \phi'(a) + \varphi'(a) = 0. 
\end{cases}
\vspace{-1pt}
\end{equation}
The assumptions for the losses $\phi$ and $\varphi$ are very mild. Note that in WGAN $\phi(x)=\varphi(-x)=x$ is used, which satisfies Eq.~(\ref{eq_solvable}). There are many other instances, such as $\phi(x)=\varphi(-x)=-\log(\sigma(-x))$, $\phi(x)=\varphi(-x)=x+\sqrt{x^2+1}$ and $\phi(x)=\varphi(-x)=\exp(x)$. Meanwhile, there also exist losses used in GANs that do not satisfy Eq.~(\ref{eq_solvable}), e.g., the quadratic loss \cite{lsgan} and the hinge loss \cite{energy_based_gan,geometric_gan,sngan}. 

To devise a loss in LGANs, it is practical to let $\phi$ be be an increasing function with non-decreasing derivative and set $\phi(x)=\varphi(-x)$. Moreover, the linear combinations of such losses still satisfy Eq.~(\ref{eq_solvable}). Figure~\ref{fig_function_curve} illustrates some of these loss metrics. 

Note that $\phi(x)=\varphi(-x)=-\log(\sigma(-x))$ is the objective of vanilla GAN. As we have shown, the vanilla GAN suffers from the gradient uninformativeness problem. However, as we will show next, when imposing the Lipschitz regularization, the resulting model as a specific case of LGANs behaves very well. 

\subsection{Theoretical Analysis} \label{sec_main_theorem}

We now present the theoretical analysis of LGANs. First, we consider the existence and uniqueness of the optimal discriminative function.

\begin{theorem} \label{thm:esistence}
Under Assumption~\eqref{eq_solvable} and if $\phi$ or $\varphi$ is strictly convex, the optimal discriminative function $\ff$ of Eq.~(\ref{eq_lgan}) exists and is unique. 
\vspace{-4pt}
\end{theorem}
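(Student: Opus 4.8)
The plan is to read Eq.~(\ref{eq_lgan}) as a \emph{convex} minimization problem over the discriminative function, prove existence by a coercivity-plus-compactness argument --- in which the penalty $\lambda\,k(f)^2$ controls the Lipschitz constant while strict convexity of one loss controls the ``vertical'' position of $f$ --- and prove uniqueness from strict convexity together with the Lipschitz structure. The first thing to record is convexity of the functional $\mathcal{L}(f):=\int \mathring{J}_D(x)\,dx+\lambda\,k(f)^2$ minimized in Eq.~(\ref{eq_lgan}): for each $x$ the integrand $\CP_g(x)\phi(f(x))+\CP_r(x)\varphi(f(x))$ is convex in the scalar $f(x)$ because $\phi''\ge0$, $\varphi''\ge0$ and the densities are nonnegative, so $\int\mathring{J}_D\,dx$ is convex; and $k(\cdot)$ is a seminorm, hence convex and nonnegative, so $k(\cdot)^2$ is convex.

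For existence I would first derive a global lower bound. Combining Jensen's inequality for the probability measures $\CP_r,\CP_g$, the tangent-line inequalities $\phi(t)\ge\phi(a)+\phi'(a)(t-a)$ and $\varphi(t)\ge\varphi(a)+\varphi'(a)(t-a)$ (valid by convexity), the identity $\varphi'(a)=-\phi'(a)$ from Assumption~(\ref{eq_solvable}), and the estimate $|\E_{\CP_r}f-\E_{\CP_g}f|\le k(f)\,W_1(\CP_r,\CP_g)$ (since $f/k(f)$ is $1$-Lipschitz), one gets $\mathcal{L}(f)\ge \phi(a)+\varphi(a)-\phi'(a)\,W_1(\CP_r,\CP_g)\,k(f)+\lambda\,k(f)^2$, a quadratic in $k(f)$ with positive leading coefficient ($\lambda>0$). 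Hence $\inf\mathcal{L}>-\infty$ and any minimizing sequence $\{f_n\}$ has uniformly bounded Lipschitz constants. Next I would rule out vertical escape: strict convexity of (say) $\phi$ makes $\phi'$ strictly increasing, which forces $\phi(u)+\varphi(u+\delta)\to+\infty$ as $|u|\to\infty$ uniformly for $\delta$ in any bounded set; but $\mathcal{L}(f_n)\le M$ gives, via Jensen, $\phi(\E_{\CP_g}f_n)+\varphi(\E_{\CP_r}f_n)\le M$ while $|\E_{\CP_g}f_n-\E_{\CP_r}f_n|$ is bounded, so $\E_{\CP_g}f_n$ stays bounded. Equi-Lipschitz plus bounded means then make $\{f_n\}$ uniformly bounded on compact sets, so by Arzel\`a--Ascoli a subsequence converges uniformly on compacts to some $\ff$, again Lipschitz with the same constant. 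Lower semicontinuity of $\mathcal{L}$ along this subsequence --- Fatou for the two integral terms (their integrands are bounded below by $\CP_g$- and $\CP_r$-integrable functions via the Lipschitz bound and finite first moments) and lower semicontinuity of the Lipschitz seminorm under pointwise limits --- yields $\mathcal{L}(\ff)\le\liminf\mathcal{L}(f_n)=\inf\mathcal{L}$, so $\ff$ is optimal.

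For uniqueness, suppose $f_1,f_2$ are both optimal; by convexity $\mathcal{L}$ is constant on the segment $[f_1,f_2]$, so each of its three convex summands is affine there. Affinity of $\int\CP_g\phi(\cdot)$ together with strict convexity of $\phi$ forces $f_1=f_2$ $\CP_g$-a.e.\ on $\CS_g$; affinity of $k(\cdot)^2$ together with the triangle inequality $k(\tfrac{f_1+f_2}{2})\le\tfrac12(k(f_1)+k(f_2))$ forces $(k(f_1)-k(f_2))^2\le0$, i.e.\ $k(f_1)=k(f_2)=:K$. Finally, since $\varphi$ is strictly decreasing, among all $K$-Lipschitz extensions of a fixed function on $\CS_g$ the unique maximizer of $\E_{\CP_r}[\cdot]$ is the maximal Lipschitz extension $x\mapsto\inf_{y\in\CS_g}[\,f(y)+K\,d(x,y)\,]$; replacing $f_i$ on $\CS_r$ by this extension (at level $K=k(f_i)$) leaves $\int\CP_g\phi$ unchanged, cannot increase $\int\CP_r\varphi$ (values only go up, $\varphi$ decreasing) and cannot increase $k(\cdot)$, so optimality forces $f_i$ to equal it $\CP_r$-a.e.\ on $\CS_r$. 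As $f_1,f_2$ share the same $\CS_g$-values and the same $K$, these extensions coincide, hence $f_1=f_2$ on $\CS_r$ as well; if instead $\varphi$ is the strictly convex loss the argument is symmetric (exchange $\phi,\CS_g\leftrightarrow\varphi,\CS_r$ and use the minimal extension). This yields uniqueness of $\ff$ on $\CS_r\cup\CS_g$ --- which is all the subsequent analysis (Nash equilibrium, informative gradients) uses --- while off the supports $\ff$ is pinned down only up to the harmless Lipschitz-extension freedom.

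I expect the main obstacle to be the vertical-coercivity step inside the existence argument: showing a minimizing sequence cannot run off to infinity in value. The penalty $\lambda\,k(f)^2$ by itself controls only the Lipschitz constant, not the level, so this is precisely where Assumption~(\ref{eq_solvable}) and strict convexity of one of $\phi,\varphi$ are indispensable --- with two linear losses, as in WGAN, the minimizer genuinely fails to be unique. A secondary technical point is supplying the mild regularity (finite first moments of $\CP_r,\CP_g$, hence $W_1(\CP_r,\CP_g)<\infty$) that makes the Jensen and Fatou steps legitimate.
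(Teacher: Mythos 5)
Your proposal is correct, and for existence it is essentially the paper's route: the tangent-line bound at the stationary point $a$ combined with $|\E_{\CP_r}f-\E_{\CP_g}f|\le k(f)\,W_1(\CP_r,\CP_g)$ gives a lower bound quadratic in $k(f)$ (the paper's Lemma~\ref{lemma_lower_bound} and the first part of its ``Part I'' lemma), coercivity of the penalty bounds $k(f_n)$ along a minimizing sequence, level coercivity bounds the values, and one closes with compactness, Fatou, and lower semicontinuity of $k(\cdot)$ under pointwise limits. Two differences are worth noting. First, you invoke Arzel\`a--Ascoli where the paper runs an explicit diagonal extraction over the rationals; and you obtain the two-sided level coercivity directly from strict convexity of $\phi$ (which forces $\phi'(a_1)+\varphi'(a_1)>0$ and $\phi'(a_2)+\varphi'(a_2)<0$ on either side of $a$), whereas the paper's Lemma~\ref{lemma_infty} assumes such $a_1,a_2$ exist and then needs a separate bounded-support workaround (Lemma~\ref{lemma_k_exist}) for the degenerate sign pattern --- under the theorem's strict-convexity hypothesis your shortcut is legitimate and cleaner. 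Second, your uniqueness argument is genuinely different from, and stronger than, the paper's: the paper takes the midpoint of two minimizers and extracts strictness only from the $\phi$-term, which as written pins $\ff$ down only $\CP_g$-a.e.\ when only $\phi$ is strictly convex, despite the stated conclusion on $\CS_r\cup\CS_g$; your additional steps --- forcing $k(f_1)=k(f_2)$ from affinity of $k(\cdot)^2$ along the segment, then showing via strict monotonicity of $\varphi$ that each minimizer must coincide $\CP_r$-a.e.\ with the maximal $K$-Lipschitz extension of its $\CS_g$-restriction --- actually deliver uniqueness on all of $\CS_r\cup\CS_g$ and thereby repair that gap. The only housekeeping to make explicit is what you already flag: finite first moments so that $W_1<\infty$ and the Fatou minorants are integrable, and the routine upgrade from ``$\CP_g$-a.e.\ equal'' to ``equal on $\CS_g$'' via continuity of Lipschitz functions.
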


Note that although WGAN does not satisfy the condition in Theorem~\ref{thm:esistence}, its solution still exists but is not unique. Specifically, if $\ff$ is an optimal solution then $ \ff + \alpha$ for any $\alpha \in \BR$ is also an optimal solution. The following theorems can be regarded as a generalization of Proposition~\ref{lem:1} to LGANs. 

\begin{theorem} \label{theorem_main} 
Assume $\phi'(x)>0$, $\varphi'(x)<0$, and the optimal discriminator $\ff$ exists and is smooth. We have 
\vspace{-9pt}
\begin{enumerate}[label=(\alph*),leftmargin=16pt] 
\setlength\itemsep{0.0em} 
\item For all $x \in {\CS_r} \cup {\CS_g}$, if it holds that $\nabla_{\!\ff(x)} \mathring{J_D}(x) \neq 0$, then there exists $y \in {\CS_r}\cup {\CS_g}$ with $y\neq x$ such that $|\ff(y) - \ff(x)|=k(\ff) \cdot \lVert y - x \rVert $; 
\item For all $x \in {\CS_r} \cup {\CS_g} - {\CS_r} \cap {\CS_g}$, 
there exists $y \in {\CS_r}\cup {\CS_g}$ with $y\neq x$ such that $|\ff(y) - \ff(x)|=k(\ff) \cdot \lVert y - x \rVert $; 
\item If ${\CS_r}= {\CS_g}$ and $ \CP_r\neq \CP_g$, then there exists $(x, y)$ pair with both points in ${\CS_r} \cup {\CS_g}$ and $y\neq x$ such that $|\ff(y)-\ff(x)|=k(\ff) \cdot \lVert y - x \rVert $ and $\nabla_{\!\ff(x)} \mathring{J_D}(x) \neq 0$;
\item There is a unique Nash equilibrium between $\CP_g$ and $\ff$ under the objective $J_D + \lambda \cdot k(f)^2$, where it holds that $\CP_r=\CP_g$ and $k(\ff)=0$. 
\end{enumerate}
\vspace{-8pt}
\end{theorem}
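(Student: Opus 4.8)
The plan is to treat everything as a first-order analysis of the penalized discriminator objective $\Phi(f)\triangleq J_D(f)+\lambda\,k(f)^2$ around its minimizer $\ff$. Two elementary facts will be used repeatedly. Since $\BR^n$ is convex, a smooth $f$ has $k(f)=\sup_x\lVert\nabla f(x)\rVert$; hence for $x\neq y$ the equality $|\ff(y)-\ff(x)|=k(\ff)\lVert y-x\rVert$ holds iff $\nabla\ff$ is constant and equal to $\pm k(\ff)\tfrac{y-x}{\lVert y-x\rVert}$ all along the segment $[x,y]$ (the equality case of the fundamental theorem of calculus together with Cauchy--Schwarz), i.e.\ iff $\ff$ is affine with slope $k(\ff)$ on $[x,y]$; call such an $(x,y)$ a \emph{tight pair}. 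Also $\nabla_{\ff(x)}\mathring{J_D}(x)=\CP_g(x)\phi'(\ff(x))+\CP_r(x)\varphi'(\ff(x))$. A third useful remark: off $\CS_r\cup\CS_g$ the values of $\ff$ do not enter $J_D$, so $\ff$ restricted there must be a minimal-Lipschitz-constant extension of $\ff|_{\CS_r\cup\CS_g}$; consequently $k(\ff)$ equals the Lipschitz constant of $\ff$ \emph{restricted to} $\CS_r\cup\CS_g$.

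For (a), I argue by contradiction. Suppose $x_0\in\CS_r\cup\CS_g$ has $\nabla_{\ff(x_0)}\mathring{J_D}(x_0)\neq0$ but no tight partner in $\CS_r\cup\CS_g$. If $\lVert\nabla\ff(x_0)\rVert<k(\ff)$, then by continuity and compactness of the supports all constraints $|\ff(x_0)-\ff(z)|\le k(\ff)\lVert x_0-z\rVert$ are uniformly slack near $x_0$, so I can add a smooth bump supported in a small ball about $x_0$ --- amplitude chosen small, sign opposite to that of $\nabla_{\ff(x_0)}\mathring{J_D}(x_0)$ --- which strictly decreases $J_D$ (the relevant local density is positive precisely because $\nabla_{\ff(x_0)}\mathring{J_D}(x_0)\neq0$) without increasing $k(\ff)$, contradicting optimality. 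If instead $\lVert\nabla\ff(x_0)\rVert=k(\ff)$, then $x_0$ maximizes $\lVert\nabla\ff\rVert$; I then follow the maximal straight segment through $x_0$ on which $\ff$ is affine with slope $k(\ff)$ and show it must terminate at a second point of $\CS_r\cup\CS_g$ --- otherwise that segment carries no probability mass except possibly at $x_0$, and one constructs a clamp-type modification concentrated on it that again lowers $J_D$ with the Lipschitz constant unchanged. \textbf{Producing this improving perturbation in the second regime, where $x_0$ has no first-order slack, is the main obstacle of the theorem}, and the case split there must be organized with care.

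Part (b) is immediate from (a): if $x\in\CS_r\cup\CS_g-\CS_r\cap\CS_g$ then exactly one of $\CP_r(x),\CP_g(x)$ vanishes, so $\nabla_{\ff(x)}\mathring{J_D}(x)$ equals $\CP_r(x)\varphi'(\ff(x))$ or $\CP_g(x)\phi'(\ff(x))$, nonzero because $\varphi'<0$ (resp.\ $\phi'>0$); boundary points of the supports are handled by passing to the limit along interior points, using compactness and continuity of $\ff$. For (c), suppose for contradiction that $\nabla_{\ff(x)}\mathring{J_D}(x)=0$ for every $x\in\CS_r\cup\CS_g$; since it also vanishes off the supports, $\nabla_{\ff(\cdot)}\mathring{J_D}(\cdot)\equiv0$. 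Scaling $\ff$ toward a constant, $f_\theta\triangleq\theta\ff$ (so $k(f_\theta)=\theta k(\ff)$), gives
\begin{equation*}
\tfrac{d}{d\theta}\Phi(f_\theta)\big|_{\theta=1}=\int\ff(x)\,\nabla_{\ff(x)}\mathring{J_D}(x)\,dx+2\lambda\,k(\ff)^2=2\lambda\,k(\ff)^2,
\end{equation*}
which must vanish by optimality of $\ff$, so $k(\ff)=0$ and $\ff$ is some constant $c$; but then $\CP_g(x)\phi'(c)=-\CP_r(x)\varphi'(c)$ on the supports forces $\CP_g$ and $\CP_r$ proportional, hence equal (both are probability measures) --- contradicting $\CP_r\neq\CP_g$. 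Therefore some $x_0$ has $\nabla_{\ff(x_0)}\mathring{J_D}(x_0)\neq0$, and (a) supplies the tight pair.

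For (d), existence: $\ff\equiv a$ (the minimizer of $\phi+\varphi$, whose existence follows from that of $\ff$) together with $\CP_g=\CP_r$ is an equilibrium, since against $\CP_r$ one has $\Phi(f)\ge\phi(a)+\varphi(a)$ with equality only at $f\equiv a$, and against $\ff\equiv a$ the generator's loss $\E_{\CP_g}[\psi(\ff(\cdot))]=\psi(a)$ is independent of $\CP_g$. For uniqueness, let $(\CP_g,\ff)$ be an equilibrium and suppose $\CP_g\neq\CP_r$. First $k(\ff)>0$: if $\ff$ were constant, a first-order perturbation as in (c) would again force $\CP_g=\CP_r$. Next, since the generator plays a best response and --- as is implicit in the paper's ``gradient points toward a real sample'' statement --- $\psi$ is strictly decreasing, $\CS_g\subseteq\argmax_{x}\ff(x)$, so $\ff$ attains its maximum on $\CS_g$ and $\nabla\ff\equiv0$ there. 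If $\CS_g\setminus\CS_r\neq\emptyset$, a point $x_1$ there has $\lVert\nabla\ff(x_1)\rVert=0<k(\ff)$ while $\nabla_{\ff(x_1)}\mathring{J_D}(x_1)=\CP_g(x_1)\phi'(\ff(x_1))>0$, so the discriminator improves by a local bump --- contradiction. If $\CS_g=\CS_r$, (c) yields a tight pair based at some $x_0\in\CS_g$ with $\nabla_{\ff(x_0)}\mathring{J_D}(x_0)\neq0$; but tightness forces $\lVert\nabla\ff(x_0)\rVert=k(\ff)>0$, contradicting $\nabla\ff(x_0)=0$. The remaining configuration $\CS_g\subsetneq\CS_r$ is ruled out the same way, using (b) and the fact that a maximal-rate segment cannot be anchored inside $\CS_g$, where $\nabla\ff=0$. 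Hence $\CP_g=\CP_r$, after which the discriminator's unique best response is $\ff\equiv a$, giving $k(\ff)=0$ and uniqueness of the equilibrium. \textbf{The delicate points here are the exhaustive case analysis in the uniqueness argument and making precise the standing assumption on $\psi$.}
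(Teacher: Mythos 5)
Your overall strategy --- first-order perturbations of $J_D+\lambda\,k(f)^2$ at the minimizer, with a case split on whether the point in question has slack in the Lipschitz constraint --- is the same as the paper's. Parts (b) and (c) coincide with the paper's argument almost verbatim: your scaling variation $f_\theta=\theta\ff$ is exactly the content of the paper's Lemma~\ref{lem_k_neq_zero}, and ``constant $\ff$ forces $\CP_r=\CP_g$'' is its Lemma~\ref{lemma_pair_f}. Your first regime in (a) (pointwise slack, bump down, re-extend off the supports) is the paper's case (iii) in a slightly cleaner packaging. Part (d) is routed differently: the paper works only with the local stationarity conditions $\pdv{J}{k(f)}=0$ and $\pdv{\mathring{J}_D(x)}{f(x)}\pdv{f(x)}{x}=0$ and closes the loop via (a) and Theorem~\ref{theorem_gradient}, whereas you invoke the generator's \emph{global} best response ($\CS_g\subseteq\argmax_x\ff(x)$, which needs the unstated hypotheses on $\psi$ and on the richness of $\mathcal{G}$ that you yourself flag) and then run a case analysis on the supports. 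Both routes work granted (a)--(c); the paper's is shorter and avoids the extra assumption on $\psi$.

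The genuine gap is in part (a), in exactly the regime you flag as the main obstacle. When $\lVert\nabla\ff(x_0)\rVert=k(\ff)$ you propose to ``follow the maximal straight segment through $x_0$ on which $\ff$ is affine with slope $k(\ff)$.'' But no such nondegenerate segment need exist: a smooth $k$-Lipschitz function can attain $\lVert\nabla f\rVert=k$ at an isolated point without being affine on any segment (e.g.\ $f(x)=\sin x$ at $x_0=0$, where moreover $|f(y)-f(0)|<|y|$ for every $y\neq0$, so no tight partner exists at all). In that situation your segment degenerates to $\{x_0\}$ and both branches of your dichotomy (``terminates at a second support point'' / ``clamp along the segment'') are vacuous; and even when the segment is nondegenerate but misses the supports, the clamp must not increase $\lVert\nabla f\rVert$ anywhere, which is delicate precisely because there is zero gradient slack on the segment. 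What is actually needed is an argument that \emph{optimality} excludes the $\sin$-like configuration, i.e.\ that the supremum $\sup_{y}|\ff(y)-\ff(x_0)|/\lVert y-x_0\rVert$ over the supports is either attained or strictly below $k(\ff)$ (in which latter case the case-(iii)-style perturbation applies). The paper attempts this in its cases (i)--(ii) by analyzing whether the ratios $|\ff(z)-\ff(w)|/\lVert z-w\rVert$ near $x_0$ approach $k(\ff)$, and its own write-up is also weakest exactly there (it passes from $|\ff{}'(x)|=k$ to the existence of an attaining $y$ without justification). So you have correctly located the difficulty, but the resolution you sketch would fail as stated, and the hard case remains open in your proposal.
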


The proof is given in Appendix~\ref{app_proof}. 
This theorem states the basic properties of LGANs, including the existence of unique Nash equilibrium where $\CP_r = \CP_g$ and the existence of \emph{bounding relationships} in the optimal discriminative function (i.e., $\exists y\neq x$ such that $|\ff(y) - \ff(x)|=k(\ff) \cdot \lVert y - x \rVert $). The former ensures that the objective is a well-defined distance metric, and the latter, as we will show next, eliminates the gradient uninformativeness problem. 

It is worth noticing that the penalty $k(f)$ is in fact necessary for Property-(c) and Property-(d). 
The reason is due to the existence of the case that $\nabla_{\!\ff(x)}\mathring{J_D}(x)=0$ for $\CP_r(x)\neq \CP_g(x)$. Minimizing $k(f)$ guarantees that the only Nash equilibrium is achieved when $\CP_r=\CP_g$. 
In WGAN, minimizing $k(f)$ is not necessary. However, if $k(f)$ is not minimized towards zero, $\nabla_{\!x}\ff(x)$ is not guaranteed to be zero at the convergence state $\CP_r=\CP_g$ where any function subject to $1$-Lipschitz constraint is an optimal $\ff$ in WGAN. It implies that minimizing $k(f)$ also benefits WGAN. 

\subsection{Refining the Bounding Relationship} \label{sec_connections}

From Theorem \ref{theorem_main}, we know that for any point $x$, as long as $\mathring{J_D}(x)$ does not hold a zero gradient with respect to $\ff(x)$, $\ff(x)$ must be bounded by another point $y$ such that $|\ff(y)-\ff(x)|=k(\ff) \cdot \lVert y - x \rVert $. We further clarify that when there is a bounding relationship, it must involve both real sample(s) and fake sample(s). More formally, we have

\begin{theorem} \label{theorem_chain}
Under the conditions in Theorem~\ref{theorem_main}, we have 
\vspace{-7pt}
\begin{enumerate}[label=\arabic*),leftmargin=16pt] 
\setlength\itemsep{0.00em} 
\item For any $x \in {\CS_g}$, if $\,\nabla_{\!\ff(x)} \mathring{J_D}(x) > 0$, then there must exist some $y \in {\CS_r}$ with $y\neq x$ such that $\ff(y)-\ff(x)=k(\ff) \cdot \lVert y - x \rVert $ and $\,\nabla_{\!\ff(y)} \mathring{J_D}(y) < 0$; 
\item For any $y \in {\CS_r}$, if $\,\nabla_{\!\ff(y)} \mathring{J_D}(y) < 0$, then there must exist some $x \in {\CS_g}$ with $y\neq x$ such that $\ff(y)-\ff(x)=k(\ff) \cdot \lVert y - x \rVert $ and $\,\nabla_{\!\ff(x)} \mathring{J_D}(x) > 0$.
\end{enumerate}
\vspace{-9pt} 
\end{theorem}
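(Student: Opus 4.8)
The plan is to build on Theorem~\ref{theorem_main} and its underlying variational argument rather than to start from scratch. First I would record the elementary observation that the two items are \emph{dual}: the substitution $f\mapsto -f$ together with the swap $(\phi,\CP_g)\leftrightarrow(\varphi,\CP_r)$ sends the LGAN objective $J_D+\lambda k(f)^2$ to another LGAN objective of the same form (the hypotheses $\phi'>0,\varphi'<0$ and the smoothness of $\ff$ are preserved), carries the optimal $\ff$ to $-\ff$, and exchanges item~1) with item~2). So it suffices to prove item~1). I would also note that the membership claims $y\in\CS_r$ and $x\in\CS_g$ are \emph{automatic}: since $\nabla_{\!\ff(x)}\mathring{J_D}(x)=\CP_g(x)\phi'(\ff(x))+\CP_r(x)\varphi'(\ff(x))$ with $\phi'>0,\varphi'<0$, a strictly positive value forces $\CP_g(x)>0$ and a strictly negative value forces $\CP_r(x)>0$. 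Hence the real content of item~1) is: given $x\in\CS_g$ with $\nabla_{\!\ff(x)}\mathring{J_D}(x)>0$, there exists $y\neq x$ in $\CS_r\cup\CS_g$ with $\ff(y)-\ff(x)=k(\ff)\,\lVert y-x\rVert$ \emph{and} $\nabla_{\!\ff(y)}\mathring{J_D}(y)<0$.

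Next I would re-run the perturbation argument behind Theorem~\ref{theorem_main}(a), keeping track of signs. Because $\nabla_{\!\ff(x)}\mathring{J_D}(x)>0$, pushing $\ff$ \emph{downward} near $x$ strictly decreases $\int\mathring{J_D}$; by optimality of $\ff$ this must be offset by a strict increase of $k(f)$, which can only happen if some constraint of the form $\ff(y)-\ff(x)=k(\ff)\,\lVert y-x\rVert$ with $\ff(y)>\ff(x)$ is active (a downward move at $x$ raises slopes only on pairs in which $x$ is the \emph{lower} endpoint). This already pins down a binding partner $y$ with the correct sign of $\ff(y)-\ff(x)$.

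To upgrade $y$ so that $\nabla_{\!\ff(y)}\mathring{J_D}(y)<0$, I would exploit the geometry of the active set. Since $\ff$ is $k(\ff)$-Lipschitz and smooth, any segment on which $\ff$ has slope exactly $k(\ff)$ is a straight ``transport ray'' along which $\nabla\ff$ equals the constant vector $k(\ff)\,(y-x)/\lVert y-x\rVert$; two such rays cannot meet at a corner (that would violate the Lipschitz bound), so through $x$ there is a \emph{unique} maximal upward ray, ending at some $y^\ast$ with $\ff(y^\ast)-\ff(x)=k(\ff)\,\lVert y^\ast-x\rVert$, and every direct binding partner of $x$ lies on the segment $[x,y^\ast]$. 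I would then argue $\nabla_{\!\ff(y^\ast)}\mathring{J_D}(y^\ast)<0$: if it were $\ge 0$, a downward perturbation of $\ff$ localized near the top of the ray --- where all active slopes are ``incoming'', so $k(f)$ is not raised --- would either strictly decrease $\int\mathring{J_D}$ (if $>0$) or, in the degenerate case of equality to $0$, strictly decrease $k(f)$ itself (the ray can no longer be supported there), contradicting optimality in either case; the same perturbation also shows $y^\ast\in\CS_r\cup\CS_g$, and with the sign computation of the first paragraph this forces $y^\ast\in\CS_r$. Taking $y=y^\ast$ completes item~1), and the duality gives item~2).

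The main obstacle I anticipate is making these ``surgery'' perturbations fully rigorous: precisely defining the downstream region of transport rays emanating from $x$, checking that a uniform downward shift there never raises $k(\ff)$ even where rays accumulate or converge, and disposing of the borderline case $\nabla_{\!\ff(y^\ast)}\mathring{J_D}(y^\ast)=0$ where a first-order comparison is inconclusive. A secondary technical point is treating the atomic and absolutely continuous parts of $\CP_r,\CP_g$ uniformly, since a single transport ray has measure zero and the perturbations that must ``see'' the absolutely continuous mass have to be spread over thin tubes around the rays.
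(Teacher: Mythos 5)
Your overall strategy coincides with the paper's: the symmetry between the two items, the observation that $\nabla_{\!\ff(x)}\mathring{J_D}(x)=\CP_g(x)\phi'(\ff(x))+\CP_r(x)\varphi'(\ff(x))$ makes the membership claims automatic (the paper isolates exactly this as a lemma), and the perturbation argument showing that the binding partner must be an \emph{upper} one because a downward move at $x$ only tightens constraints in which $x$ is the lower endpoint. The genuine gap is in your third paragraph, where you insist that the partner with $\nabla_{\!\ff(y)}\mathring{J_D}(y)<0$ can be taken to be the \emph{top endpoint} $y^\ast$ of the ray and try to force a contradiction from a perturbation localized near $y^\ast$. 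In the borderline case $\nabla_{\!\ff(y^\ast)}\mathring{J_D}(y^\ast)=0$, your proposed rescue --- that lowering $\ff$ near $y^\ast$ ``strictly decreases $k(f)$'' --- is false: $k(f)$ is a global supremum, and truncating the top of one ray neither lowers that supremum (slope $k(\ff)$ is still attained along the remainder of the ray, and possibly elsewhere) nor deprives $x$ of a bounding partner (every interior point of $[x,y^\ast]$ still binds $x$). So that localized perturbation changes neither $J_D$ to first order nor $k(f)$, and no contradiction results.

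The repair is the paper's version of the same surgery: suppose for contradiction that \emph{every} active partner of $x$ has $\nabla_{\!\ff(y)}\mathring{J_D}(y)\ge 0$, set $S(x)=\{y:\ff(y)-\ff(x)=k(\ff)\cdot\lVert y-x\rVert\}$, note that $S(x)$ is closed under further upward binding by the collinearity (chain) lemma --- so lowering $\ff$ uniformly on a neighborhood of $\{x\}\cup S(x)$ cannot create a slope exceeding $k(\ff)$ --- and draw the strict first-order decrease of $J_D$ from $x$ itself, where the derivative is strictly positive, rather than from $y^\ast$. The points of $S(x)$ then only need to \emph{not hurt}, which is exactly what the sign assumption $\ge 0$ supplies, and the degenerate case disappears; the conclusion is that \emph{some} $y\in S(x)$ (not necessarily the endpoint) has strictly negative derivative, which is all the theorem asserts. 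A secondary point: your ``maximal upward ray'' need not possess a top endpoint at all if the ray is unbounded, so working with the set $S(x)$ rather than with its putative endpoint also sidesteps that issue.
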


The intuition behind the above theorem is that samples from the same distribution (e.g., the fake samples) will not bound each other to violate the optimality of $\mathring{J_D}(x)$. So, when there is strict bounding relationship (i.e., it involves points that hold $\nabla_{\!\ff(x)} \mathring{J_D}(x) \neq 0$), it must involve both real and fake samples. It is worth noticing that if only it is not the overlapping case, all fake samples hold $\nabla_{\!\ff(x)} \mathring{J_D}(x) > 0$, while all real samples hold $\nabla_{\!\ff(y)} \mathring{J_D}(y) < 0$. 



Note that there might exist a dozen real and fake samples that bound each other. Under the Lipschitz continuity condition, the bounding relationship on the value surface of $\ff$ is the basic building block that connects $\CP_r$ and $\CP_g$, and each fake sample with $\nabla_{\!\ff(x)} \mathring{J_D}(x) \neq 0$ lies in at least one of these bounded relationships. Next we will further interpret the implication of bounding relationship and show that it guarantees meaningful $\nabla_{\!x} \ff(x)$ for all involved points.

\subsection{The Implication of Bounding Relationship} \label{sec_convergence}

Recall that the Proposition~\ref{lem:1} states that $\nabla_{\!x_t} \ff(x_t)=\frac{y-x}{\lVert y - x \rVert}$. We next show that it is actually a direct consequence of bounding relationship between $x$ and $y$. We formally state it as follows:

\begin{theorem} \label{theorem_gradient}
Assume function $f$ is differentiable and its Lipschitz constant is $k$, then for all $x$ and $y$ which satisfy $y\neq x$ and $f(y)-f(x)=k \cdot \lVert y - x \rVert$, we have $\nabla_{\!x_t} f(x_t)=k \cdot \frac{y-x}{\rVert y-x \rVert}\,$ for all $x_t=tx+(1-t)y$ with $0 \leq t \leq 1$. 
\vspace{-2pt}
\end{theorem}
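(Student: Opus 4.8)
The plan is to restrict $f$ to the segment joining $x$ and $y$, observe that this restriction is forced to be affine with slope $k$, and then combine the chain rule with the elementary fact that a differentiable $k$-Lipschitz function has gradient of norm at most $k$.

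First I would set $L = \lVert y - x \rVert > 0$, $u = (y-x)/L$, and parametrize the segment by $x_s = x + s u$ for $s \in [0,L]$, so that $x_0 = x$ and $x_L = y$; this is exactly the family $\{x_t = tx+(1-t)y : 0 \le t \le 1\}$ under $t = 1 - s/L$. Let $h(s) = f(x_s)$. Since $f$ is $k$-Lipschitz, $|h(s_2) - h(s_1)| \le k\lVert x_{s_2} - x_{s_1}\rVert = k(s_2 - s_1)$ for $0 \le s_1 \le s_2 \le L$, so $h$ is $k$-Lipschitz on $[0,L]$; in particular $h(s) - h(0) \le ks$ and $h(L) - h(s) \le k(L-s)$. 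By hypothesis $h(L) - h(0) = f(y) - f(x) = kL$, and combining this with $h(L) - h(s) \le k(L-s)$ yields $h(s) - h(0) \ge ks$. Hence $h(s) = h(0) + ks$ on $[0,L]$, so $h$ has derivative $k$ throughout (one-sided at the endpoints).

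Because $f$ is differentiable, the chain rule gives $h'(s) = \nabla f(x_s)\cdot u$, whence $\nabla f(x_s)\cdot u = k$ for every $s \in [0,L]$ (at $s=0$ and $s=L$ this is the one-sided directional derivative, which exists and equals $\nabla f(x)\cdot u$, resp. $\nabla f(y)\cdot u$, by differentiability at those points). Next recall that a differentiable $k$-Lipschitz function satisfies $\lVert \nabla f(p)\rVert \le k$ at every $p$: the directional derivative at $p$ along $\nabla f(p)/\lVert\nabla f(p)\rVert$ equals $\lVert\nabla f(p)\rVert$ and is bounded in absolute value by $k$. Then by Cauchy--Schwarz, $k = \nabla f(x_s)\cdot u \le \lVert\nabla f(x_s)\rVert\,\lVert u\rVert = \lVert\nabla f(x_s)\rVert \le k$, so equality holds throughout; the equality case of Cauchy--Schwarz forces $\nabla f(x_s)$ to be a nonnegative scalar multiple of $u$, and $\lVert\nabla f(x_s)\rVert = k$ then pins it down to $\nabla f(x_s) = k u = k\frac{y-x}{\lVert y-x\rVert}$, which is the claim for all $t \in [0,1]$.

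\textbf{Main obstacle.} There is no deep difficulty here; the crux is the squeeze argument showing $h$ is affine together with the gradient-norm bound $\lVert\nabla f\rVert\le k$. The only point requiring a little care is that differentiability of $f$ must be invoked precisely --- at \emph{every} $x_t$ including the two endpoints --- so that both the chain rule and the Cauchy--Schwarz equality analysis are legitimate there; the argument goes through verbatim if $f$ is merely differentiable on an open neighborhood of the segment (in particular if $f$ is smooth, as assumed in the surrounding sections).
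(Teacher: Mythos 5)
Your proposal is correct and follows essentially the same route as the paper's proof: show $f$ restricted to the segment is affine with slope $k$ (the paper chains inequalities through $f(x_t)$, you squeeze with the two one-sided Lipschitz bounds — same idea), deduce the directional derivative along $u=\frac{y-x}{\lVert y-x\rVert}$ equals $k$, and conclude via the bound $\lVert\nabla f\rVert\le k$ together with the equality case of Cauchy--Schwarz. The only difference is cosmetic: you supply the short justification of $\lVert\nabla f\rVert\le k$ that the paper instead cites from the literature.
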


In other words, if two points $x$ and $y$ bound each other in terms of $f(y)-f(x)=k \cdot \lVert y - x \rVert$, there is a straight line between $x$ and $y$ on the value surface of $f$. Any point in this line holds the maximum gradient slope $k$, and the gradient direction at any point in this line is pointing towards the $x \rightarrow y$ direction. The proof is provided in Appendix~\ref{lip_direction}. 

Combining Theorems~\ref{theorem_main} and \ref{theorem_chain}, we can conclude that when $\CS_r$ and $\CS_g$ are disjoint, the gradient $\nabla_{\!x} \ff(x)$ for each generated sample $x \in \CS_g$  points towards some real sample $y \in \CS_r$, which guarantees that $\nabla_{\!x} \ff(x)$-based updating would pull $\CP_g$ towards $\CP_r$ at every step. 

In fact, Theorem~\ref{theorem_main} provides further guarantee on the convergence. Property-(b) implies that for any generated sample $x \in \CS_g$ that does not lie in $\CS_r$, its gradient $\nabla_{\!x} \ff(x)$ must point towards some real sample $y \in \CS_r$. And in the fully overlapped case, according to Property-(c), unless $\CP_r=\CP_g$, there must exist at least one pair of $(x, y)$ in strict bounding relationship and $\nabla_{\!x} \ff(x)$ pulls $x$ towards $y$. Finally, Property-(d) guarantees that the only Nash equilibrium is $\CP_r=\CP_g$ where $\nabla_{\!x} \ff(x)=0$ for all generated samples. 


\section{Empirical Analysis} \label{sec_exp}

In this section, we empirically study the gradient uninformativeness problem and the performance of various objectives of Lipschitz GANs. The anonymous code is provided in the supplemental material. 

\begin{figure*}
\begin{subfigure}{0.33\linewidth}
    \centering
    \includegraphics[width=0.99\columnwidth]{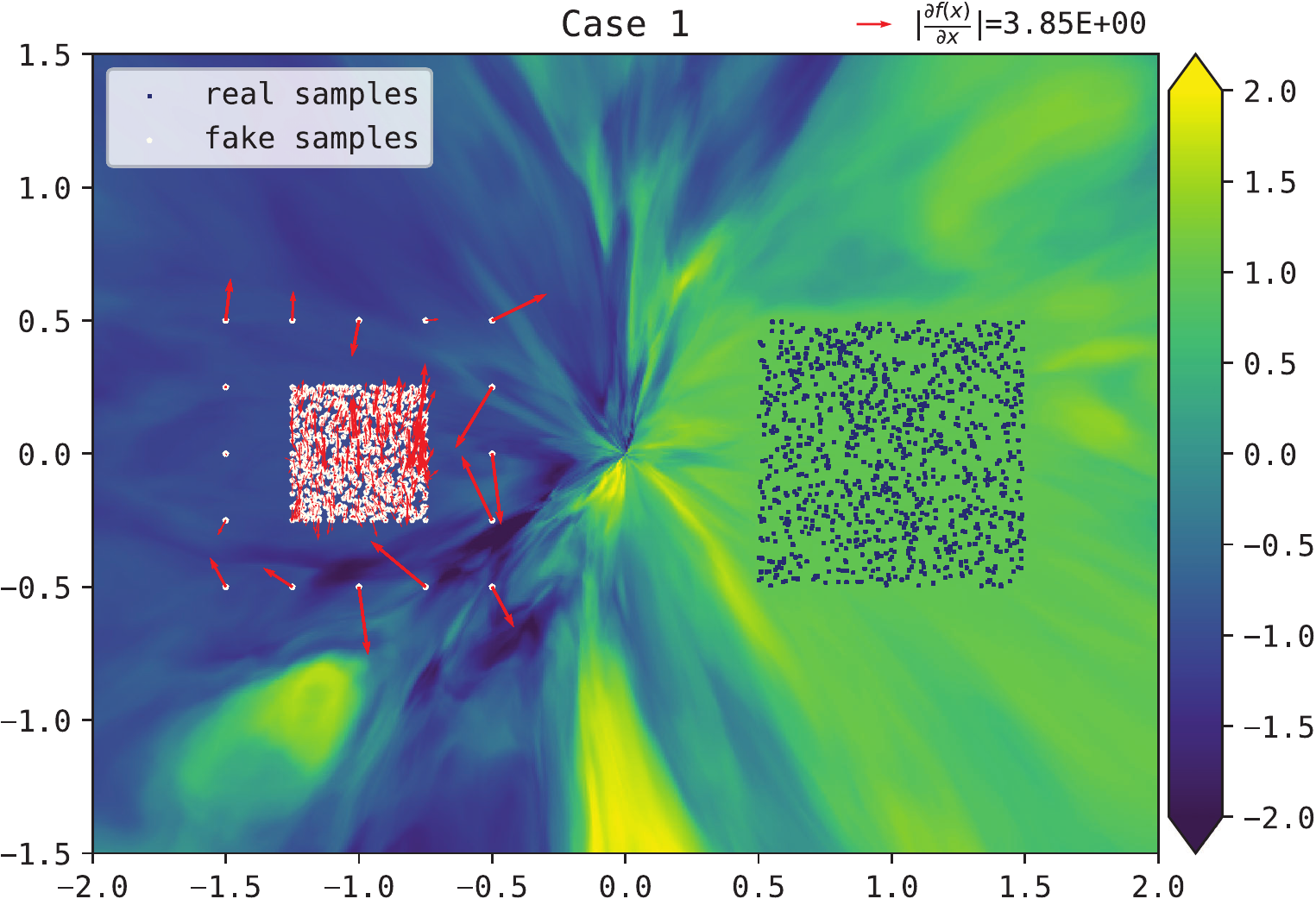}
    \caption{Disjoint Case}
    \label{fig_case1}
\end{subfigure}
\begin{subfigure}{0.33\linewidth}
    \centering
    \includegraphics[width=0.99\columnwidth]{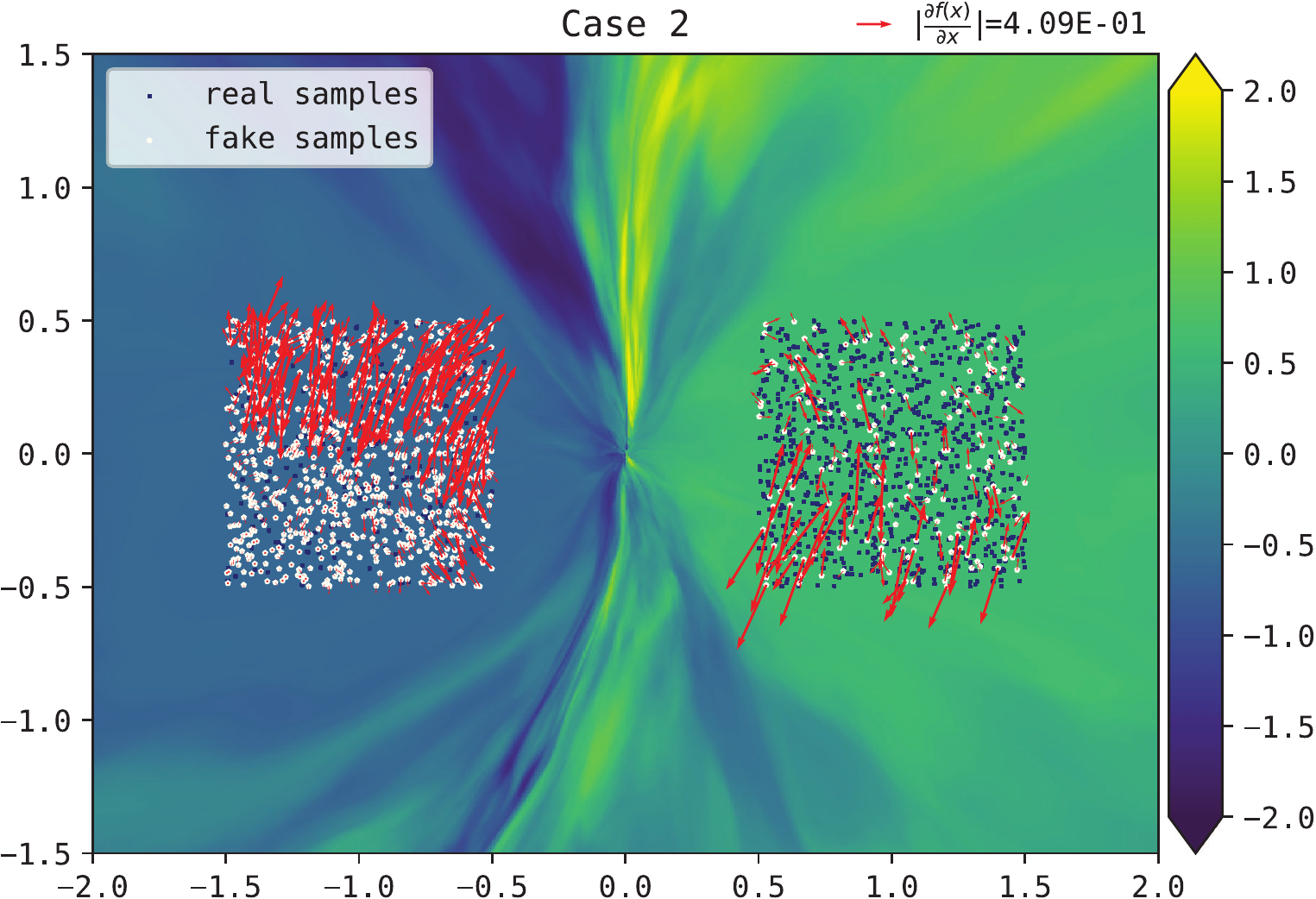}
    \caption{Overlapping Case}
    \label{fig_case2}
\end{subfigure}	
\begin{subfigure}{0.33\linewidth}
    \centering
    \includegraphics[width=0.99\columnwidth]{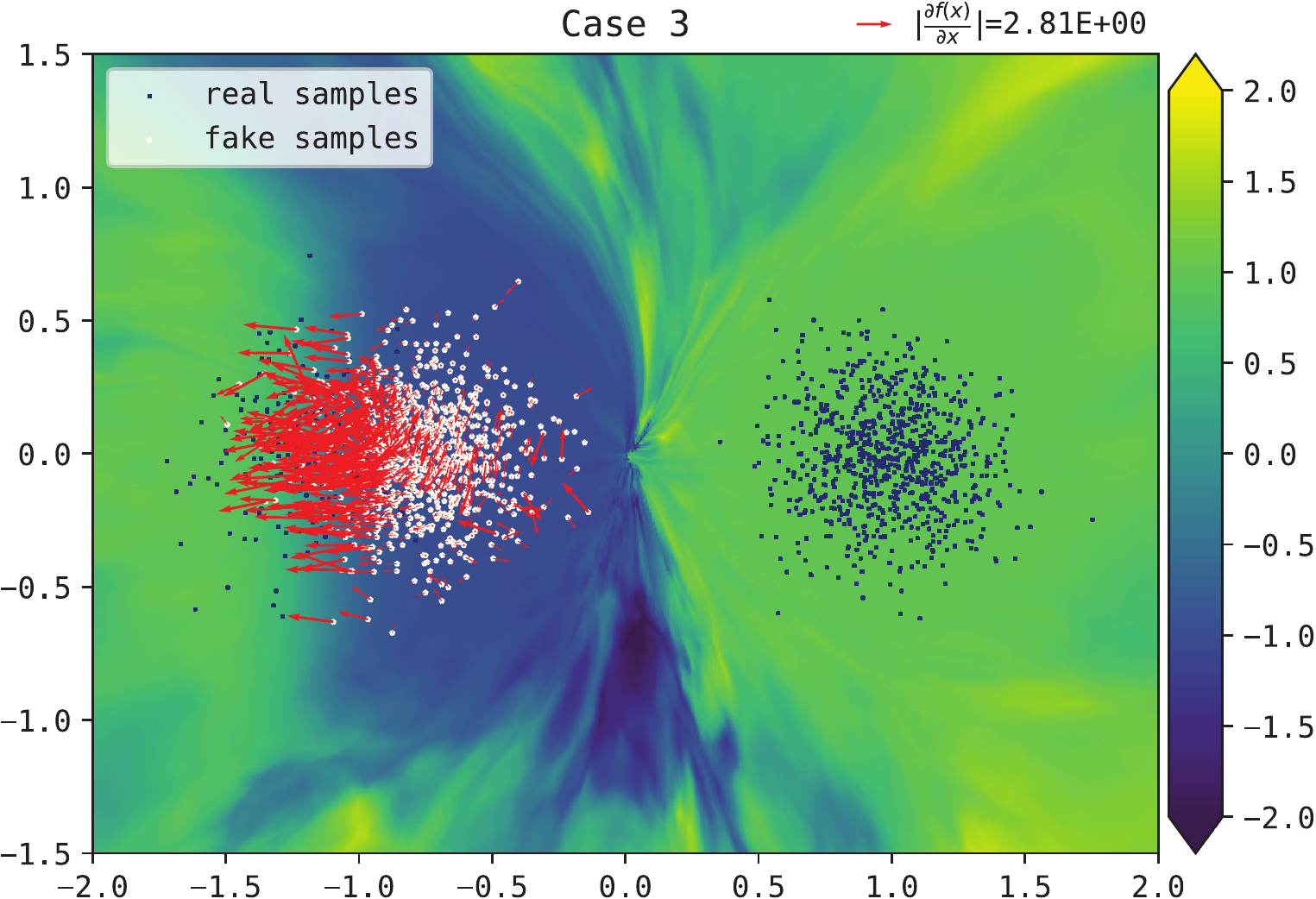}
    \caption{Mode Collapse}
    \label{fig_case3}
\end{subfigure}	
\vspace{-8pt}
\caption{Practical behaviors of gradient uninformativeness: noisy gradient. Local greedy gradient leads to mode collapse.}
\label{fig_case123}
\vspace{-5pt}
\end{figure*}

\subsection{Gradient Uninformativeness in Practice}

According to our analysis, $\nabla_{\!x} \ff(x)$ for most traditional GANs is uninformative. Here we investigate the practical behaviors of the gradient uninformativeness. Note that the behaviors of GANs without restriction on $\mathcal{F}$ are essentially identical. We choose the Least-Squares GAN whose $\ff$ is relatively simple as the representative and study it with a set of synthetic experiments which benefits the visualization. 

The results are shown in Figure~\ref{fig_case123}. We find that the gradient is very random, which we believe is the typical practical behavior of the gradient uninformativeness. Given the nondeterministic property of $\ff(x)$ for points out of $\CS_r\cup\CS_g$, $\nabla_{\!x} \ff(x)$ is highly sensitive to the hyper-parameters. We actually conduct the same experiments with a set of different hyper-parameters. The rest is provided in Appendix~\ref{hyper_para}. 

In Section~\ref{sec_gradient_uninformative}, we discussed the gradient uninformativeness under the circumstances that the fake sample is not surrounded by real samples. Actually, the problem of $\nabla_{\!x} \ff(x)$ in traditional GANs is more general, which can also be regarded as the gradient uninformativeness. For example, in the case of Figure~\ref{fig_case2} where the real and fake samples are both evenly distributed in the two regions with different densities, $\ff(x)$ is constant in each region and undefined outside. It theoretically has zero $\nabla_{\!x} \ff(x)$ for inner points and undefined $\nabla_{\!x} \ff(x)$ for boundary points. They in practice also behave as noisy gradient. We note that in the totally overlapping and continuous case, $\nabla_{\!x} \ff(x)$ is also ill-behaving, which seems to be an intrinsic cause of mode collapse, as illustrated in Figure~\ref{fig_case3} where $\CP_r$ and $\CP_g$ are both devised to be Gaussian(s). 

\subsection{Verifying $\nabla_{\!x} \ff(x)$ of LGANs}

\begin{figure*}[thbp]
\begin{minipage}{.5\textwidth}
    \begin{subfigure}{0.49\linewidth}
        \includegraphics[width=0.99\columnwidth]{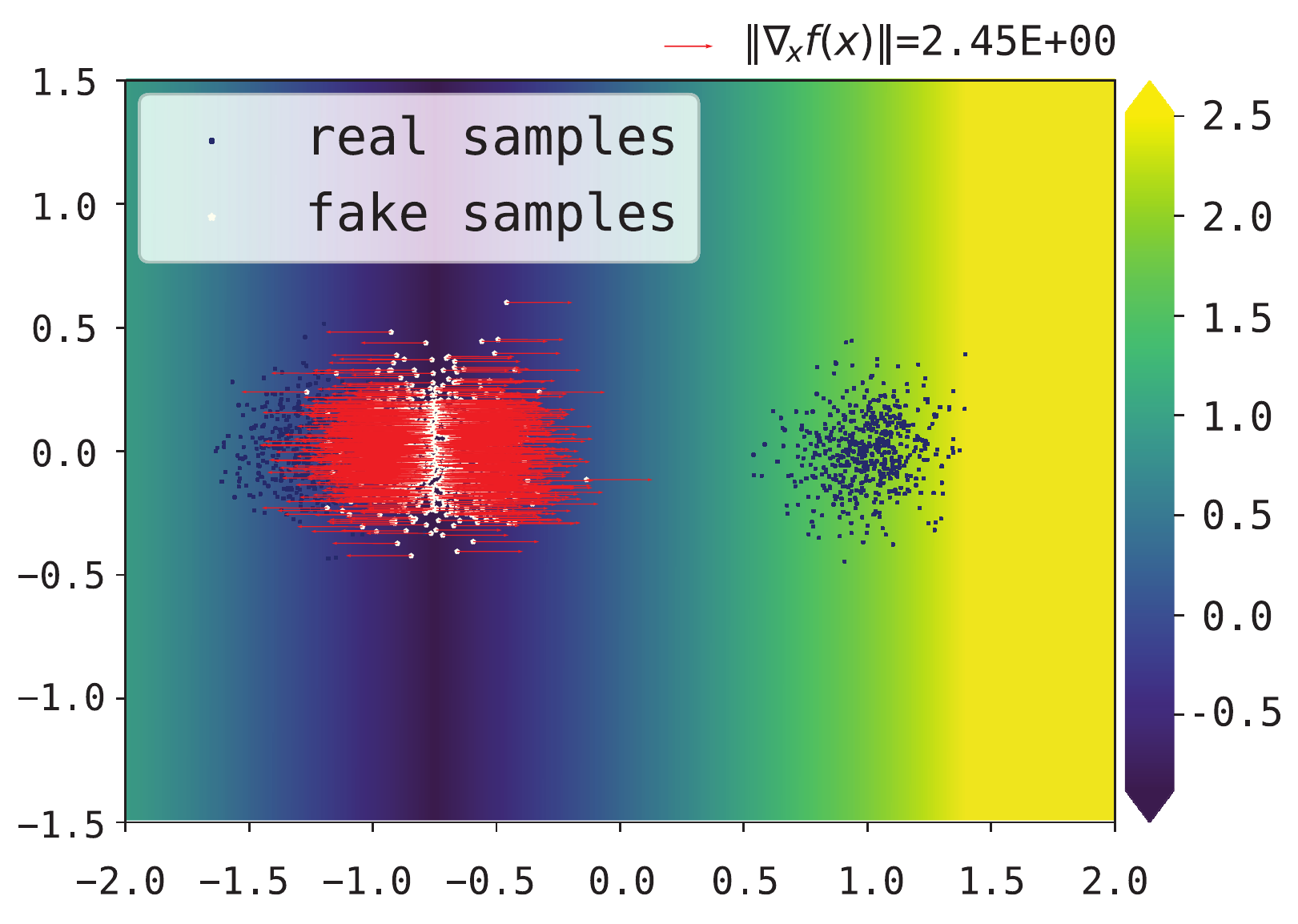}
        \vspace{-17pt}
        \caption{$x$}
        \label{a}
    \end{subfigure}	
    \begin{subfigure}{0.49\linewidth}
        \includegraphics[width=0.99\columnwidth]{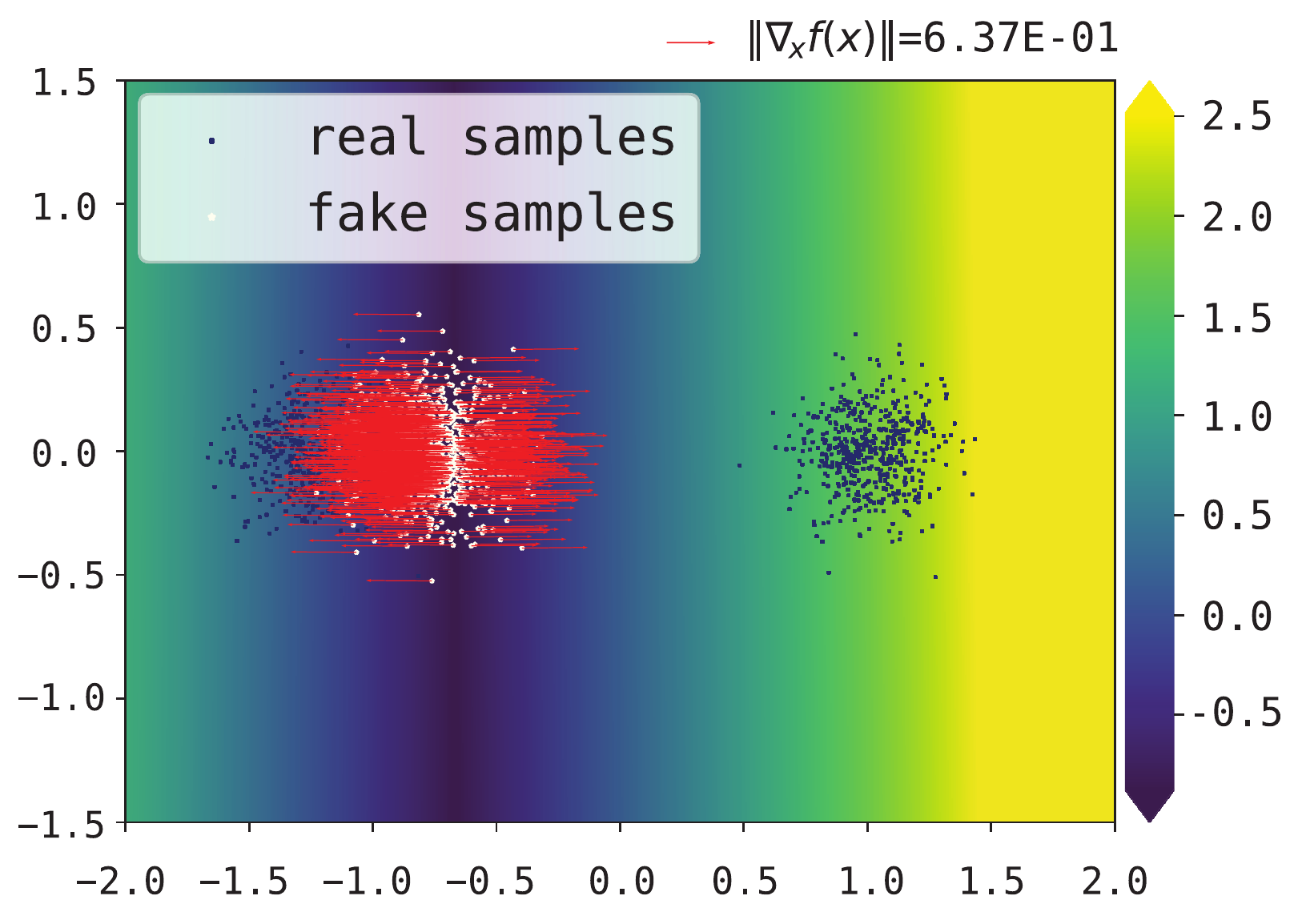}
        \vspace{-17pt}
        \caption{$-\log(\sigma(-x))$}
        \label{b}
    \end{subfigure}	
    \begin{subfigure}{0.49\linewidth}
        \vspace{3pt}
        \includegraphics[width=0.99\columnwidth]{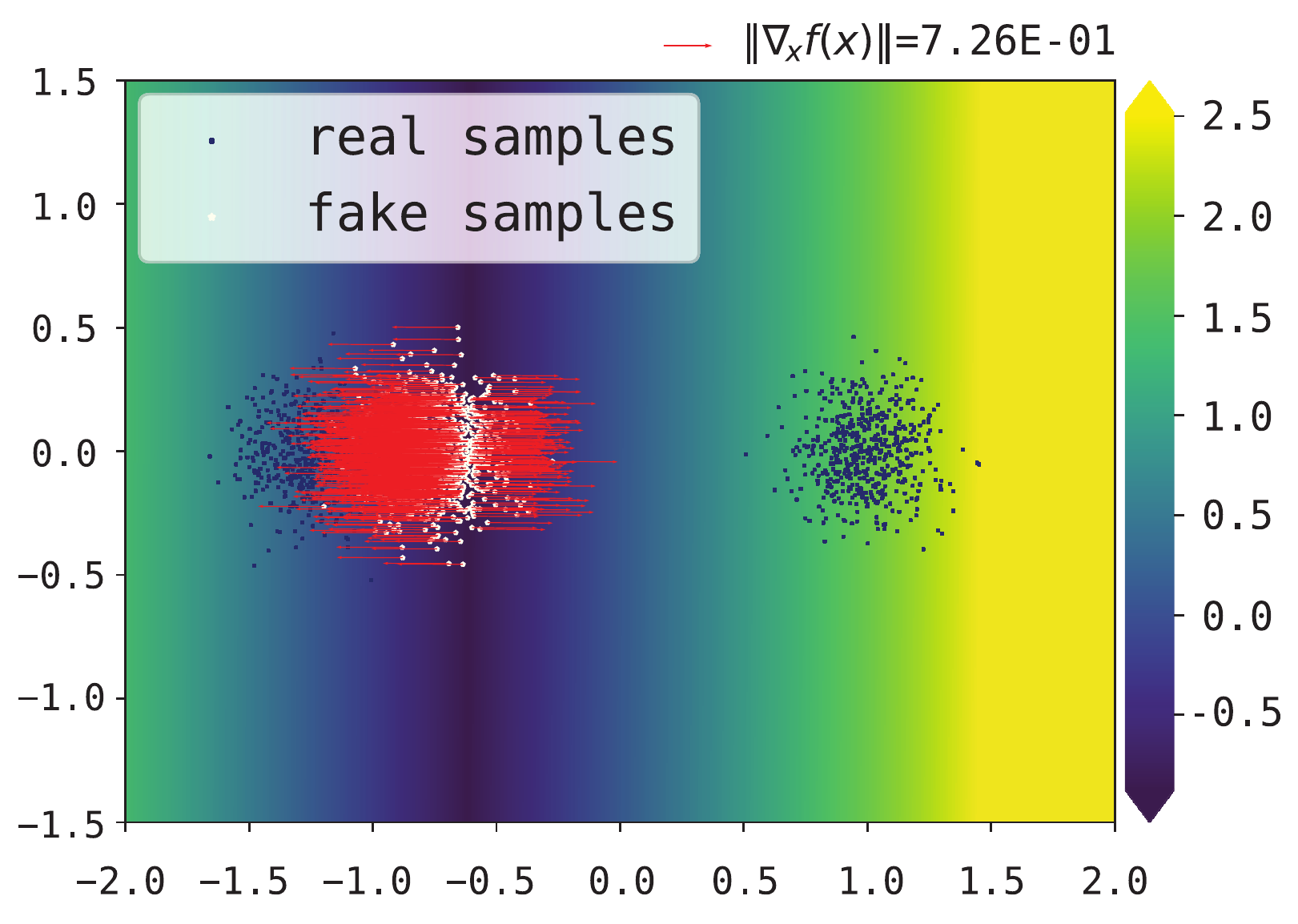}
        \vspace{-17pt}
        \caption{$x+\sqrt{x^2+1}$}
        \label{c}
    \end{subfigure}	
    \begin{subfigure}{0.49\linewidth}
        \vspace{3pt}
        \includegraphics[width=0.99\columnwidth]{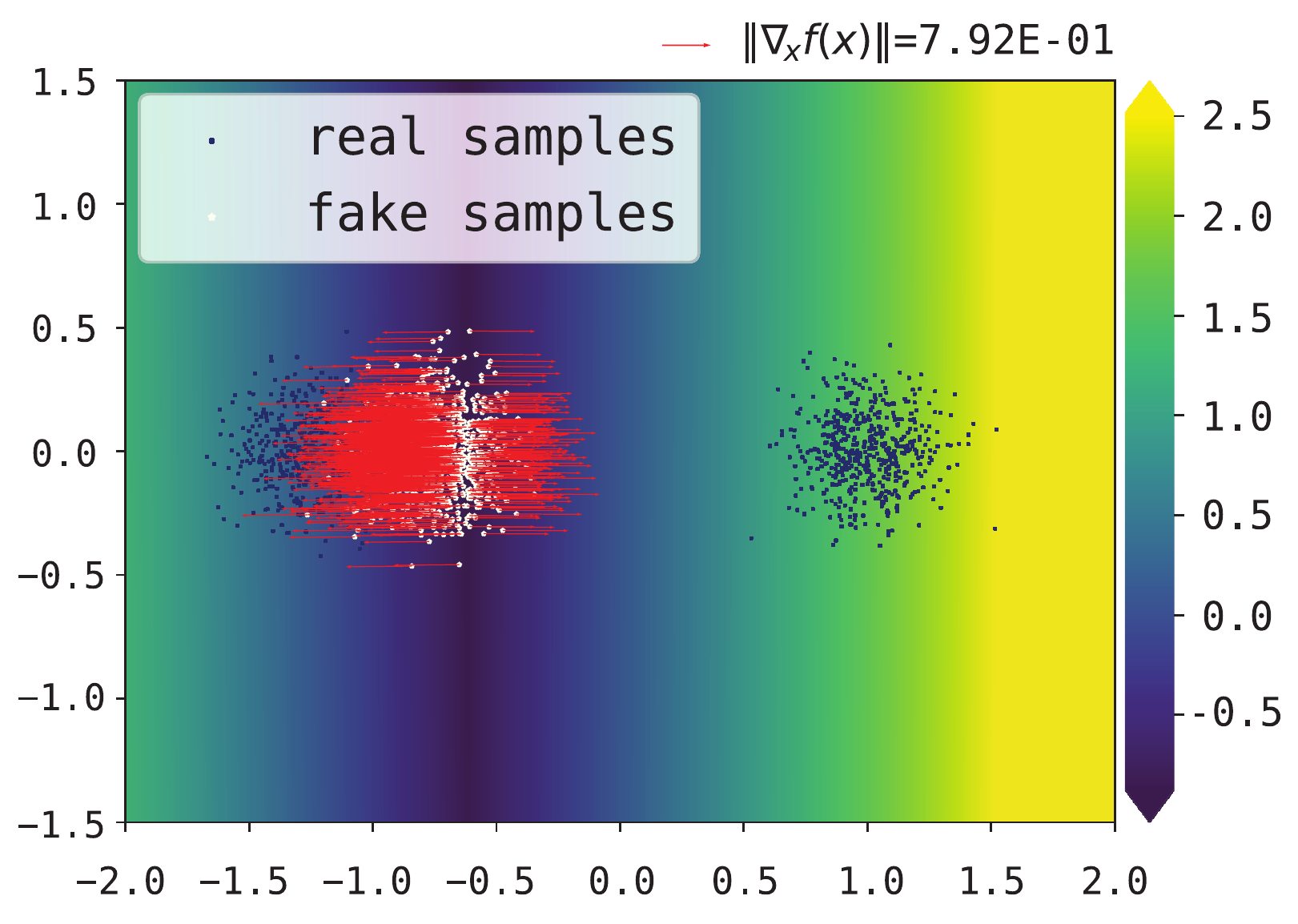}
        \vspace{-17pt}
        \caption{$\exp(x)$}
        \label{d}
    \end{subfigure}	
    \vspace{-5pt} 
    \caption{$\nabla_{\!x} \ff(x)$ in LGANs point towards real samples.}
\label{verifing_f_syn}
\end{minipage}
\begin{minipage}{.5\textwidth}
    \centering
    \includegraphics[width=0.95\linewidth]{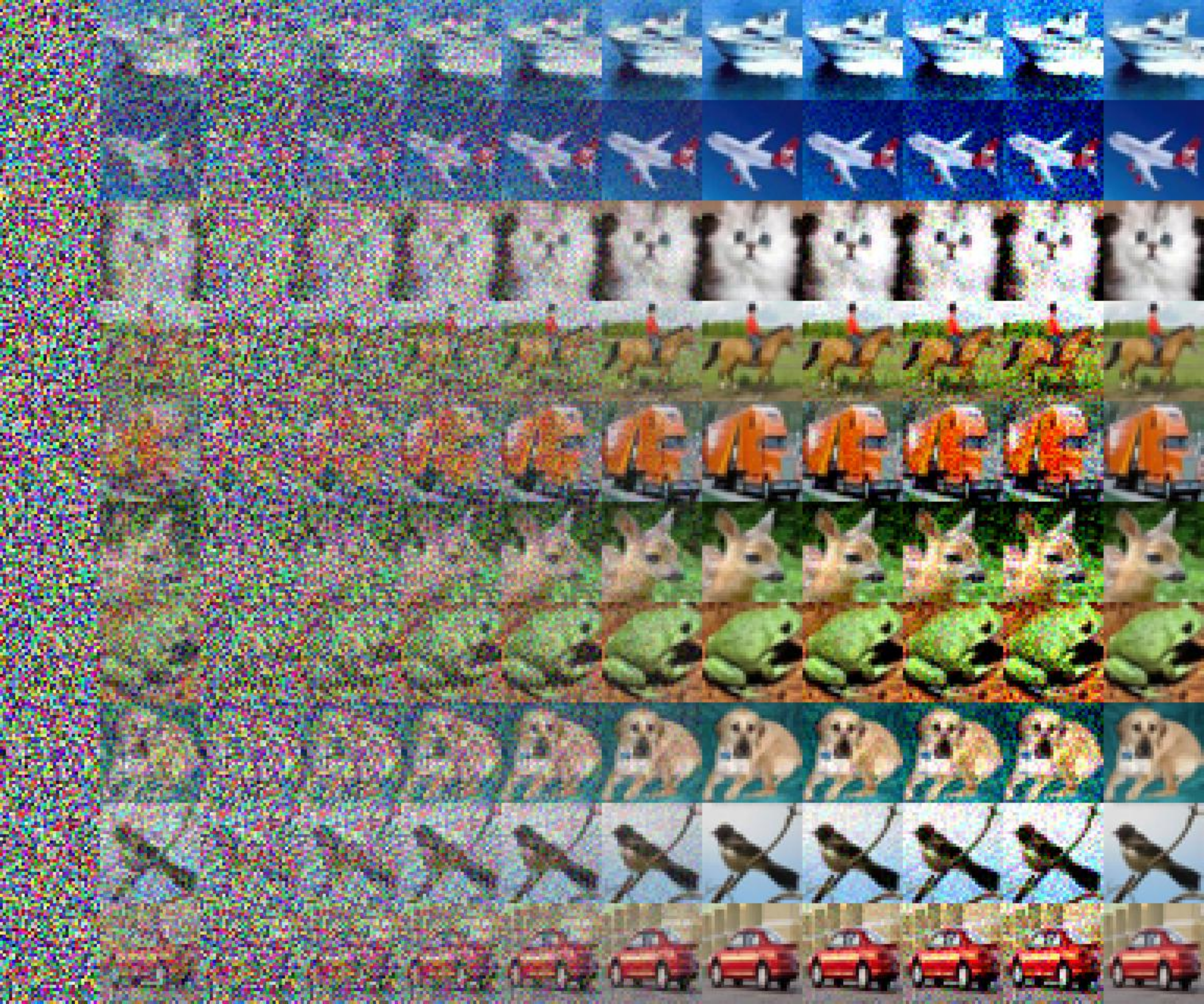}
    \vspace{-2pt}
    \caption{$\nabla_{\!x} \ff(x)$ gradation with CIFAR-10.}
\label{verifing_f}
\end{minipage}
\vspace{-5pt}
\end{figure*}

One important theoretical benefit of LGANs is that $\nabla_{\!x} \ff(x)$ for each generated sample is guaranteed to point towards some real sample. We here verify the gradient direction of $\nabla_{\!x} \ff(x)$ with a set of $\phi$ and $\varphi$ that satisfy Eq.~(\ref{eq_solvable}). 

The tested objectives include: (a) $\phi(x)=\varphi(-x)=x$; (b) $\phi(x)=\varphi(-x)=-\log(\sigma(-x))$; (c) $\phi(x)=\varphi(-x)=x+\sqrt{x^2+1}$; (d) $\phi(x)=\varphi(-x)=\exp(x)$. And they are tested in two scenarios: two-dimensional toy data and real-world high-dimensional data. In the two-dimensional case, $\CP_r$ consists of two Gaussians and $\CP_g$ is fixed as one Gaussian which is close to one of the two real Gaussians, as illustrated in Figure~\ref{verifing_f_syn}. For the latter case, we use the CIFAR-10 training set. To make  solving  $\ff$ feasible, we use ten CIFAR-10 images as $\CP_r$ and ten fixed noise images as $\CP_g$. Note that we fix $\CP_g$ on purpose because to verify the direction of $\nabla_{\!x} \ff(x)$, learning $\CP_g$ is not necessary. 

The results are shown in Figures~\ref{verifing_f_syn} and \ref{verifing_f}, respectively. In Figure~\ref{verifing_f_syn}, we can see that the gradient of each generated sample is pointing towards some real sample. 
For the high dimensional case, visualizing the gradient direction is nontrivial. Hence, we plot the gradient and corresponding increments. In Figure~\ref{verifing_f}, the leftmost in each row is a sample $x$ from $\CP_g$ and the second is its gradient $\nabla_{\!x} f(x)$. The interiors are $x+\epsilon\cdot\nabla_{\!x} f(x)$ with increasing $\epsilon$ and the rightmost is the nearest real sample $y$ from $\CP_r$. This result visually demonstrates that the gradient of a generated sample is towards 
a real sample. Note that the final results of Figure~\ref{verifing_f} keep almost identical when varying the loss metric $\phi$ and $\varphi$ in the family of LGANs. 

\subsection{Stabilized Discriminative Functions}

The Wasserstein distance is a very special case that has solution under Lipschitz constraint. It is the only case where both $\phi$ and $\varphi$ have constant derivative. As a result, $\ff$ under the Wasserstein distance has a free offset, i.e., given some $\ff$, $\ff+\alpha$ with any $\alpha \in \BR$ is also an optimal. In practice, it behaves as oscillations in $f(x)$ during training. The oscillations affect the practical performance of WGAN;  \citet{progressive_growing_gan} and \citet{adler2018banach} introduced regularization to the discriminative function to prevent $f(x)$ drifting during the training. By contrast, any other instance of LGANs does not have this problem. We illustrate the practical difference in Figure~\ref{f_oscillations}. 

\subsection{Max Gradient Penalty (MaxGP)}

LGANs impose penalty on the Lipschitz constant of the discriminative function. There are works that investigate different implementations of Lipschitz continuity in GANs, such as gradient penalty (GP) \citep{wgangp}, Lipschitz penalty (LP) \cite{wganlp} and spectral normalization (SN) \citep{sngan}. However, the existing regularization methods do not directly penalize the Lipschitz constant. According to \cite{adler2018banach}, Lipschitz constant $k(f)$ is equivalent to the maximum scale of $\lVert\nabla_{\!x} f(x)\rVert$. Both GP and LP penalize all gradients whose scales are larger than the given target Lipschitz constant $k_0$. SN directly restricts the Lipschitz constant via normalizing the network weights by their largest eigenvalues. However, it is currently unclear how to effectively penalize the Lipschitz constant with SN. 

To directly penalize Lipschitz constant, we approximate $k(f)$ in Eq.~(\ref{eq_lgan}) with the maximum sampled gradient scale:
\begin{equation}
k(f) \simeq \max_x \big\lVert \nabla_{\!x} f(x) \big\rVert.
\end{equation}
Practically, we follow \citep{wgangp} and sample $x$ as random interpolation of real and fake samples. We provide more details of this algorithm (MaxGP) in Appendix~\ref{app_maxgp}. 

According to our experiments, MaxGP in practice is usually comparable with GP and LP. However, in some of our synthetic experiments, we find that MaxGP is able to achieve the optimal discriminative function while GP and LP fail, e.g., the problem of solving $\ff$ in Figure~\ref{verifing_f}. Also, in some real data experiments, we find the training with GP or LP diverges and it is able to converge if we switch to MaxGP, e.g., the training with metric $\phi(x)=\varphi(-x)=\exp(x)$. 

\subsection{Benchmark with Unsupervised Image Generation}

\begin{figure*}[t]
\begin{minipage}{.66\textwidth}
\vspace{-15pt}
\centering
\captionof{table}{Quantitative comparisons with unsupervised image generation.}
\label{table2}
\vspace{-5pt}
\hspace{3.2pt}
\resizebox{0.9537\textwidth}{!}{
\begin{tabular}{|c|c | c|c | c|}
\hline 
 \multirow{2}{*}{Objective}             & \multicolumn{2}{c|}{CIFAR-10} & \multicolumn{2}{c|}{Tiny ImageNet} \\ 
                         \cline{2-5}
                         &IS                & FID                & IS               & FID  \\
\hline
\hline 
$x$                      & $7.68\pm0.03$    & $18.35\pm0.12$    & $8.66\pm 0.04$    & $16.47\pm0.04$    \\
\hline 
\hline
$\exp(x)$                & $\bf8.03\pm0.03$ & $\bf15.64\pm0.07$ & $8.67\pm0.04$     & $\bf14.90\pm0.07$    \\
\hline 
$-\log(\sigma(-x))$      & $7.95\pm0.04$    & $16.47\pm0.11$    & $8.70\pm0.04$     & $15.05\pm0.07$ \\
\hline 
$x+\sqrt{x^2+1}$         & $7.97\pm0.03$    & $16.03\pm0.09$    & $\bf8.82\pm0.03$  & $15.11\pm0.06$  \\
\hline 
\hline
$(x+1)^2$              & $7.97\pm0.04$    & $15.90\pm0.09$    & $8.53\pm0.04$     & $15.72\pm0.11$  \\
\hline
$\max(0,x+1)$          & $7.91\pm0.04$    & $16.52\pm0.12$    & $8.63\pm0.04$     & $15.75\pm0.06$  \\
\hline 
\end{tabular}
}
\end{minipage}
\begin{minipage}{.33\textwidth}	
    \centering
    \includegraphics[width=0.99\columnwidth]{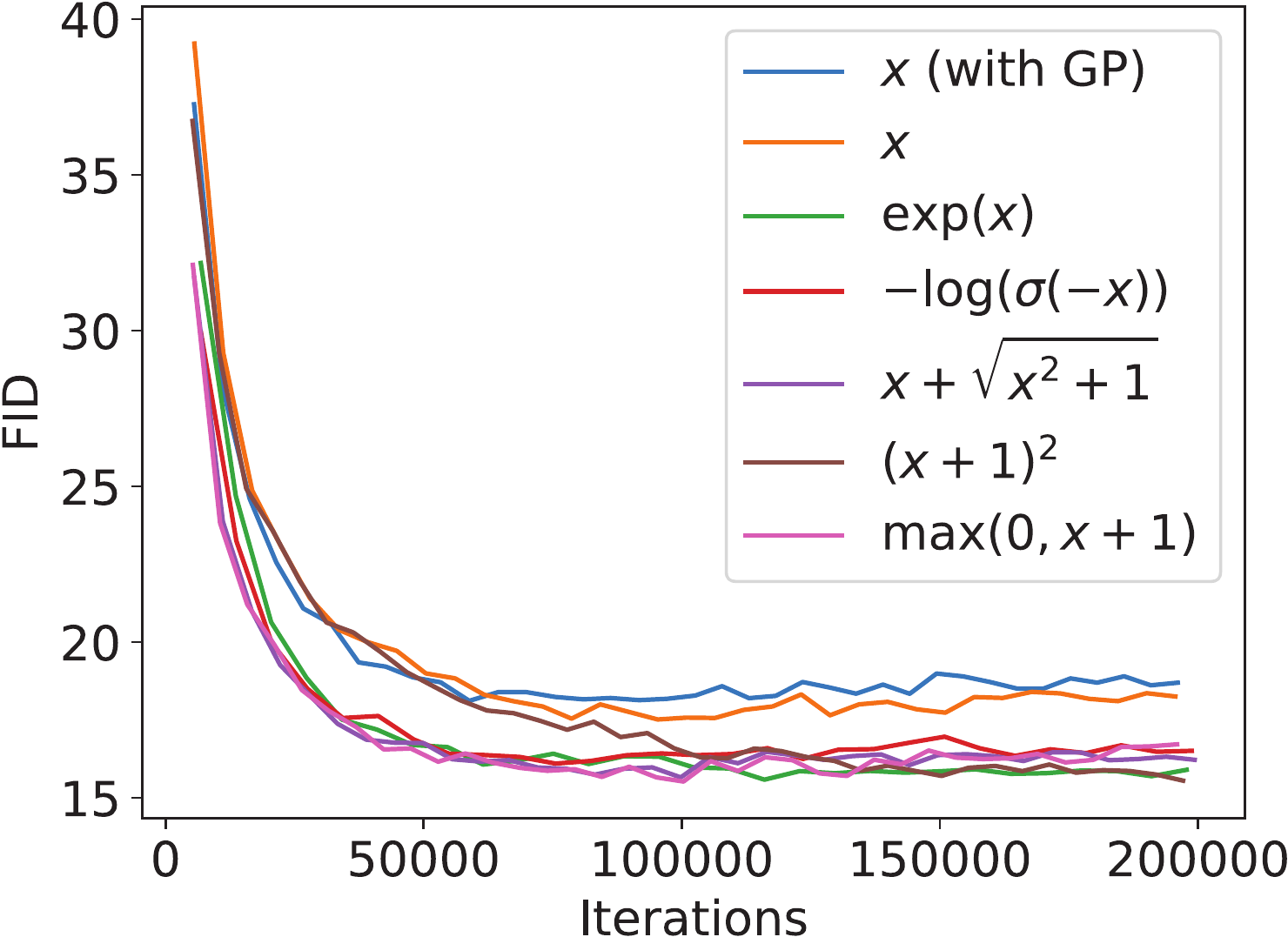}
    \vspace{-19pt}
    \caption{Training curves on CIFAR.}
    \label{training_curve_cifar10}
\end{minipage}
\vspace{-5pt} 
\end{figure*}  

\begin{figure*}[t]
\begin{minipage}{.66\textwidth}
\begin{subfigure}{0.49\linewidth}
    \includegraphics[width=0.99\columnwidth]{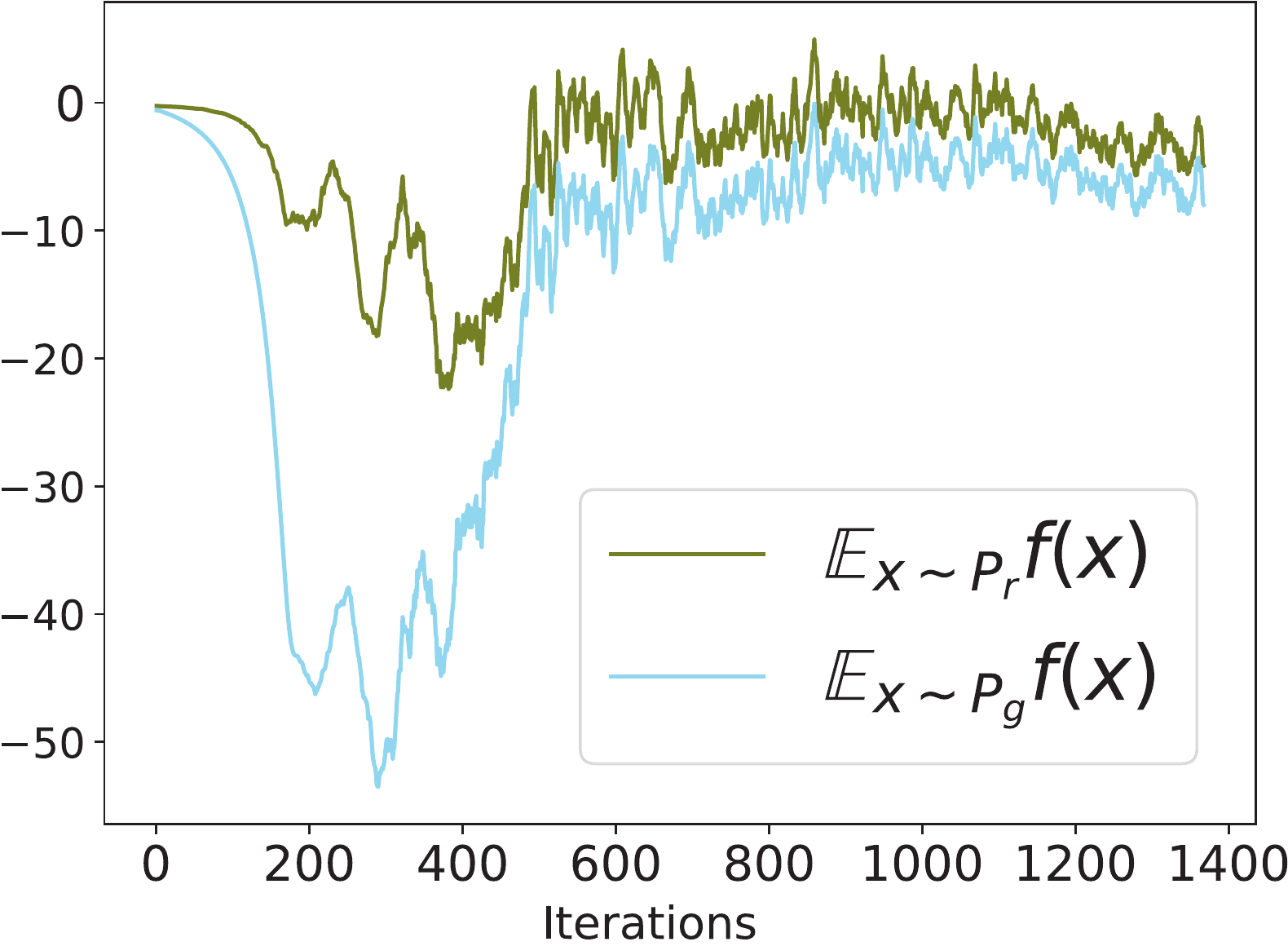}
    \label{fig5_a}
\end{subfigure}	
\begin{subfigure}{0.49\linewidth}
    \includegraphics[width=0.99\columnwidth]{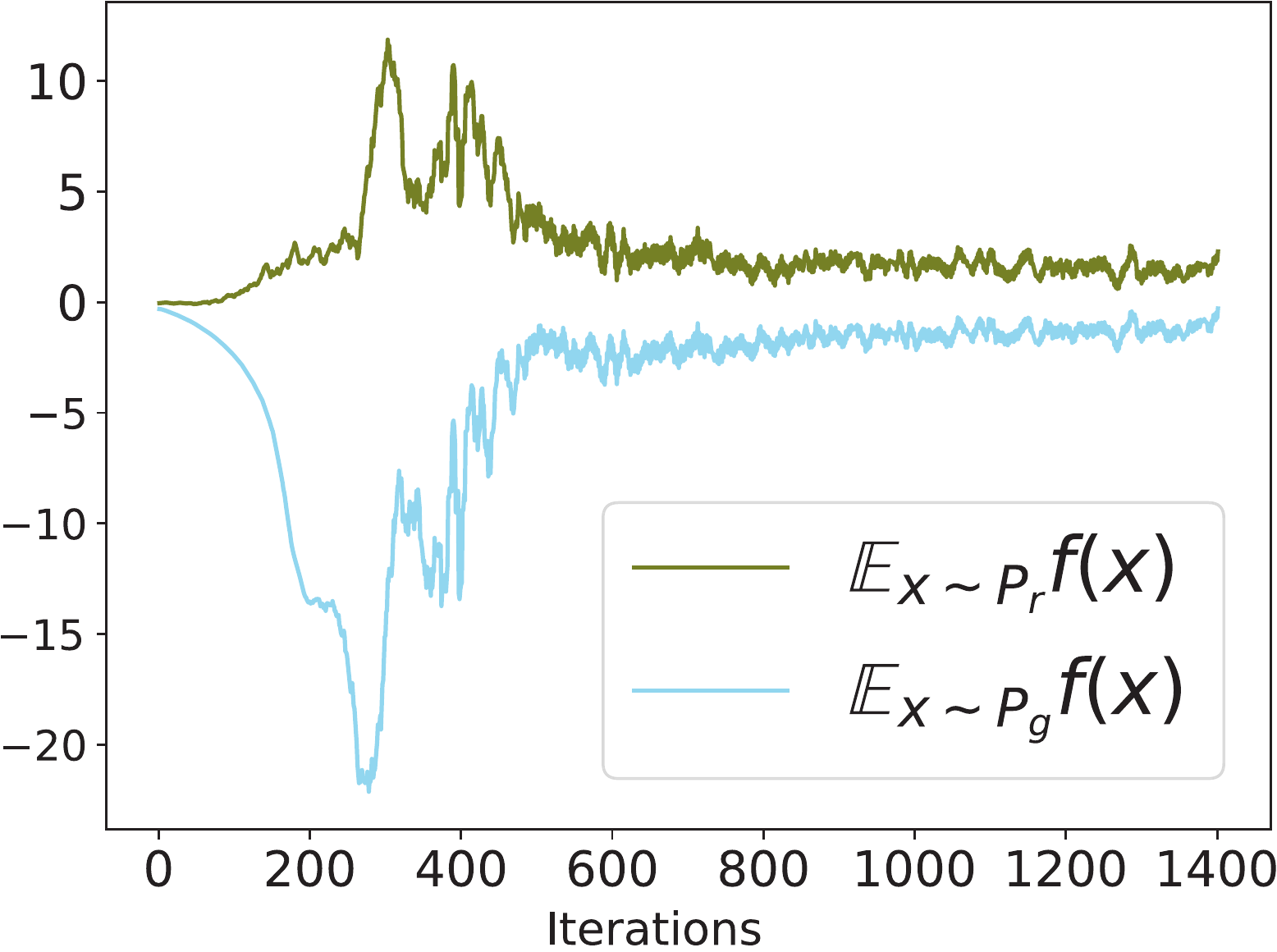}
    \label{fig5_b}
\end{subfigure}	
\vspace{-19pt}
\caption{$\ff(x)$ in LGANs is more stable. Left: WGAN. Right: LGANs.}
\label{f_oscillations}
\end{minipage}
\begin{minipage}{.33\textwidth}	
    \centering    
    \includegraphics[width=0.99\columnwidth]{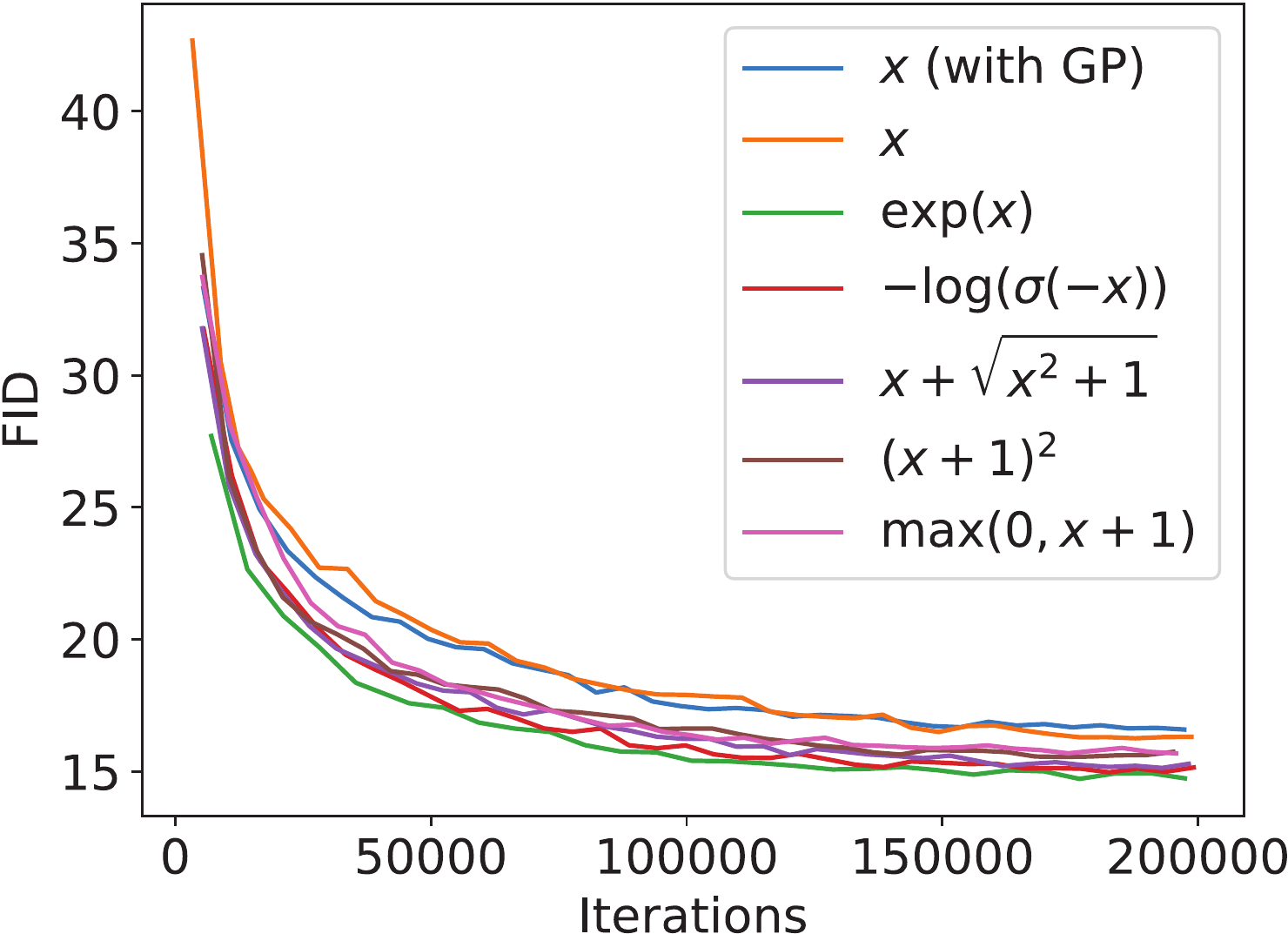}
    \vspace{-19pt}
    \caption{Training curves on Tiny.}
    \label{training_curve_tiny}
\end{minipage}
\vspace{-5pt}
\end{figure*} 

To quantitatively compare the performance of different objectives under Lipschitz constraint, we test them with unsupervised image generation tasks. In this part of  experiments, we also include the hinge loss $\phi(x)=\varphi(-x)=\max(0,x+\alpha)$ and quadratic loss \citep{lsgan}, which do not fit the assumption of strict monotonicity. For the quadratic loss, we set $\phi(x)=\varphi(-x)=(x+\alpha)^2$. 
To make the comparison simple, we fix $\psi(x)$ in the objective of generator as $-x$. We set $\alpha=1.0$ in the experiment. 

The strict monotonicity assumption of $\phi$ and $\varphi$ is critical in Theorem~\ref{theorem_main} to theoretically guarantee the existences of bounding relationships for \emph{arbitrary datas}. But if we further assume $S_r$ and $S_g$ are limited, it is possible that there exists a suitable $\lambda$ such that all real and fake samples lie in a strict monotone region of $\phi$ and $\varphi$: for the hinge loss, it would mean $2\alpha < k(f)\cdot \lVert y-x \rVert$ for all $y\in \CS_r$ and $x \in \CS_g$. 

The results in terms of Inception Score (IS) \citep{improved_gan} and Frechet Inception Distance (FID) \citep{heusel2017gans} are presented in Table~\ref{table2}. For all experiments, we adopt the network structures and hyper-parameter setting from \cite{wgangp}, where WGAN-GP in our implementation achieves IS $7.71\pm0.03$ and FID $18.86\pm0.13$ on CIFAR-10. We use MaxGP for all experiments and search the best $\lambda$ in $[0.01, 0.1, 1.0, 10.0]$. We use $200,000$ iterations for better convergence and use $500k$ samples to evaluate IS and FID for preferable stability. We note that IS is remarkably unstable during training and among different initializations. By contrast, FID is fairly stable. 

From Table~\ref{table2}, we can see that LGANs generally work better than WGAN. Different LGANs have relatively similar final results, while the objectives $\phi(x)=\varphi(-x)=\exp(x)$ and $\phi(x)=\varphi(-x)=x+\sqrt{x^2+1}$ achieve the best performances. 
The hinge loss and quadratic loss with a suitable $\lambda$ turn out to also work pretty good. We plot the training curves in terms of FID in Figures~\ref{training_curve_cifar10} and \ref{training_curve_tiny}. Due to page limitation, we leave more results and details in Appendix~\ref{sec_details}. 


\section{Related Work} \label{sec_related_work}

WGAN \citep{wgan} based on the KR dual does not suffer from the gradient uninformativeness problem. We have shown that the Lipschitz constraint in the KR dual of the Wasserstein distance can be relaxed. With the new dual form, the resulting model suffers from the gradient uninformativeness problem. 

We have shown that Lipschitz constraint is able to ensure the convergence for a family of GAN objectives, which is not limited to the Wasserstein distance.
For example, Lipschitz continuity is also introduced to the vanilla GAN \citep{sngan,kodali2017convergence,fedus2017many}, achieving improvements in the quality of generated samples. As a matter of fact, the vanilla GAN objective $\phi(x)=$ $\varphi(-x)=$ $-\log(\sigma(-x))$ is an special case of our LGANs. Thus our analysis explains why and how it works. \cite{convex_duality} also provide some analysis on how $f$-divergence behaviors when combined with Lipschitz. However, their analysis is limited to the symmetric $f$-divergence. 

\citet{fedus2017many} also argued that divergence is not the primary guide of the training of GANs. However, they thought that the vanilla GAN with a non-saturating generator objective  somehow works. According to our analysis, given the optimal $\ff$, the vanilla GAN has no guarantee on its convergence. 
\citet{coulombgan} provided some arguments on the unreliability of $\nabla_{\!x} \ff(x)$ in traditional GANs, which motivates their proposal of Coulomb GAN. However, the arguments there are not thorough. By contrast, we identify the gradient uninformativeness problem and link it to the restrictions on $\mathcal{F}$. Moreover, we have accordingly proposed a new solution, i.e., the Lipschitz GANs. 

Some work studies the suboptimal convergence of GANs \citep{mescheder2017numerics,mescheder2018training,arora2017generalization,liu2017approximation,convex_duality}, which is another important direction for theoretically understanding GANs. Despite the fact that the behaviors of suboptimal can be different, we think the optimal should well-behave in the first place, e.g., informative gradient and stable Nash equilibrium. 
Researchers found that applying Lipschitz continuity condition to the generator also benefits the quality of generated samples \citep{zhang2018self, odena2018generator}. And \cite{qi2017loss} studied the Lipschitz condition from the perspective of loss-sensitive with a Lipschitz data density assumption. 



\section{Conclusion} \label{sec_conclusion}

In this paper we have studied one fundamental cause of failure in the training of GANs, i.e., the gradient uninformativeness issue. In particular, for generated samples which are not surrounded by real samples, the gradients of the optimal discriminative function $\nabla_{\!x} \ff(x)$ tell nothing about $\CP_r$. That is, in a sense, there is no guarantee that $\CP_g$ will converge to $\CP_r$. Typical case is that $\CP_r$ and $\CP_g$ are disjoint, which is common in practice. The gradient uninformativeness is common for unrestricted GANs and also appears in restricted GANs. 


To address the nonconvergence problem caused by uninformative $\nabla_{\!x} \ff(x)$, we have proposed LGANs and shown that it makes $\nabla_{\!x} \ff(x)$ informative in the way that the gradient for each generated sample points towards some real sample. We have also shown that in LGANs, the optimal discriminative function exists and is unique, and the only Nash equilibrium is achieved when $\CP_r=\CP_g$ where $k(\ff)=0$. Our experiments shown LGANs lead to more stable discriminative functions and achieve higher sample qualities. 




\section*{Acknowledgements}

This work is sponsored by APEX-YITU Joint Research Program. The authors thank the support of National Natural Science Foundation of China (61702327, 61772333, 61632017), Shanghai Sailing Program (17YF1428200) and the helpful discussions with Dachao Lin.
Jiadong Liang and Zhihua Zhang have been supported by Beijing Municipal Commission of Science and Technology under Grant No.
181100008918005, and by Beijing Academy of Artificial Intelligence (BAAI). 

\bibliography{reference}
\bibliographystyle{icml2019}

\newpage
\onecolumn
\appendix

\section{Proofs}

\subsection{Proof of Theorem~\ref{thm:esistence}}

Let $X,Y$ be two random vectors such that $X\sim \CP_g,Y\sim \CP_r$. Assume $\E_{X\sim \CP_g}\lVert X \rVert<\infty$ and $\E_{Y\sim \CP_r}\lVert Y\rVert<\infty$. Let $\mathfrak{G}(f) = \E_{X\sim \CP_g}\phi(f(X))+\E_{Y\sim \CP_r}\varphi(f(Y))$. Let $\lVert f\rVert_{Lip}$ denote the Lipschitz constant of $f$. Let $\CS_r$ and $\CS_g$ denote the supports of $\CP_r$ and $\CP_g$, respectively. Let $W_1(\CP_r,\CP_g)$ denote the $1$-st Wasserstein distance between $\CP_r$ and $\CP_g$. 

\begin{lemma} \label{lemma_lower_bound}
Let $\phi$ and $\varphi$ be two convex functions, whose domains are both $\BR$. Assume $f$ is subject to $\lVert f\rVert_{Lip}\le k$. If there is $a_0 \in \BR$ such that $\phi'(a_0) + \varphi'(a_0)=0$, then we have a lower bound for $\mathfrak{G}(f)$. 
\end{lemma}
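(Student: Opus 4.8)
The plan is to produce an $f$-independent lower bound for $\mathfrak{G}(f)$ by combining a first-order (supporting line) inequality coming from convexity with the Kantorovich--Rubinstein duality of \eqref{eq_w_dual_lip}. The point of the hypothesis $\phi'(a_0)+\varphi'(a_0)=0$ is that it makes the two linear terms collapse into the \emph{difference} of expectations $\E_{\CP_g}f-\E_{\CP_r}f$, which is precisely the quantity that the Wasserstein distance controls.

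First I would use convexity to write, for all $t\in\BR$, $\phi(t)\ge\phi(a_0)+\phi'(a_0)(t-a_0)$ and $\varphi(t)\ge\varphi(a_0)+\varphi'(a_0)(t-a_0)$ (if $\phi$ or $\varphi$ has a kink at $a_0$, replace the derivatives by subgradients summing to zero). Substituting $t=f(X)$ with $X\sim\CP_g$ and $t=f(Y)$ with $Y\sim\CP_r$, and taking expectations, gives
\[
\mathfrak{G}(f)\ \ge\ \phi(a_0)+\varphi(a_0)+\phi'(a_0)\big(\E_{X\sim\CP_g}f(X)-a_0\big)+\varphi'(a_0)\big(\E_{Y\sim\CP_r}f(Y)-a_0\big).
\]
Here $\E_{X\sim\CP_g}f(X)$ and $\E_{Y\sim\CP_r}f(Y)$ are finite: since $\lVert f\rVert_{Lip}\le k$ we have $|f(x)|\le|f(x_0)|+k\lVert x-x_0\rVert$ for a fixed $x_0$, and the first moments of $\CP_g$ and $\CP_r$ are assumed finite; the expectation defining $\mathfrak{G}(f)$ is in any case well-defined in $(-\infty,+\infty]$ since the integrands dominate integrable affine functions, so the termwise inequality above is legitimate.

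Next I would use $\phi'(a_0)+\varphi'(a_0)=0$: setting $c:=\phi'(a_0)=-\varphi'(a_0)$, the $a_0$ terms cancel and the bound becomes $\mathfrak{G}(f)\ge\phi(a_0)+\varphi(a_0)+c\,\big(\E_{X\sim\CP_g}f(X)-\E_{Y\sim\CP_r}f(Y)\big)$. Finally, since $f/k$ is $1$-Lipschitz, applying the KR duality \eqref{eq_w_dual_lip} to both $f/k$ and $-f/k$ yields $\big|\E_{Y\sim\CP_r}f(Y)-\E_{X\sim\CP_g}f(X)\big|\le k\,W_1(\CP_r,\CP_g)$, hence $c\,\big(\E_{\CP_g}f-\E_{\CP_r}f\big)\ge-|c|\,k\,W_1(\CP_r,\CP_g)$. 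Combining gives the desired lower bound
\[
\mathfrak{G}(f)\ \ge\ \phi(a_0)+\varphi(a_0)-|\phi'(a_0)|\,k\,W_1(\CP_r,\CP_g),
\]
a finite constant (as $W_1(\CP_r,\CP_g)<\infty$ under the moment assumptions) that does not depend on $f$.

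The argument is short, so there is no single hard obstacle; the only care needed is the integrability bookkeeping (justifying that $\E f(X)$, $\E f(Y)$ are finite so the affine bound integrates termwise, and treating a possible non-differentiability of $\phi$ or $\varphi$ at $a_0$ via subgradients). The conceptual crux is simply the recognition that the cancellation forced by $\phi'(a_0)+\varphi'(a_0)=0$ is exactly what reduces the otherwise uncontrolled expression $\phi'(a_0)\E_{\CP_g}f+\varphi'(a_0)\E_{\CP_r}f$ to the Wasserstein-controllable difference $\E_{\CP_g}f-\E_{\CP_r}f$.
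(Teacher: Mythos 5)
Your proof is correct and follows essentially the same route as the paper's: a supporting-line bound from convexity at $a_0$, cancellation of the linear terms via $\phi'(a_0)+\varphi'(a_0)=0$, and control of the remaining difference $\E_{\CP_g}f-\E_{\CP_r}f$ by $k\,W_1(\CP_r,\CP_g)$ through the Kantorovich--Rubinstein duality applied to $f/k$. Your added care with integrability and with the sign of $\varphi'(a_0)$ (via $|c|$) is a minor tightening of the same argument.
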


\begin{proof}
Given that $\phi, \varphi$ are convex functions, we have
\begin{equation}
\begin{aligned} 
	\mathfrak{G}(f)&=\E_{X\sim \CP_g}\phi(f(X))+\E_{Y\sim \CP_r}\varphi(f(Y))\\
               &\ge \E_{X\sim \CP_g}(\phi'(a_0)(f(x)-a_0) + \phi(a_0))+\E_{Y\sim \CP_r}(\varphi'(a_0)(f(x)-a_0)+\varphi(a_0))\\
               &= \phi'(a_0)\E_{X\sim \CP_g}f(x)+\varphi'(a_0)\E_{Y\sim \CP_r}f(Y)+C\\
               &=(\phi'(a_0)+\varphi'(a_0))\E_{X\sim \CP_g}f(X)+\varphi'(a_0)(\E_{Y\sim \CP_r}f(Y)-\E_{X\sim \CP_g}f(X))+C\\
               &= k\varphi'(a_0)(\E_{Y\sim \CP_r}\frac{1}{k}f(Y)-\E_{X\sim \CP_g}\frac{1}{k}f(X))+C\\
               &\ge -k\varphi'(a_0)W_1(\CP_r,\CP_g)+C.
\end{aligned}
\end{equation}
Therefore, we get the lower bound.
\end{proof}

\begin{lemma} \label{lemma_infty}
Let $\phi$ and $\varphi$ be two convex functions, whose domains are both $\BR$. Assume $f$ is subject to $\lVert f\rVert_{Lip}\le k$. 
\begin{itemize}
    \item If there exists $a_1 \in \BR$ such that $\phi'(a_1) + \varphi'(a_1)>0$, then we have: if $f(0)\to +\infty$, then $\mathfrak{G}(f) \to +\infty$;
    \item If there exists $a_2 \in \BR$ such that $\phi'(a_2) + \varphi'(a_2)<0$, then we have: if $f(0)\to -\infty$, then $\mathfrak{G}(f) \to +\infty$. 
\end{itemize} 
\end{lemma}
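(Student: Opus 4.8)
The statement is a coercivity fact, and the plan is to use convexity to replace $\phi$ and $\varphi$ by their supporting lines at $a_1$ (resp.\ $a_2$), and then use the $k$-Lipschitz bound $f(0)-k\lVert x\rVert\le f(x)\le f(0)+k\lVert x\rVert$ to reduce everything to the single number $f(0)$. First I would record the supporting-line inequalities $\phi(t)\ge\phi(a_1)+\phi'(a_1)(t-a_1)$ and $\varphi(t)\ge\varphi(a_1)+\varphi'(a_1)(t-a_1)$, substitute $t=f(X)$ and $t=f(Y)$, and take expectations, which gives
\begin{equation}
\mathfrak{G}(f)\ \ge\ \phi'(a_1)\,\E_{X\sim\CP_g}f(X)+\varphi'(a_1)\,\E_{Y\sim\CP_r}f(Y)+C,
\end{equation}
with $C$ a constant depending only on $a_1,\phi,\varphi$; the assumptions $\E_{\CP_g}\lVert X\rVert<\infty$, $\E_{\CP_r}\lVert Y\rVert<\infty$ and $\lVert f\rVert_{Lip}\le k$ make the two expectations finite, so this is meaningful (this is essentially the first line of the proof of Lemma~\ref{lemma_lower_bound}).

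Next I would insert the Lipschitz bounds, which give $\E_{\CP_g}f(X)\in[f(0)-kR_g,\,f(0)+kR_g]$ and $\E_{\CP_r}f(Y)\in[f(0)-kR_r,\,f(0)+kR_r]$ with $R_g=\E_{\CP_g}\lVert X\rVert$, $R_r=\E_{\CP_r}\lVert Y\rVert$. Bounding each of the two terms with whichever of its one-sided Lipschitz bounds points in the favorable direction (the lower bound $f(0)-kR$ when the coefficient is $\ge 0$, the upper bound $f(0)+kR$ when it is $<0$), I expect to arrive at
\begin{equation}
\mathfrak{G}(f)\ \ge\ \big(\phi'(a_1)+\varphi'(a_1)\big)\,f(0)\ -\ k\big(|\phi'(a_1)|R_g+|\varphi'(a_1)|R_r\big)\ +\ C .
\end{equation}
Since $\phi'(a_1)+\varphi'(a_1)>0$ and everything else on the right is a fixed constant, the right-hand side tends to $+\infty$ as $f(0)\to+\infty$, which is the first claim. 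The second claim is the mirror image: run the same argument with supporting lines at $a_2$, arrange the sign-dependent Lipschitz bounds so that the coefficient $\phi'(a_2)+\varphi'(a_2)<0$ multiplies $f(0)$, and conclude from $\phi'(a_2)+\varphi'(a_2)<0$ that the bound diverges to $+\infty$ when $f(0)\to-\infty$.

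The only real subtlety — and the place I would be most careful — is that the hypothesis controls only the \emph{sum} $\phi'(a_i)+\varphi'(a_i)$, not the individual one-sided derivatives, so a naive ``both expectations are $\ge f(0)-kR$'' step fails when one coefficient is negative; this is exactly why the Lipschitz bound must be picked term-by-term according to the sign of the coefficient. A clean alternative that avoids the case analysis is to keep the combination $\varphi'(a_1)\big(\E_{\CP_r}f(Y)-\E_{\CP_g}f(X)\big)$ intact and bound it by $|\varphi'(a_1)|\,k\,W_1(\CP_r,\CP_g)$ using that $f/k$ is $1$-Lipschitz, exactly as in Lemma~\ref{lemma_lower_bound}; the remaining term is $(\phi'(a_1)+\varphi'(a_1))\E_{\CP_g}f(X)\ge(\phi'(a_1)+\varphi'(a_1))(f(0)-kR_g)$, and the conclusion follows immediately. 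If $\phi$ or $\varphi$ happens to be nondifferentiable at $a_1$ or $a_2$, one simply replaces $\phi',\varphi'$ by subgradients there, since only the supporting-line inequality is used.
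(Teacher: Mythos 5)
Your proposal is correct and takes essentially the same approach as the paper: supporting-line inequalities from convexity, followed by the $k$-Lipschitz bound to express everything in terms of $f(0)$; indeed, the ``clean alternative'' you describe (isolating $(\phi'(a_1)+\varphi'(a_1))\E_{X\sim\CP_g}f(X)$ and bounding $\varphi'(a_1)\bigl(\E_{\CP_r}f(Y)-\E_{\CP_g}f(X)\bigr)$ via $k\,W_1(\CP_r,\CP_g)$) is exactly the paper's proof. Your primary term-by-term, sign-dependent variant is also valid and correctly flags the one subtlety (only the sum of derivatives is sign-controlled), so nothing is missing.
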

\begin{proof}
	Since $\phi, \varphi$ are convex functions, we have 
	\begin{equation}
	\begin{aligned} 
	\mathfrak{G}(f)&=\E_{X\sim \CP_g}\phi(f(X))+\E_{Y\sim \CP_r}\varphi(f(Y))\\
	&\ge \E_{X\sim \CP_g}(\phi'(a_1)(f(x)-a_1) + \phi(a_1))+\E_{Y\sim \CP_r}(\varphi'(a_1)(f(x)-a_1)+\varphi(a_1))\\
	&= \phi'(a_1)\E_{X\sim \CP_g}f(x)+\varphi'(a_1)\E_{Y\sim \CP_r}f(Y)+C_1\\
	&=(\phi'(a_1)+\varphi'(a_1))\E_{X\sim \CP_g}f(X)+\varphi'(a_1)(\E_{Y\sim \CP_r}f(Y)-\E_{X\sim \CP_g}f(X))+C_1\\
	&=(\phi'(a_1)+\varphi'(a_1))\E_{X\sim \CP_g}f(X) +k\varphi'(a_1)(\E_{Y\sim \CP_r}\frac{1}{k}f(Y)-\E_{X\sim \CP_g}\frac{1}{k}f(X))+C_1\\
	&\ge(\phi'(a_1)+\varphi'(a_1))\E_{X\sim \CP_g}f(X)  -k\varphi'(a_1)W_1(\CP_r,\CP_g)+C_1\\
	&\ge (\phi'(a_1)+\varphi'(a_1))f(0) -k(\phi'(a_1)+\varphi'(a_1))\E_{X\sim \CP_g}\lVert X \rVert -k\varphi'W_1(\CP_r,\CP_g)+C_1     .   \end{aligned}
	\end{equation}
	Thus, if $f(0)\to +\infty$, then $\mathfrak{G}(f)\to +\infty$. And we can prove the other case symmetrically. 
\end{proof}

\begin{lemma} \label{lemma_aaa}
Let $\phi$ and $\varphi$ be two convex functions, whose domains are both $\BR$. If $\phi$ and $\varphi$ satisfy the following properties: 
\begin{itemize} 
	\item $\phi' \geq 0,\varphi'\leq 0$;
	\item There exist $a_0,a_1,a_2\in \BR$ such that $\phi'(a_0) + \varphi'(a_0)=0,\phi'(a_1)+\varphi'(a_1)>0,\phi'(a_2)+\varphi'(a_2)<0$.
\end{itemize} 
Then we have $\mathfrak{G}(f) = \E_{X\sim \CP_r}\phi(f(X))+\E_{Y\sim \CP_g}\varphi(f(Y))$, where $f$ is subject to $\lVert f\rVert_{Lip}\le k$ , has global minima. 

That is, $\exists \ff, \,\, s.t.$ 
\begin{itemize}
	\item $\lVert \ff\rVert_{Lip}\le k$; 
	\item $\forall f \,\, s.t. \,\, \lVert f\rVert_{Lip}\le k$, we have $\mathfrak{G}(\ff)\le \mathfrak{G}(f)$. 
\end{itemize}
\end{lemma}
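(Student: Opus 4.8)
The plan is to run the direct method of the calculus of variations over the feasible set $\{f:\lVert f\rVert_{Lip}\le k\}$: first establish that the infimum is finite, then take a minimizing sequence, extract a locally uniformly convergent subsequence via Arzel\`a--Ascoli (after killing the constant-shift degeneracy), and finally check that $\mathfrak{G}$ is lower semicontinuous along that subsequence so that the limit is an honest minimizer.

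\textbf{Step 1 (the infimum is finite).} Set $m\triangleq\inf\{\mathfrak{G}(f):\lVert f\rVert_{Lip}\le k\}$. Since $a_0$ exists with $\phi'(a_0)+\varphi'(a_0)=0$, Lemma~\ref{lemma_lower_bound} gives $m>-\infty$. On the other hand the constant function $f\equiv a_0$ is feasible and $\mathfrak{G}(a_0)=\phi(a_0)+\varphi(a_0)<\infty$, so $m<\infty$. Pick a minimizing sequence $(f_n)$ with $\mathfrak{G}(f_n)\to m$; we may assume $\mathfrak{G}(f_n)<\infty$ for all $n$.

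\textbf{Step 2 (compactness).} I first control the value at the origin. If $\sup_n|f_n(0)|=\infty$, pass to a subsequence with $f_{n_j}(0)\to+\infty$ or $f_{n_j}(0)\to-\infty$; in the first case Lemma~\ref{lemma_infty} applied with $a_1$, in the second with $a_2$, forces $\mathfrak{G}(f_{n_j})\to+\infty$, contradicting $\mathfrak{G}(f_n)\to m<\infty$. Hence $M\triangleq\sup_n|f_n(0)|<\infty$, and the $k$-Lipschitz bound yields $|f_n(x)|\le M+k\lVert x\rVert$ for all $x$. Thus $(f_n)$ is uniformly bounded and equicontinuous on every closed ball; by Arzel\`a--Ascoli together with a diagonal argument over an exhaustion of $\BR^n$ by balls, a subsequence (not relabeled) converges uniformly on compact sets, hence pointwise on all of $\BR^n$, to some $f^*$. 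Pointwise limits preserve the Lipschitz bound, so $\lVert f^*\rVert_{Lip}\le k$ and $f^*$ is feasible (measurability of $f^*$ is automatic as a pointwise limit of continuous functions).

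\textbf{Step 3 (lower semicontinuity) and conclusion.} By convexity, $\phi(t)\ge\phi(a_0)+\phi'(a_0)(t-a_0)$ with $\phi'(a_0)\ge0$, so using $|f_n(X)|\le M+k\lVert X\rVert$ we get the $n$-uniform bound $\phi(f_n(X))\ge\phi(a_0)-\phi'(a_0)(M+k\lVert X\rVert+|a_0|)$, which is in $L^1(\CP_g)$ because $\E_{X\sim\CP_g}\lVert X\rVert<\infty$; symmetrically $\varphi(f_n(Y))\ge\varphi(a_0)-|\varphi'(a_0)|(M+k\lVert Y\rVert+|a_0|)\in L^1(\CP_r)$. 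Continuity of $\phi,\varphi$ and pointwise convergence give $\phi(f_n(X))\to\phi(f^*(X))$ a.s. and $\varphi(f_n(Y))\to\varphi(f^*(Y))$ a.s., so Fatou's lemma (applied after subtracting the integrable minorants) yields $\E_{X\sim\CP_g}\phi(f^*(X))\le\liminf_n\E_{X\sim\CP_g}\phi(f_n(X))$ and likewise for $\varphi$. Adding these and using $\liminf a_n+\liminf b_n\le\liminf(a_n+b_n)$ gives $\mathfrak{G}(f^*)\le\liminf_n\mathfrak{G}(f_n)=m$; since $f^*$ is feasible, $\mathfrak{G}(f^*)\ge m$, hence $\mathfrak{G}(f^*)=m$ and $f^*$ is a global minimizer.

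\textbf{Main obstacle.} The feasible set is genuinely non-compact, for two reasons: the shift symmetry $f\mapsto f+c$ leaves $\lVert f\rVert_{Lip}$ unchanged, and $\BR^n$ is unbounded so one only gets convergence uniformly on compacta. The two preparatory lemmas are precisely what handle this: Lemma~\ref{lemma_infty} pins $f_n(0)$ into a bounded range (removing the shift freedom) so that Arzel\`a--Ascoli becomes applicable, and the convexity-driven affine minorant is what allows the limit to be passed through the possibly-unbounded integrals via Fatou instead of dominated convergence. The remaining care is purely bookkeeping --- taking the subsequences in the "control $f_n(0)$" step and in Arzel\`a--Ascoli compatibly --- which is routine.
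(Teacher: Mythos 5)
Your proof is correct and follows essentially the same route as the paper's: a minimizing sequence, boundedness of $f_n(0)$ via Lemma~\ref{lemma_infty}, extraction of a pointwise-convergent subsequence (you package the paper's Bolzano--Weierstrass diagonal argument over rationals as Arzel\`a--Ascoli, which is the same idea), and lower semicontinuity of $\mathfrak{G}$ via the convexity-supplied affine minorant and Fatou's lemma.
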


\begin{proof}
According to Lemma~\ref{lemma_lower_bound}, $\mathfrak{G}(f)$ has a lower bound, which means $inf(\mathfrak{G}(f)) > -\infty$. Thus we can get a series of functions $\left\{f _n\right\}_{n=1}^\infty$ such that $\lim_{n\to \infty}\mathfrak{G}(f_n)=inf(\mathfrak{G}(f))$. Suppose that $\left\{r_i\right\}_{i=1}^\infty$ is the sequence of all rational points in $dom(f)$. Due to Lemma~\ref{lemma_infty}, for any $x\in \BR$, $\left\{f_n(x)|n\in \BR\right\}$ is bounded. By Bolzano-Weierstrass theorem, there is a subsequence $\left\{f_{1n}\right\}\subseteq \left\{f_n\right\}$ such that $\left\{f_{1n}(r_1)\right\}_{n=1}^\infty$ converges. And there is a subsequence $\left\{f_{2n}\right\}\subseteq \left\{f_{1n}\right\}$ such that $\left\{f_{2n}(r_2)\right\}_{n=1}^\infty$ converges. As for $r_i$, there  is a subsequence $\left\{f_{in}\right\}\subseteq \left\{f_{i-1n}\right\}$ such that   $\left\{f_{in}(r_i)\right\}_{n=1}^\infty$ converges. Then the sequence $\left\{f_{nn}\right\}_{n=1}^\infty$ will converge at  $r_i$. 

Furthermore, for all $x\in dom(f)$, we claim that $\left\{f_{nn}\right\}_{n=1}^\infty $ converges at $x$. Actually, $\forall \epsilon > 0 $, find $r\in \left\{r_i\right\}$ such that $\|x-r\|\le \frac{\epsilon}{10k}$, we have
\begin{equation} 
\begin{aligned}
    \lim_{m,l\to \infty}|f_{mm}(x)-f_{ll}(x)|
    &\le \lim_{m,l \to \infty}(|f_{mm}(x)-f_{mm}(r)|+|f_{mm}(r)-f_{ll}(r)|+|f_{ll}(r)-f_{ll}(x)|)\\
    &\le \lim_{m,l\to \infty}(\frac{\epsilon}{10}+\frac{\epsilon}{10}+|f_{mm}(r)-f_{ll}(r)|)=\frac{\epsilon}{5}
\end{aligned}
\end{equation} 	
Let $\epsilon\to 0$, then we get $\lim _{m,l\to\infty}|f_{mm}(x)-f_{ll}(x)|=0 $. 

We denote $\left\{f_{nn}\right\}_{n=1}^ \infty$ as $\left\{g_n\right\}_{n=1}^\infty$ and $\left\{g_n\right\}_{n=1}^\infty $ converges to $g$. Due to Lemma~\ref{lemma_infty}, we know that $\exists C'$ such that $|g_n(0)|\le C', \,\,\forall n \in \BN$. Because $\phi '\geq 0 ,\varphi'\leq 0$, we have
\begin{equation} 
\begin{aligned}
\phi(g_n(x))&\ge \phi(g_n(0)-k\|x\|)\ge \phi(-C'-k\|x\|)\ge\phi'(a_0)(-C'-k\|x\|-a_0)+\phi(a_0)=-k\phi'(a_0)\|x\|+C^{''}
\end{aligned}
\end{equation}
That is, $ \phi(g_n(x))+k\phi'(a_0)\|x\|-C^{''}\ge 0$.

By Fatou's Lemma, 
\begin{equation} 
\begin{aligned}
	\E_{X\sim \CP_g}(\phi(g(X))+k\phi'(a_0)\|X\|-C^{''})
	    &=\E_{X\sim \CP_g}\varliminf_{n\to \infty}(\phi(g_n(X))+k\phi'(a_0)\|X\|-C^{''})\\
	    &\le\varliminf_{n\to \infty}\E_{X\sim \CP_g}(\phi(g_n(X))+k\phi'(a_0)\|X\|-C^{''})\\
	    &=\varliminf_{n\to\infty}\E_{X\sim \CP_g}\phi(g_n(X)) + \E_{X\sim \CP_g}(k\phi'(a_0)\|X\|-C^{''})
\end{aligned}
\end{equation}
It means $\E_{X\sim \CP_g}\phi(g(X))\le \varliminf_{n\to \infty}\E_{X\sim \CP_g}\phi(g_n(X))$. Similarly, we have $\E_{Y\sim \CP_r}\varphi(g(Y))\le \varliminf_{n\to \infty}\E_{Y\sim \CP_r}\varphi(g_n(Y))$. Combining the two inequalities, we have
\begin{equation}
\begin{aligned}
\mathfrak{G}(g)
&=\E_{X\sim \CP_g}\phi(g(X))+\E_{Y\sim \CP_r}\varphi(g(Y))\le \varliminf_{n\to \infty}\E_{X\sim \CP_g}\phi(g_n(X))+\varliminf_{n\to \infty}\E_{Y\sim \CP_r}\varphi(g_n(Y))\\
&\le\varliminf_{n\to \infty}(\E_{X\sim \CP_g}\phi(g_n(X))+\E_{Y\sim \CP_r}\varphi(g_n(Y)))=\inf_{\|f\|_{Lip}\le k}\mathfrak{G}(f)
\end{aligned}
\end{equation}

Note that for any $x,y\in dom(g)$, $|g(x)-g(y)|\le\lim_{n\to\infty}(|g(x)-g_n(x)|+|g_n(x)-g_n(y)|+|g_n(y)-g(y)|)\le k\|x-y\|$. That is, $\|g\|_{Lip}\le k$, $\mathfrak{G}(g)=\inf_{\|f\|_{Lip}\le k}\mathfrak{G}(f)$. 
\end{proof}	

\begin{lemma}[Wasserstein distance]
$\mathfrak{T}(f)=\E_{X\sim \CP_g}f(X) - \E_{Y\sim \CP_r}f(Y)$, where $f$ is subject to $\|f\|_{Lip}\le k$, has global minima. 
\end{lemma}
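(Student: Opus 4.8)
The statement is the special case $\phi(x)=x$, $\varphi(x)=-x$ of the functional $\mathfrak{G}$ studied above, since then $\mathfrak{G}(f)=\E_{X\sim\CP_g}f(X)-\E_{Y\sim\CP_r}f(Y)=\mathfrak{T}(f)$. The reason it needs a separate lemma is that this choice does not satisfy the hypotheses of Lemma~\ref{lemma_aaa}: here $\phi'(x)+\varphi'(x)\equiv 0$, so there is no point where $\phi'+\varphi'$ is strictly positive or strictly negative, and consequently $\mathfrak{T}(f+c)=\mathfrak{T}(f)$ for every constant $c$. The plan is therefore to reuse the Arzel\`a--Ascoli / diagonal compactness argument of Lemma~\ref{lemma_aaa}, but with one extra preliminary step that removes this additive-constant invariance.

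First I would record the lower bound. Writing $\mathfrak{T}(f)=-k\bigl(\E_{Y\sim\CP_r}\tfrac{1}{k}f(Y)-\E_{X\sim\CP_g}\tfrac{1}{k}f(X)\bigr)$ and noting that $\tfrac{1}{k}f$ is $1$-Lipschitz, the Kantorovich--Rubinstein duality (equivalently, the estimate used in Lemma~\ref{lemma_lower_bound}) gives $\E_{Y\sim\CP_r}\tfrac{1}{k}f(Y)-\E_{X\sim\CP_g}\tfrac{1}{k}f(X)\le W_1(\CP_r,\CP_g)$, hence $\mathfrak{T}(f)\ge -k\,W_1(\CP_r,\CP_g)>-\infty$. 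So $\inf_{\|f\|_{Lip}\le k}\mathfrak{T}(f)$ is finite; pick a minimizing sequence $\{f_n\}$. Since $\mathfrak{T}$ is unchanged by adding a constant, I may assume $f_n(0)=0$ for all $n$, and then $\|f_n\|_{Lip}\le k$ yields the uniform pointwise bound $|f_n(x)|\le k\lVert x\rVert$. Thus $\{f_n\}$ is pointwise bounded and, being uniformly $k$-Lipschitz, uniformly equicontinuous.

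Next I would run the same extraction as in Lemma~\ref{lemma_aaa}: enumerate the rational points of $\BR^n$, apply Bolzano--Weierstrass repeatedly and take the diagonal subsequence to obtain convergence at every rational point, then use the $k$-Lipschitz estimate to upgrade this to pointwise convergence on all of $\BR^n$, to a limit function $g$. The uniform Lipschitz bound passes to the limit, so $\|g\|_{Lip}\le k$ and $|g(x)|\le k\lVert x\rVert$. Finally, because the integrands are linear (hence continuous) and are dominated by $k\lVert X\rVert\in L^1(\CP_g)$ and $k\lVert Y\rVert\in L^1(\CP_r)$ under the moment assumptions $\E_{X\sim\CP_g}\lVert X\rVert<\infty$ and $\E_{Y\sim\CP_r}\lVert Y\rVert<\infty$, the dominated convergence theorem gives $\E_{X\sim\CP_g}g(X)=\lim_n\E_{X\sim\CP_g}f_n(X)$ and $\E_{Y\sim\CP_r}g(Y)=\lim_n\E_{Y\sim\CP_r}f_n(Y)$, so $\mathfrak{T}(g)=\lim_n\mathfrak{T}(f_n)=\inf_{\|f\|_{Lip}\le k}\mathfrak{T}(f)$; that is, $g$ is a global minimizer.

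The main obstacle is the non-coercivity: without normalizing the minimizing sequence (here by $f_n(0)=0$) one cannot bound it pointwise and the whole compactness argument collapses, and this is precisely the phenomenon Theorem~\ref{thm:esistence} notes when it observes that the Wasserstein minimizer exists but is not unique. Once the normalization is in place the argument is the same as in Lemma~\ref{lemma_aaa}, and in fact slightly cleaner, since linearity of the integrands lets one invoke dominated convergence directly rather than Fatou's lemma; a minor point to be careful about is that the diagonal extraction only delivers convergence on a dense set and must be combined with equicontinuity to reach every point.
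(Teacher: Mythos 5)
Your proposal is correct and follows essentially the same route as the paper's proof: exploit the translation invariance $\mathfrak{T}(f+C)=\mathfrak{T}(f)$ to normalize a minimizing sequence by $f_n(0)=0$, deduce pointwise boundedness from the Lipschitz bound, and run the diagonal compactness extraction of Lemma~\ref{lemma_aaa} to produce the minimizer. The only (harmless) variation is that you pass to the limit via dominated convergence using the moment assumptions, where the paper simply imitates the Fatou argument of Lemma~\ref{lemma_aaa}; both are valid here.
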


\begin{proof}
It is easy to find that for any $C\in \BR$, $\mathfrak{T}(f+C)=\mathfrak{T}(f)$. Similar to the previous lemma, we can get a series of functions $\left\{f _n\right\}_{n=1}^\infty$ such that $\lim_{n\to \infty}\mathfrak{T}(f_n)=inf(\mathfrak{T}(f))$. Without loss of generality, we assume that $f_n(0)=0, \forall n\in \BN^+$. Because $\|f_n\|_{Lip}\le k$, we can claim that for any $x\in \BR$, $\left\{f_n(x)|n\in \BR\right\}$ is bounded. Then we can imitate the method used in Lemma~\ref{lemma_aaa} and find the optimal function $\ff$ such that $\mathfrak{T}(\ff) =\inf\limits_{\|f\|_{Lip}\le k}\mathfrak{T}(f)$. 
\end{proof} 

\begin{lemma} \label{lemma_k_exist}
Let $\phi$ and $\varphi$ be two convex functions, whose domains are both $\BR$. If we further suppose that the support sets $\CS_r$ and $\CS_g$ are bounded. Then if $\phi$ and $\varphi$ satisfy the following properties: 
\begin{itemize}
    \item ${\phi}^\prime \ge0,{\varphi}^\prime\le0$;
    \item There is $a_0\in \BR$ such that $\phi^\prime(a_0)+\varphi^\prime(a_0)=0$.
\end{itemize} 
We have $\mathfrak{G}(f) = \E_{X\sim \CP_g}\phi(f(X))+\E_{Y\sim \CP_r}\varphi(f(Y))$, where $f$ is subject to $\begin{Vmatrix}f\end{Vmatrix}_{Lip}\le k$, has global minima. 

That is, $\exists \ff,  \,\,  s.t.$ 
\begin{itemize}
\item $\begin{Vmatrix}\ff\end{Vmatrix}_{Lip}\le  k$
\item $\forall f \,\, s.t.\,\, \begin{Vmatrix}f\end{Vmatrix}_{Lip}\le k $ , we have $\mathfrak{G}(\ff)\le \mathfrak{G}(f)$.
\end{itemize} 
\end{lemma}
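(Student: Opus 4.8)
The plan is to reduce everything to the lemmas already established, splitting on the sign behaviour of $h(t):=\phi'(t)+\varphi'(t)$. Since $\phi$ and $\varphi$ are convex, $\phi'$ and $\varphi'$ are non-decreasing, so $h$ is non-decreasing, and by hypothesis $h(a_0)=0$. Hence exactly one of the following holds: (I) $h$ takes a strictly positive value and a strictly negative value; (II) $h\le 0$ on all of $\BR$; (III) $h\ge 0$ on all of $\BR$ (cases (II) and (III) overlap only in the trivial situation $h\equiv 0$). I would treat these three cases separately; note that the boundedness of $\CS_r$ and $\CS_g$ is only needed in (II) and (III).

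In case (I) there are points $a_1,a_2$ with $\phi'(a_1)+\varphi'(a_1)>0$ and $\phi'(a_2)+\varphi'(a_2)<0$, so all the hypotheses of Lemma~\ref{lemma_aaa} are met and its conclusion applies verbatim, without using boundedness of the supports.

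Case (II) is where the boundedness hypothesis does the work. Since $h$ is non-decreasing, $h\le 0$, and $h(a_0)=0$, we get $h\equiv 0$ on $[a_0,\infty)$; together with the monotonicity of $\phi'$ and $\varphi'$ this forces $\phi'\equiv c$ and $\varphi'\equiv -c$ on $[a_0,\infty)$, where $c:=\phi'(a_0)\ge 0$. Thus $\phi(t)=ct+b_1$ and $\varphi(t)=-ct+b_2$ for $t\ge a_0$, with $b_1:=\phi(a_0)-ca_0$ and $b_2:=\varphi(a_0)+ca_0$; and since $t\mapsto ct+b_1$ (resp.\ $t\mapsto -ct+b_2$) is the tangent line at $a_0$ of the convex function $\phi$ (resp.\ $\varphi$), we have $\phi(t)\ge ct+b_1$ and $\varphi(t)\ge -ct+b_2$ for every $t\in\BR$. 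Consequently, for every $f$ with $\lVert f\rVert_{Lip}\le k$,
\begin{equation*}
\mathfrak{G}(f)\ \ge\ c\,\big(\E_{X\sim\CP_g}f(X)-\E_{Y\sim\CP_r}f(Y)\big)+b_1+b_2\ =\ c\,\mathfrak{T}(f)+b_1+b_2 ,
\end{equation*}
with equality whenever $f\ge a_0$ on $\CS_r\cup\CS_g$. Now take a minimizer $f^*$ of $\mathfrak{T}$ over $k$-Lipschitz functions, which exists by the Wasserstein lemma above (its finite-first-moment hypothesis being automatic here, since the supports are bounded). Because $\CS_r\cup\CS_g$ is bounded and $f^*$ is $k$-Lipschitz, $f^*$ is bounded on $\CS_r\cup\CS_g$, so there is a constant $C$ with $f^*+C\ge a_0$ there; moreover $\lVert f^*+C\rVert_{Lip}\le k$ and $\mathfrak{T}(f^*+C)=\mathfrak{T}(f^*)$ by translation invariance of $\mathfrak{T}$. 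Hence $\mathfrak{G}(f^*+C)=c\,\mathfrak{T}(f^*)+b_1+b_2$, while the displayed inequality (using $c\ge 0$) gives $c\,\mathfrak{T}(f^*)+b_1+b_2\le \inf_{\lVert f\rVert_{Lip}\le k}\mathfrak{G}(f)$; therefore $f^*+C$ is a global minimizer of $\mathfrak{G}$. Case (III) is entirely symmetric: now $\phi$ and $\varphi$ are affine on $(-\infty,a_0]$, the analogous inequality holds with equality whenever $f\le a_0$ on $\CS_r\cup\CS_g$, and one shifts a minimizer of $\mathfrak{T}$ downwards by a large constant to land in the affine region.

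The conceptual obstacle---and the reason Lemma~\ref{lemma_aaa} needed the extra assumptions on $\phi'+\varphi'$---is the lack of coercivity of $\mathfrak{G}$ along the translation direction $f\mapsto f+C$: when $\phi'+\varphi'$ has no strict sign on one side of $a_0$, a minimizing sequence can run off to $+\infty$ (or $-\infty$) uniformly on $\CS_r\cup\CS_g$, and the subsequence-extraction plus Fatou argument of Lemma~\ref{lemma_aaa} no longer produces a finite limit. The bounded-support hypothesis removes this difficulty by letting us slide the already-available Wasserstein minimizer into the affine region of $\phi$ and $\varphi$ without changing the translation-invariant functional $\mathfrak{T}$, so the existence question becomes an explicit construction rather than a compactness argument. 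The only routine checks are the monotonicity bookkeeping that pins down the affine regions and the ``equality whenever $f\ge a_0$ on $\CS_r\cup\CS_g$'' claim, both immediate from convexity and the definition of $a_0$.
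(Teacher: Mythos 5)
Your proof is correct, and in the degenerate cases it takes a genuinely different route from the paper's. Both arguments make the same case split and both rest on the same structural observation: when $\phi'+\varphi'$ has a fixed sign, it vanishes identically on one side of $a_0$, forcing $\phi$ and $\varphi$ to be affine (with opposite slopes $\pm c$) on that half-line. The paper then stays inside the compactness framework of Lemma~\ref{lemma_aaa}: it takes a minimizing sequence $\{f_n\}$, uses Lemma~\ref{lemma_infty} to bound $f_n(0)$ from one side, uses the translation identity $\mathfrak{G}(f)=\mathfrak{G}(f-f(0)-C)$ on the affine region to bound it from the other, and then reruns the diagonal-subsequence and Fatou argument. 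You instead observe that on the affine region $\mathfrak{G}$ collapses to $c\,\mathfrak{T}+\mathrm{const}$, invoke the already-proved existence of a minimizer $f^*$ of the Wasserstein functional $\mathfrak{T}$, and translate $f^*$ into the affine region (possible because $\CS_r\cup\CS_g$ is bounded and $\mathfrak{T}$ is translation-invariant), so that the global lower bound $\mathfrak{G}(f)\ge c\,\mathfrak{T}(f)+b_1+b_2$ is attained. Your construction is shorter and avoids repeating the lower-semicontinuity machinery, and it makes explicit that the degenerate case is literally a rescaled Wasserstein problem; the paper's version buys uniformity, reusing one compactness template across all cases. Your closing remark correctly identifies the role of the bounded-support hypothesis (restoring control along the non-coercive translation direction), which is exactly what the paper's normalization $f_n(0)\in[-C,C]$ is doing.
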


\begin{proof}
We have proved most conditions in previous lemmas. And we only have to consider the condition that for any $x\in \BR$, $\phi^\prime(x)+\varphi^\prime(x)\ge 0$ (or $\phi^\prime(x)+\varphi^\prime(x)\le0$) and there exists $a_1$ such that $\phi^\prime(a_1)+\varphi^\prime(a_1)>0$ (or $\phi^\prime(a_1)+\varphi^\prime(a_1)<0$). 

Without loss of generality, we assume that $\phi^\prime(x)+\varphi^\prime(x)\ge0$ for all $x$ and there exists $a_1$ such that $\phi^\prime(a_1)+\varphi^\prime(a_1)>0$. Then we know $\forall x\le a_0 ,\,\, \phi^\prime(x)+\varphi^\prime(x)=0$, which leads to $\forall x\le a_0 ,\,\, \phi^\prime(x)=-\varphi^\prime(x)$. Thus, for any $x\le a_0$, $0\le \phi^{\prime\prime}(x)=-\varphi^{\prime\prime}(x)\le 0$, which means $\forall x\le a_0,\,\, \phi(x)=-\varphi(x)=tx ,\,\, t\ge 0$. 
Similar to the previous lemmas, we can get a series of functions $\left\{f _n\right\}_{n=1}^\infty$ such that $\lim_{n\to \infty}\mathfrak{G}(f_n)=inf(\mathfrak{G}(f))$. Actually we can assume that for all $n\in \BN^+$, there is $f_n(0)\in [-C,C]$, where $C$ is a constant. In fact, it is not difficult to find $f_n(0)\le C$ with Lemma~\ref{lemma_infty}. On the other hand, when $C > k\cdot diam(\CS_r\cup \CS_g)+a_0$, then: if $f(0)<-C$, we have $f(X)<a_0$ for all $X\in \CS_r\cup \CS_g$. In this case, $\mathfrak{G}(f)=\mathfrak{G}(f-f(0)-C)$. This is the reason we can assume $f_n(0)\in [-C,C]$.  Because $\|f_n\|_{Lip}\le k$, we can assert that for any $x\in \BR$, $\left\{f_n(x)|n\in \BR\right\}$ is bounded. So we can imitate the method used in Lemma~\ref{lemma_aaa} and find the optimal function $\ff$ such that $\mathfrak{G}(\ff) =\inf\limits_{\|f\|_{Lip}\le k}\mathfrak{G}(f)$. 
\end{proof}

\begin{lemma}[Theorem~\ref{thm:esistence} Part I]
	Under the same assumption of Lemma~\ref{lemma_k_exist}, we have $\mathfrak{F}(f) = \E_{X\sim \CP_g}\phi(f(X))+\E_{Y\sim \CP_r}\varphi(f(Y))+\lambda\|f\|^\alpha_{Lip}$ with $\lambda>0$ and $\alpha>1$ has global minima.
\end{lemma}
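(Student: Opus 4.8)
The plan is to reduce this unconstrained minimization to the family of constrained minimizations already solved in Lemma~\ref{lemma_k_exist}, using the super-linear growth of the penalty $\lambda\lVert f\rVert_{Lip}^\alpha$ to pin the Lipschitz constant of a minimizing sequence inside a fixed ball. First I would invoke Lemma~\ref{lemma_lower_bound} to get constants $C_0$ and $D\ge 0$ (depending only on $\phi,\varphi,\CP_r,\CP_g$) with $\mathfrak{G}(f)\ge C_0 - D\,\lVert f\rVert_{Lip}$ for every $f$, so that
\begin{equation}
\mathfrak{F}(f)=\mathfrak{G}(f)+\lambda\lVert f\rVert_{Lip}^\alpha\ge C_0 - D\,\lVert f\rVert_{Lip}+\lambda\lVert f\rVert_{Lip}^\alpha .
\end{equation}
Since $\alpha>1$, the right-hand side, seen as a function of $t=\lVert f\rVert_{Lip}\in[0,\infty)$, is bounded below and tends to $+\infty$ as $t\to\infty$. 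Hence $\inf_f\mathfrak{F}(f)>-\infty$, and any minimizing sequence $\{f_n\}$ (with $\mathfrak{F}(f_n)\to\inf_f\mathfrak{F}(f)$) must satisfy $\lVert f_n\rVert_{Lip}\le K$ for some finite $K$, for otherwise $\mathfrak{F}(f_n)$ would be unbounded. In particular every $f_n$ lies in the Lipschitz ball $\{f:\lVert f\rVert_{Lip}\le K\}$ and $\mathfrak{G}(f_n)=\mathfrak{F}(f_n)-\lambda\lVert f_n\rVert_{Lip}^\alpha$ is bounded above.

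Next I would normalize the sequence as in the proof of Lemma~\ref{lemma_k_exist}. Adding a constant to $f$ does not change $\lVert f\rVert_{Lip}$, so $\mathfrak{F}(f+c)-\mathfrak{F}(f)=\mathfrak{G}(f+c)-\mathfrak{G}(f)$; using Lemma~\ref{lemma_infty} when $a_1$ and $a_2$ exist, and the constant-shift argument of Lemma~\ref{lemma_k_exist} (available since $\CS_r$ and $\CS_g$ are bounded) in the degenerate cases, I may assume without loss of generality that $|f_n(0)|\le C$ for all $n$. Combined with the uniform bound $\lVert f_n\rVert_{Lip}\le K$, the family $\{f_n\}$ is then uniformly bounded on every bounded set and equicontinuous.

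I would then reuse the compactness argument of Lemma~\ref{lemma_aaa} verbatim: a diagonal extraction over the rationals followed by the equicontinuity estimate produces a subsequence $\{g_n\}$ converging pointwise to some $g$ with $\lVert g\rVert_{Lip}\le K$. Passing to the limit, Fatou's lemma (applied as in Lemma~\ref{lemma_aaa} to the nonnegative functions $\phi(g_n(x))+k\phi'(a_0)\lVert x\rVert-C''$ and the analogue for $\varphi$) gives $\mathfrak{G}(g)\le\liminf_n\mathfrak{G}(g_n)$, while lower semicontinuity of the Lipschitz constant under pointwise convergence, $|g(x)-g(y)|=\lim_n|g_n(x)-g_n(y)|\le(\liminf_n\lVert g_n\rVert_{Lip})\,\lVert x-y\rVert$, gives $\lVert g\rVert_{Lip}\le\liminf_n\lVert g_n\rVert_{Lip}$ and hence $\lambda\lVert g\rVert_{Lip}^\alpha\le\liminf_n\lambda\lVert g_n\rVert_{Lip}^\alpha$ (the map $t\mapsto\lambda t^\alpha$ being continuous and non-decreasing on $[0,\infty)$). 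Adding the two bounds and using superadditivity of $\liminf$,
\begin{equation}
\mathfrak{F}(g)=\mathfrak{G}(g)+\lambda\lVert g\rVert_{Lip}^\alpha\le\liminf_n\bigl(\mathfrak{G}(g_n)+\lambda\lVert g_n\rVert_{Lip}^\alpha\bigr)=\inf_f\mathfrak{F}(f),
\end{equation}
so $g$ attains the infimum and $\mathfrak{F}$ has a global minimum.

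The main obstacle I anticipate is the very first step: converting the soft penalty into a hard, uniform Lipschitz bound along the minimizing sequence. This is exactly where the hypothesis $\alpha>1$ is essential, since it is what makes $\lambda t^\alpha$ dominate the linear lower bound $-Dt$ coming from Lemma~\ref{lemma_lower_bound}, so that sublevel sets of $\mathfrak{F}$ are Lipschitz-bounded. Once the sequence is confined to a fixed Lipschitz ball, the normalization of $f_n(0)$ and the subsequent limiting argument are a direct repeat of Lemmas~\ref{lemma_infty}, \ref{lemma_k_exist} and \ref{lemma_aaa}, with the only genuinely new (and routine) ingredient being the lower semicontinuity of $f\mapsto\lambda\lVert f\rVert_{Lip}^\alpha$.
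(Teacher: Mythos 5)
Your proposal is correct and follows essentially the same route as the paper's own proof: the coercive lower bound $\mathfrak{F}(f)\ge C_0-D\lVert f\rVert_{Lip}+\lambda\lVert f\rVert_{Lip}^\alpha$ from Lemma~\ref{lemma_lower_bound} (where $\alpha>1$ confines the minimizing sequence to a Lipschitz ball), the normalization and diagonal/Arzel\`a--Ascoli extraction from Lemmas~\ref{lemma_infty}--\ref{lemma_k_exist}, Fatou's lemma for $\mathfrak{G}$, and lower semicontinuity of $f\mapsto\lVert f\rVert_{Lip}$ under pointwise convergence combined with superadditivity of $\liminf$. The only cosmetic difference is that you prove the lower semicontinuity directly while the paper argues by contradiction; the content is identical.
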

\begin{proof}
	When $\|f\|_{Lip}=\infty$, it is trivial that $\mathfrak{F}(f)=\infty$. And when $\|f\|_{Lip}<\infty$,  combining  Lemma~\ref{lemma_lower_bound}, we have $\mathfrak{F}(f)=\mathfrak{G}(f)+\lambda\|f\|^\alpha_{Lip}\ge -\|f\|_{Lip}\varphi'(a_0)W_1(\CP_r,\CP_g) + \lambda\|f\|^\alpha_{Lip}$. When $\lambda>0$ and $\alpha>1$, the right term is a convex function about $\|f\|_{Lip}$, it has a lower bound. So we can find a sequence $\left\{f_n\right\}_{n=1}^\infty$ such that $\lim_{n\to \infty} \mathfrak{F}(f_n)=\inf_{f\in dom}\mathfrak{F}(f)$. It is no doubt that there exists a constant $C$  such that $\|f_n\|_{Lip}\le C$ for all $f_n$. Then it is not difficult to show for any point $x$, $ \left\{f_n(x) \right\}$ is bounded. So we can imitate the method used in main theorem to find the sequence $\left\{g_n\right\}$ such that $\left\{g_n\right\}\subseteq \left\{f_n\right\}$ and $\left\{g_n \right\}_{n=1}^\infty$ converge at every point $x$. Suppose $\lim_{n\to \infty}g_n=g$, then by Fatou's Lemma, we have $\mathfrak{G}(g)\le \varliminf_{n\to \infty}\mathfrak{G}(g_n)$. 
	
	Next, We prove that $\|g\|_{Lip}\le \varliminf_{n\to \infty}\|g_n\|_{Lip}$. If  the claim holds, then $\mathfrak{F}(g)=\mathfrak{G}(g)+\lambda\|g\|^\alpha_{Lip}
	\le \varliminf_{n\to \infty}\mathfrak{G}(g_n)+\varliminf_{n\to \infty}\lambda\|g_n\|^\alpha_{Lip}
	\le \varliminf_{n\to \infty}(\mathfrak{G}(g_n)+\lambda\|g_n\|^\alpha_{Lip})=\inf \mathfrak{F}(f)$. Thus, the global minima exists. 		
	In fact, if $\|g\|_{Lip}> \varliminf_{n\to \infty}\|g_n\|_{Lip}$, then there exist $x,y$ such that $\frac{|g(x)-g(y)|}{\|x-y\|}\ge \varliminf_{n\to \infty}\|g_n\|_{Lip} +\epsilon\ge \varliminf_{n\to \infty}\frac{|g_n(x)-g_n(y)|}{\|x-y\|}+\epsilon$. i.e. $|g(x)-g(y)|\ge \varliminf_{n\to\infty}|g_n(x)-g_n(y)|+\epsilon\|x-y\|=|g(x)-g(y)|+\epsilon\|x-y\|>|g(x)-g(y)|$. The contradiction tells us that $\|g\|_{Lip}\le\varliminf_{n\to\infty}\|g_n\|_{Lip}$. 
\end{proof}

\begin{lemma}[Theorem~\ref{thm:esistence} Part II]
Let $\phi$ and $\varphi$ be two convex functions, whose domains are both $\BR$. If $\phi$ or $\varphi$ is strictly convex, then the minimizer of $\mathfrak{F}(f) = \E_{X\sim \CP_g}\phi(f(X))+\E_{Y\sim \CP_r}\varphi(f(Y))+\lambda\|f\|^\alpha_{Lip}$ with $\lambda>0$ and $\alpha>1$ is unique (in the support of $\CS_r\cup\CS_g$). 
\end{lemma}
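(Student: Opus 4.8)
The plan is to establish uniqueness by a standard convexity-plus-strict-convexity argument on the restriction of the objective to the set $\CS_r \cup \CS_g$. First I would observe that $\mathfrak{F}$ is a convex functional of $f$: the maps $f \mapsto \E_{X\sim\CP_g}\phi(f(X))$ and $f\mapsto \E_{Y\sim\CP_r}\varphi(f(Y))$ are convex because $\phi,\varphi$ are convex and composition/expectation preserve convexity, while $f \mapsto \lambda\lVert f\rVert_{Lip}^\alpha$ is convex because $\lVert\cdot\rVert_{Lip}$ is a seminorm (hence convex) and $t\mapsto \lambda t^\alpha$ is convex and nondecreasing on $[0,\infty)$ for $\alpha>1$. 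Therefore the set of minimizers is convex, and if $\ff_1,\ff_2$ are two minimizers then their midpoint $g=\tfrac12(\ff_1+\ff_2)$ is also a minimizer, and every term in $\mathfrak{F}$ must be ``flat'' along the segment between them; in particular $\lVert g\rVert_{Lip}^\alpha = \tfrac12\lVert\ff_1\rVert_{Lip}^\alpha+\tfrac12\lVert\ff_2\rVert_{Lip}^\alpha$ and similarly for the two expectation terms (each individually, since each is convex and their sum is affine along the segment only if each is).

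Next I would use strict convexity of $\phi$ (the argument for $\varphi$ is symmetric) to pin down the values on $\CS_g$. Flatness of $\E_{X\sim\CP_g}\phi(f(X))$ along the segment means $\phi\bigl(\tfrac12 \ff_1(x)+\tfrac12\ff_2(x)\bigr) = \tfrac12\phi(\ff_1(x))+\tfrac12\phi(\ff_2(x))$ for $\CP_g$-almost every $x$; by strict convexity of $\phi$ this forces $\ff_1(x)=\ff_2(x)$ for $\CP_g$-a.e.\ $x$, and then by continuity of the optimal discriminators (they are Lipschitz, hence continuous) this extends to all of $\CS_g$. The remaining task is to propagate this agreement from $\CS_g$ to all of $\CS_r\cup\CS_g$. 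Here I would invoke Theorem~\ref{theorem_chain} together with Theorem~\ref{theorem_main}: each real sample $y\in\CS_r$ with $\nabla_{\!\ff(y)}\mathring{J_D}(y)<0$ is tied by a bounding relationship to some fake sample $x\in\CS_g$, i.e.\ $\ff(y)-\ff(x)=k(\ff)\lVert y-x\rVert$; since $k(\ff_1)=k(\ff_2)$ (forced by flatness of the Lipschitz term, using $\alpha>1$ so that $t\mapsto t^\alpha$ is strictly convex and the midpoint identity for $\lVert\cdot\rVert_{Lip}$ gives $\lVert\ff_1\rVert_{Lip}=\lVert\ff_2\rVert_{Lip}$) and $\ff_1(x)=\ff_2(x)$, the bounding equation determines $\ff_1(y)=\ff_2(y)$. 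For real samples with $\nabla_{\!\ff(y)}\mathring{J_D}(y)=0$, the pointwise optimality condition (from $\mathring{J_D}$, as $\ff$ also minimizes $\mathring{J_D}(x)$ pointwise subject to the Lipschitz coupling) together with strict monotonicity of $\varphi'$ — where it is strictly convex — pins the value down directly; the genuinely overlapping points in $\CS_r\cap\CS_g$ are already handled by the $\CS_g$ argument.

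The main obstacle I anticipate is the last propagation step: handling real samples $y$ for which the ``flatness'' argument alone does not immediately fix $\ff(y)$, namely those where $\nabla_{\!\ff(y)}\mathring{J_D}(y)=0$ and $\phi$ rather than $\varphi$ is the strictly convex one. One has to argue that either such a point lies on a bounding chain (so Theorem~\ref{theorem_chain} applies), or the KKT/pointwise-optimality structure of the constrained minimization $\min \mathring{J_D}(x)$ subject to $f(x)-f(y)\le k\lVert x-y\rVert$ forces the value; the delicate bookkeeping is making sure every point of $\CS_r\cup\CS_g$ falls into one of these cases. I would also want to double-check the measure-zero-to-everywhere upgrade is legitimate — it is, because both $\ff_1,\ff_2$ are continuous and $\CS_g$ is by definition the support, so a.e.\ agreement on $\CP_g$ plus continuity gives agreement on $\overline{\{\CP_g>0\}}=\CS_g$. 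Modulo this case analysis, the proof is a clean strict-convexity argument, so I expect the writeup to be short.
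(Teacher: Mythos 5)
Your argument is correct in outline but takes a genuinely different, and in fact more careful, route than the paper's. The paper's proof is a one-shot midpoint argument: take two distinct minimizers $f_1,f_2$, bound $\lVert\tfrac{f_1+f_2}{2}\rVert_{Lip}^{\alpha}\le\tfrac12(\lVert f_1\rVert_{Lip}^{\alpha}+\lVert f_2\rVert_{Lip}^{\alpha})$ via the seminorm property and convexity of $t\mapsto t^{\alpha}$, and then use strict convexity of $\phi$ to force $\mathfrak{F}\bigl(\tfrac{f_1+f_2}{2}\bigr)<\inf\mathfrak{F}$, a contradiction. That strict inequality, however, only materializes when $f_1\neq f_2$ on a set of positive $\CP_g$-measure; if the two minimizers agreed $\CP_g$-a.e.\ but differed on $\CS_r$ (and only $\phi$ is strictly convex), the paper's chain degenerates to equalities and yields no contradiction. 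Your two-stage argument — first pin down the values on $\CS_g$ by strict convexity plus continuity, then propagate to $\CS_r$ through the bounding relationships of Theorems~\ref{theorem_main} and~\ref{theorem_chain} together with $k(f_1)=k(f_2)$ — is precisely what is needed to close that case, so your route buys a genuinely stronger conclusion at the cost of importing the smoothness hypothesis under which those theorems are proved (the paper's midpoint argument needs no regularity beyond Lipschitz continuity).

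Two small repairs to your propagation step. First, the anchor point furnished by Theorem~\ref{theorem_chain} for $f_1$ at a given $y\in\CS_r$ need not coincide with the one for $f_2$, so "the bounding equation determines $f_1(y)=f_2(y)$" needs the two-sided argument: $f_1(y)=f_1(x)+k\lVert y-x\rVert=f_2(x)+k\lVert y-x\rVert\ge f_2(y)$ by the $k$-Lipschitz property of $f_2$, and symmetrically with the anchor $x'$ of $f_2$. Second, the obstacle you flag — real samples with $\nabla_{\!\ff(y)}\mathring{J}_D(y)=0$ outside the overlap — is vacuous under the standing Assumption~\eqref{eq_solvable}: for $y\in\CS_r\setminus\CS_g$ one has $\nabla_{\!\ff(y)}\mathring{J}_D(y)=\CP_r(y)\,\varphi'(\ff(y))<0$ since $\varphi'<0$, so every such point lies on a bounding chain and your delicate case analysis collapses to the two cases you have already handled.
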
 

\begin{proof}
Without loss of generality, we assume that $\phi$ is strictly convex. By the strict convexity of $\phi$, we have $\forall x,y \in \BR,\,\, \phi(\frac{x+y}{2})<\frac{1}{2}(\phi(x)+\phi(y))$. Assume $f_1$ and $f_2$ are two different minimizers of $\mathfrak{F}(f)$. 

First, we have
\begin{equation} \label{eq_k_lip_ff}
\begin{aligned}
    \Big\lVert \frac{f_1+f_2}{2} \Big\rVert_{Lip} 
    &= \sup_{x,y} \frac{\frac{f_1(x)+f_2(x)}{2}-\frac{f_1(y)+f_2(y)}{2}}{\lVert x - y \rVert}  \\ 
    &\leq \sup_{x,y} \frac{1}{2}\frac{ |f_1(x)-f_1(y)| + |f_2(x)-f_2(y)|}{\lVert x - y \rVert}  \\ 
    &\leq \frac{1}{2} \Big (\sup_{x,y} \frac{ |f_1(x)-f_1(y)|}{\lVert x - y \rVert} + \sup_{x,y} \frac{|f_2(x)-f_2(y)|}{\lVert x - y \rVert} \Big ) \\ 
    & = \frac{1}{2} (\lVert f_1 \rVert_{Lip} + \lVert f_2 \rVert_{Lip}) .
\end{aligned}
\end{equation}
And given $\lambda>0$ and $\alpha>1$, we further have
\begin{equation}
\begin{aligned}
    \lambda \Big\lVert \frac{f_1+f_2}{2} \Big\rVert^\alpha_{Lip} 
    &\leq  \lambda \Big(\frac{1}{2} (\lVert f_1 \rVert_{Lip} + \lVert f_2 \rVert_{Lip})\Big)^\alpha  \\
    & \leq \lambda \frac{1}{2} (\lVert f_1 \rVert^\alpha_{Lip} + \lVert f_2 \rVert^\alpha_{Lip}). 
\end{aligned}
\end{equation}

Let $\mathfrak{F}(f_1) = \mathfrak{F}(f_2) =\inf\mathfrak{F}(f)$. Then we have 
\begin{equation}\begin{aligned}\mathfrak{G}\Big(\frac{f_1+f_2}{2}\Big)
&=\E_{X\sim \CP_g}\phi\Big(\frac{f_1+f_2}{2}\Big)+\E_{Y\sim \CP_r}\varphi\Big(\frac{f_1+f_2}{2}\Big) + \lambda \Big\lVert \frac{f_1+f_2}{2} \Big\rVert^\alpha_{Lip} \\
&< \E_{X\sim \CP_g}\Big(\frac{\phi(f_1)+\phi(f_2)}{2}\Big)+\E_{Y\sim \CP_r}\varphi\Big(\frac{f_1+f_2}{2}\Big) + \lambda \Big\lVert \frac{f_1+f_2}{2} \Big\rVert^\alpha_{Lip} \\
&\le \E_{X\sim \CP_g}\Big(\frac{\phi(f_1)+\phi(f_2)}{2}\Big)+\E_{Y\sim \CP_r}\Big(\frac{\varphi(f_1)+\varphi(f_2)}{2}\Big) + \lambda \Big\lVert \frac{f_1+f_2}{2} \Big\rVert^\alpha_{Lip}\\
&\le \E_{X\sim \CP_g}\Big(\frac{\phi(f_1)+\phi(f_2)}{2}\Big)+\E_{Y\sim \CP_r}\Big(\frac{\varphi(f_1)+\varphi(f_2)}{2}\Big) + \lambda \frac{1}{2} (\lVert f_1 \rVert^\alpha_{Lip} + \lVert f_2 \rVert^\alpha_{Lip}) \\
&=\frac{1}{2}(\mathfrak{G}(f_1)+\mathfrak{G}(f_2)) = \inf\mathfrak{G}(f) \end{aligned}
\end{equation}

We get a contradiction $\mathfrak{G}(\frac{f_1+f_2}{2}) < \inf \mathfrak{G}(f) $, which implies that the minimizer of $\mathfrak{G}(f)$ is unique. 
\end{proof}

\subsection{Proof of Theorem~\ref{theorem_main}}
\label{app_proof}

Let $J_D = \mE_{x\sim \CP_g} [\phi(f(x))]+\mE_{x\sim \CP_r}[\varphi(f(x))]$. Let $\mathring{J}_D(x) = \CP_g(x) \phi(f(x)) + \CP_r(x) \varphi(f(x)).$ Clearly, $J_D= \int_{\BR^n}{\mathring{J}_D(x) d x}$. Let $J_D^*(k)=\min_{f\in \mathcal{F_\text{\textbf{k}-Lip}}} J_D=\min_{f\in \mathcal{F_\text{1-Lip}}, b}\mE_{x\sim \CP_g} [\phi(k\cdot f(x)+b)]+\mE_{x\sim \CP_r}[\varphi(k\cdot f(x) + b)]$. 

Let $k(f)$ denote the Lipschitz constant of $f$. Define $J = J_D + \lambda \cdot k(f)^2$ and $\ff = \argmin_{f} [J_D + \lambda \cdot k(f)^2]$. 

\begin{lemma} \label{lem_k_neq_zero}
	It holds $\pdv{\mathring{J}_D(x)}{\ff(x)}=0$ for all $x$, if and only if, $k(\ff)=0$. 
\end{lemma}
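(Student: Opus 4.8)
The plan is to prove the two directions of the equivalence separately, with the hard direction being ``$\pdv{\mathring{J}_D(x)}{\ff(x)}=0$ for all $x$ $\Rightarrow$ $k(\ff)=0$.'' The easy direction is essentially the observation that if $k(\ff)=0$, then $\ff$ is a constant function $\ff\equiv c$ on $\CS_r\cup\CS_g$, and $c$ minimizes $J_D = \CP_g$-mass $\cdot\,\phi(c) + \CP_r$-mass $\cdot\,\varphi(c)$; since there is no Lipschitz penalty to pay ($k=0$), first-order optimality in the scalar $c$ forces $\pdv{\mathring{J}_D(x)}{\ff(x)} = \CP_g(x)\phi'(c)+\CP_r(x)\varphi'(c)$ to integrate to zero, and under Assumption~\eqref{eq_solvable} (with $\phi'(a)+\varphi'(a)=0$ attainable and the monotonicity/convexity signs) one checks this actually forces the integrand to vanish pointwise at the right value $c=a$. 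So first I would pin down that $k(\ff)=0$ implies $\ff$ is constant equal to the balancing value $a$ of Assumption~\eqref{eq_solvable}, and then differentiate $\mathring{J}_D(x)$ in $\ff(x)$ to get $0$ everywhere.

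For the forward direction, suppose $\pdv{\mathring{J}_D(x)}{\ff(x)} = \CP_g(x)\phi'(\ff(x)) + \CP_r(x)\varphi'(\ff(x)) = 0$ for all $x$, and argue by contradiction that $k(\ff) > 0$. The idea is a ``shrink toward the balancing value'' perturbation: define $f_t(x) = a + (1-t)(\ff(x) - a)$ for small $t>0$, so that $k(f_t) = (1-t)\,k(\ff)$ and hence $\lambda k(f_t)^2 = (1-t)^2\lambda k(\ff)^2$ strictly decreases in $t$ at first order, with derivative $-2\lambda k(\ff)^2 < 0$ at $t=0$. At the same time, the first-order change of $J_D$ under this perturbation is $\frac{d}{dt}\big|_{t=0} J_D(f_t) = -\int \big(\CP_g(x)\phi'(\ff(x)) + \CP_r(x)\varphi'(\ff(x))\big)(\ff(x)-a)\,dx = 0$, precisely because the hypothesis says the integrand-without-$(\ff(x)-a)$ factor is identically zero. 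So the derivative of the full objective $J = J_D + \lambda k(f)^2$ at $t=0$ along this path is $-2\lambda k(\ff)^2 < 0$, contradicting the optimality of $\ff$. Therefore $k(\ff)=0$.

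The main obstacle I anticipate is making the perturbation argument rigorous at the level of the Lipschitz constant: I need that $k(f_t) = (1-t)k(\ff)$ exactly (this is clear since $f_t - a$ is a scalar multiple of $\ff - a$), and that $t\mapsto J_D(f_t)$ is differentiable at $0$ with the derivative obtained by differentiating under the integral sign — this requires a dominated-convergence / uniform-integrability justification using convexity of $\phi,\varphi$ and the integrability assumptions $\E\|X\|<\infty$, $\E\|Y\|<\infty$ invoked in the proof of Theorem~\ref{thm:esistence}, together with the boundedness of $\ff$ on the relevant support. A secondary subtlety is the easy direction: one must use the full strength of Assumption~\eqref{eq_solvable} — in particular $\phi'>0 > \varphi'$ and the existence of the balancing point $a$ — to conclude that the pointwise equation $\CP_g(x)\phi'(c)+\CP_r(x)\varphi'(c)=0$ at a \emph{common} constant $c$ can only hold with $c=a$ and only when the equation degenerates appropriately; I would handle this by noting $\phi'(\ff(x))/(-\varphi'(\ff(x))) = \CP_r(x)/\CP_g(x)$ must be consistent across all $x$, which with a constant $\ff$ forces the density ratio to be constant, hence (both being probability densities) $\CP_r=\CP_g$ and $c=a$. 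A cleaner route for this lemma alone, which I would actually follow, is to bypass the density-ratio discussion: $k(\ff)=0$ makes $\ff$ a free scalar with \emph{no} penalty, so stationarity in that scalar gives $\int \pdv{\mathring{J}_D(x)}{\ff(x)}\,dx = 0$; then monotone-convexity of $\phi,\varphi$ plus the constancy of $\ff(x)$ (so $\phi'(\ff(x))$ and $\varphi'(\ff(x))$ are constants) turns the vanishing \emph{integral} into a vanishing \emph{integrand} after identifying the optimal constant as $a$.
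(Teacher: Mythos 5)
Your forward direction is sound and is essentially a concrete, rigorous instantiation of what the paper does implicitly: the paper writes the stationarity condition $\pdv{J}{k(\ff)} = \pdv{J_D^*}{k(\ff)} + 2\lambda\, k(\ff) = 0$ for the reduced objective $J_D^*(k)=\min_{f\in\mathcal{F}_{k\text{-Lip}}}J_D$ and argues that pointwise vanishing of $\pdv{\mathring{J}_D(x)}{\ff(x)}$ forces $\pdv{J_D^*}{k(\ff)}=0$, hence $k(\ff)=0$. Your shrink-toward-a-constant path $f_t=a+(1-t)(\ff-a)$ makes that envelope computation explicit and correctly exploits $k(f_t)=(1-t)k(\ff)$, so the penalty drops at first order while $J_D$ is stationary; modulo the differentiation-under-the-integral justification you already flag, this half is fine.

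The backward direction, however, has a genuine gap, located exactly at the step you describe as turning the vanishing integral into a vanishing integrand. From $k(\ff)=0$ you get $\ff\equiv c$ and, by optimality over constant functions, $\phi'(c)+\varphi'(c)=0$, i.e.\ $\int\pdv{\mathring{J}_D(x)}{\ff(x)}\,dx=0$. But the integrand is $\CP_g(x)\phi'(c)+\CP_r(x)\varphi'(c)=\phi'(c)\bigl(\CP_g(x)-\CP_r(x)\bigr)$, a signed quantity: its integral vanishes for \emph{any} pair of probability densities, and it vanishes pointwise if and only if $\CP_r=\CP_g$. Monotonicity and convexity of $\phi,\varphi$ buy you nothing here, and your density-ratio discussion assumes the pointwise equation in order to analyze when it can hold, which is circular. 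What this direction really asserts is that an optimal \emph{constant} discriminator forces $\CP_r=\CP_g$ (this is precisely what Lemma~\ref{lemma_pair_f} later extracts from the present lemma), and that cannot follow from stationarity over the one-parameter family of constants alone. The missing ingredient is a comparison against non-constant competitors: if the integrand is nonzero somewhere then, since it integrates to zero, it takes both signs, and a perturbation of Lipschitz constant $k$ that lowers $f$ where $\pdv{\mathring{J}_D(x)}{\ff(x)}>0$ and raises it where it is negative decreases $J_D$ at rate proportional to $k$ while the penalty costs only $\lambda k^2$, contradicting optimality of $\ff$ for small $k$. This is the content hidden in the paper's assertion that a nonzero pointwise derivative would make $\pdv{J_D^*}{k(\ff)}\neq0$; your proposal omits it and substitutes an inference that is false as stated.
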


\begin{proof} $ $ \newline \newline	
	(\rnum{1}) If $\pdv{\mathring{J}_D(x)}{\ff(x)}=0$ holds for all $x$, then $k(\ff)=0$. 
 
	For the optimal $\ff$, it holds that $\pdv{J}{k(\ff)} = \pdv{J_D^*}{k(\ff)} + 2\lambda\cdot k(\ff) = 0$. 
	
	$\pdv{\mathring{J}_D(x)}{\ff(x)}=0$ for all $x$ implies $\pdv{J_D^*}{k(\ff)}=0$. Thus we conclude that $k(\ff) = 0$.

	(\rnum{2}) If $k(\ff)=0$, then $\pdv{\mathring{J}_D(x)}{\ff(x)}=0$ holds for all $x$. 
	
	For the optimal $\ff$, it holds that $\pdv{J}{k(\ff)} = \pdv{J_D^*}{k(\ff)} + 2\lambda\cdot k(\ff) = 0$. 
	
	$k(\ff)=0$ implies $\pdv{J_D^*}{k(\ff)}=0$. $k(\ff)=0$ also implies $\,\forall x, y, \ff(x)=\ff(y)$. 
	
	Given $\,\forall x, y, \ff(x)=\ff(y)$, if there exists some point $x$ such that $\pdv{\mathring{J}_D(x)}{\ff(x)}\neq0$, then it is obvious that $\pdv{J_D^*}{k(\ff)}\neq0$. 
	
	It is contradictory to $\pdv{J_D^*}{k(\ff)}=0$. Thus we have $\,\forall x, \pdv{\mathring{J}_D(x)}{\ff(x)}=0$.
\end{proof}

\begin{lemma} \label{lemma_pair_f}
If $\,\,\forall x, y, \ff(x)=\ff(y)$, then $\CP_r=\CP_g$.
\end{lemma}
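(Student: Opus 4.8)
The plan is to reduce everything to the first-order stationarity condition already isolated in Lemma~\ref{lem_k_neq_zero}. The hypothesis $\forall x, y,\ \ff(x) = \ff(y)$ says exactly that $\ff$ is a constant function, so $k(\ff) = 0$; then Lemma~\ref{lem_k_neq_zero} (the implication $k(\ff)=0 \Rightarrow \pdv{\mathring{J}_D(x)}{\ff(x)}=0$ for all $x$) applies directly and gives pointwise stationarity of $\mathring{J}_D$ in $\ff(x)$ at every point.

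Next I would simply unfold this condition. Since $\mathring{J}_D(x) = \CP_g(x)\phi(\ff(x)) + \CP_r(x)\varphi(\ff(x))$, differentiating in $\ff(x)$ yields $\CP_g(x)\phi'(\ff(x)) + \CP_r(x)\varphi'(\ff(x)) = 0$ for all $x$; and writing $\ff \equiv c$ for the constant value, this becomes
\begin{equation*}
\CP_g(x)\,\phi'(c) + \CP_r(x)\,\varphi'(c) = 0 \qquad \text{for all } x,
\end{equation*}
where $\CP_r,\CP_g$ are read as densities against a common dominating measure. At this point I would invoke the standing assumptions $\phi'(c) > 0$ and $\varphi'(c) < 0$ (the monotonicity hypotheses in force throughout Theorem~\ref{theorem_main}), which in particular make the ratio $-\phi'(c)/\varphi'(c)$ a well-defined positive constant.

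The rest is purely algebraic. From the displayed identity, $\CP_g(x) > 0$ forces $\CP_r(x) = (-\phi'(c)/\varphi'(c))\,\CP_g(x) > 0$ and conversely, so $\CS_r = \CS_g$ and on this common support $\CP_r(x)/\CP_g(x)$ equals the fixed constant $-\phi'(c)/\varphi'(c)$; integrating both densities over the common support then gives $1 = (-\phi'(c)/\varphi'(c))\cdot 1$, so that constant equals $1$ and hence $\CP_r = \CP_g$ almost everywhere. The argument is short, and the only points needing care are bookkeeping ones — reading the ``pointwise'' conclusion of Lemma~\ref{lem_k_neq_zero} as an equality of densities against a common base measure so the ratio and the integration step are legitimate, and noting that the constant $c$ automatically sits in the region where $\phi'>0$ and $\varphi'<0$ by the hypotheses already assumed — so I do not anticipate any genuine obstacle.
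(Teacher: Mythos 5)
Your proposal is correct and follows essentially the same route as the paper's own proof: constancy of $\ff$ gives $k(\ff)=0$, Lemma~\ref{lem_k_neq_zero} yields $\CP_g(x)\phi'(c)+\CP_r(x)\varphi'(c)=0$ pointwise, and the sign conditions on $\phi'$ and $\varphi'$ force the density ratio to be a positive constant, which normalization pins to $1$. Your write-up is in fact slightly more careful than the paper's at the final step (the paper dismisses it as "straightforward"), but the argument is the same.
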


\begin{proof} 
$\,\,\forall x, y, \ff(x)=\ff(y)$ implies $k(\ff)=0$. According to Lemma \ref{lem_k_neq_zero}, for all $x$ it holds $\pdv{\mathring{J}_D(x)}{\ff(x)}  = 0$, i.e., $\CP_g(x)\pdv{\phi(\ff(x))}{\ff(x)} $ $ +\CP_r(x)\pdv{\varphi(\ff(x))}{\ff(x)} = 0$. Thus, $\frac{\CP_g(x)}{\CP_r(x)}=-\frac{\pdv{\varphi(\ff(x))}{\ff(x)}}{\pdv{\phi(\ff(x))}{\ff(x)}}$. That is, $\frac{\CP_g(x)}{\CP_r(x)}$ has a constant value, which straightforwardly implies $\CP_r=\CP_g$.
\end{proof}

\begin{proof}[\textbf{Proof of Theorem \ref{theorem_main}}] $ $ \newline \newline
(a): Let $k$ be the Lipschitz constant of $\ff$. Consider $x$ with $\pdv{\mathring{J}_D(x)}{\ff(x)} \neq 0$. Define $k(x) = \sup_y \frac{|f(y)-f(x)|}{\lVert y-x \rVert}$. 

$\quad$ (i) If $\forall \delta$ s.t. $\forall \epsilon$ there exist $z, w \in B(x, \epsilon)$ such that $\frac{|\ff(z)-\ff(w)|}{\lVert z-w\rVert}\geq k - \delta$, which means there exists $t$ such that $f'(t)\geq k-\delta$, because $\frac{|\ff(z)-\ff(w)|}{\lVert z-w\rVert} = \frac{\int^z_{w}{\ff'(t)d t}}{\lVert z-w\rVert}$. Let $\epsilon\to 0$, we have $t\to x$. Then $|\ff'(t)| \to |\ff'(x)|$. Let $\delta \to 0$, we have $(k-\delta) \to k$. Assume $\ff$ is smooth, we have that $|f'(x)|=k$, which means there exists a $y$ such that $|\ff(y)-\ff(x)|=k\lVert y-x \rVert$. 

$\quad$ (ii) Assume that $\exists \delta$ s.t. $\exists \epsilon$ and for all $z, w \in B(x, \epsilon)$,  $\frac{|\ff(z)-\ff(w)|}{\lVert z-w\rVert} < k - \delta$.  Consider the following condition, for all $\delta_2$ and $\epsilon_2\in(0, \epsilon/2)$, $\exists y\in B(x, \epsilon_2)$, such that $k(y) > k-\delta_2$. Then there exists a sequence of $\{y_n\}^\infty_{n=1}$ s.t. $\lim_{n\to \infty} \frac{|f(y)-f(y_n)}{\lVert y-y_n \rVert} = k(y)$. Then there exists a $y'$ such that $\frac{|f(y)-f(y')}{\lVert y-y' \rVert} \geq k - \delta_2$. According to the assumption, we have $\lVert y-y'\rVert\geq \frac{\epsilon}{2}$. Then $k(x) \geq \frac{|\ff(x)-\ff(y)|}{\lVert x-y\rVert}\geq \frac{|\ff(y)-\ff(y')|-|\ff(x)-\ff(y)|}{\rVert x-y\lVert + \lVert y-y'\rVert} \geq \frac{|\ff(y)-\ff(y')|-k\rVert x-y\lVert}{\rVert x-y\lVert + \lVert y-y'\rVert} \geq (k - \delta_2) \frac{\lVert y-y'\rVert}{\rVert x-y\lVert + \lVert y-y'\rVert} - k\frac{\rVert x-y\lVert}{\rVert x-y\lVert + \lVert y-y'\rVert}\geq (1-\frac{\epsilon_2}{\epsilon_2+\lVert y-y'\rVert})(k - \delta_2)-k\frac{\epsilon_2}{\lVert y-y'\rVert}\geq (1-\frac{\epsilon_2}{\epsilon_2+\lVert y-y'\rVert})(k-\delta_2)-k\frac{\epsilon_2}{\lVert y-y'\rVert}.$  Let $\epsilon_2 \to 0$ and $\delta_2 \to 0$. We get $k(x)=k$, which means there exists a $y$ such that $|\ff(y)-\ff(x)|=k\lVert y-x \rVert$. 

$\quad$ (iii) Now we can assume $\exists \delta_2$ s.t. $\exists \epsilon_2$ and for all $y\in B(x, \epsilon_2)$, such that $k(y) \leq k-\delta_2$. If $\pdv{\mathring{J}_D(x)}{\ff(x)} \neq 0$, without loss of generality, we can assume $\pdv{\mathring{J}_D(x)}{\ff(x)} >0$. Then, for all $y\in B(x, \epsilon_2)$, we have $\pdv{\mathring{J}_D(y)}{\ff(y)} >0$, as long as $\epsilon_2$ is small enough. Now we change the value of $\ff(y)$ for $y \in B(x,\epsilon_2)$. Let $g(y)=\begin{cases}
\ff(y)-\frac{\epsilon_2}{N}(1-\frac{\lVert x-y \rVert}{\epsilon_2}), \quad y \in B(x,\epsilon_2); \\
\ff(y) \quad\quad  \text{otherwise}.
\end{cases}$. 
Because $\pdv{\mathring{J}_D(y)}{\ff(y)} >0$, $\forall y \in B(x,\epsilon_2)$, when $N$ is sufficiently large, it is not difficult to show $J_D(g) < J_D(\ff)$. We next verify that $\lVert g \rVert_{Lip} \leq k$. For any $y,z$, if $y,z \notin B(x, \epsilon_2)$, then $\frac{|g(y)-g(z)|}{\lVert y-z\rVert} = \frac{|\ff(y)-\ff(z)|}{\lVert y-z\rVert} < k$. If $y \in B(x, \epsilon_2)$, $z \notin B(x, \epsilon_2)$, then $\frac{|g(y)-g(z)|}{\lVert y-z \rVert} \leq  \frac{|(\ff(y)-\ff(z)| + \frac{\epsilon_2}{N}(1-\frac{\lVert x-y \rVert}{\epsilon_2}))}{{\lVert y-z\rVert}} \leq \frac{|\ff(y)-\ff(z)|}{\lVert y-z\rVert} + \frac{\frac{\epsilon_2}{N}(1-\frac{\lVert x-y \rVert}{\epsilon_2})}{\epsilon_2 - \lVert x-y \rVert} = \frac{|(\ff(y)-\ff(z)|}{{\lVert y-z\rVert}} + \frac{1}{N} \leq k(y) + \frac{1}{N} \leq k -\delta_2 + \frac{1}{N} < k$ (when $N \gg \frac{1}{\delta_2}$). If $y, z\in B(x, \epsilon)$, then $\frac{|g(y)-g(z)|}{\lVert y-z \rVert} \leq \frac{|\ff(y)-\ff(z)|+|\frac{\epsilon_2}{N}(1-\frac{\lVert x-y \rVert}{\epsilon_2})-\frac{\epsilon_2}{N}(1-\frac{\lVert x-z \rVert}{\epsilon_2})|}{\lVert y-z \rVert} = \frac{|\ff(y)-\ff(z)|}{\lVert y-z \rVert} + \frac{\frac{\epsilon_2}{N}(\frac{\lVert x-y \rVert-\lVert x-z \rVert}{\epsilon_2})|}{\lVert y-z \rVert} \leq \frac{|\ff(y)-\ff(z)|}{\lVert y-z \rVert} + \frac{1}{N}\frac{\lVert y-z \rVert}{\lVert y-z \rVert} = \frac{|\ff(y)-\ff(z)|}{\lVert y-z \rVert} + \frac{1}{N} \leq k - \delta_2 + \frac{1}{N} < k$ (when $N \gg \frac{1}{\delta_2}$). So, we have $\lVert g \rVert_{Lip} \leq k$. But we have $J_D(g) < J_D(\ff)$. The contradiction tells us that there must exists a $y$ such that $|\ff(y)-\ff(x)| = k \lVert y-x\rVert$. 


(b): For $x \in \CS_r\cup\CS_g - \CS_r\cap\CS_g $, assuming $\CP_g(x)\neq0$ and $\CP_r(x)=0$, we have $\pdv{\mathring{J}_D(x)}{\ff(x)} = \CP_g(x)\pdv{\phi(\ff(x))}{\ff(x)} +\CP_r(x)\pdv{\varphi(\ff(x))}{\ff(x)}= \CP_g(x)\pdv{\phi(\ff(x))}{\ff(x)}>0$, because $\CP_g(x) >0$ and $\pdv{\phi(\ff(x))}{\ff(x)} >0$. Then according to (a), there must exist a $y$ such that $|\ff(y)-\ff(x)| =k(\ff) \cdot \lVert y - x \rVert$. The other situation can be proved in the same way. 

(c): According to Lemma \ref{lemma_pair_f}, in the situation that $\CP_r \neq \CP_g$, for the optimal $\ff$, there must exist at least one pair of points $x$ and $y$ such that $y\neq x$ and $\ff(x)\neq \ff(y)$. It also implies that $k(\ff)>0$. Then according to Lemma~\ref{lem_k_neq_zero}, there exists a point $x$ such that $\pdv{\mathring{J}_D(x)}{\ff(x)}\neq 0$. According to (a), there exists $y$ with $y\neq x$ satisfying that $|\ff(y)-\ff(x)| =k(\ff) \cdot \lVert y - x \rVert$. 


(d): In Nash equilibrium state, it holds that, for any $x \in \CS_r \cup \CS_g$, $\pdv{J}{k(f)} = \pdv{J_D^*}{k(f)} + 2\lambda\cdot k(f) = 0$ and $\pdv{\mathring{J}_D(x)}{f(x)}\pdv{f(x)}{x}=0$. We claim that in the Nash equilibrium state, the Lipschitz constant $k(f)$ must be 0.
If $k(f) \neq 0$, 
according to Lemma \ref{lem_k_neq_zero}, there must exist a point $\hat{x}$ such that $\pdv{{\mathring{J_D}(\hat{x}})}{f(\hat{x})}\neq0$. And according to (a), it must hold that $\exists\hat{y}$ fitting $|f(\hat{y})-f(\hat{x})| =k(f)\cdot \lVert \hat{x} - \hat{y} \rVert$. According to Theorem \ref{theorem_gradient}, we have $\big\Vert\pdv{f(\hat{x})}{\hat{x}}\big\Vert = k(f) \neq 0$. This is contradictory to that $\pdv{\mathring{J}_D(\hat{x})}{f(\hat{x})}\pdv{f(\hat{x})}{\hat{x}}=0$.
Thus $k(f)=0$. That is, $\forall x \in \CS_r \cup \CS_g$, $\pdv{f(x)}{x}=0$, which means $\,\forall x, y, f(x)=f(y)$. According to Lemma \ref{lemma_pair_f}, $\,\forall x, y, f(x)=f(y)$ implies $\CP_r=\CP_g$. Thus $\CP_r=\CP_g$ is the only Nash equilibrium in our system. 
\end{proof}

\vspace{-5pt}
\begin{remark}
For the Wasserstein distance, $\nabla_{\!\ff(x)} \mathring{J_D}(x)=0$ if and only if $\CP_r(x)=\CP_g(x)$.
For the Wasserstein distance, penalizing the Lipschitz constant also benefits: at the convergence state, it will hold $\pdv{\ff(x)}{x}=0$ for all $x$. 
\end{remark}

\subsection{Proof of Theorem \ref{theorem_chain}}

\begin{lemma}\label{lemma_chain}
Let $k$ be the Lipschitz constant of $f$. If $f(a)-f(b) = k \Vert a-b \Vert$ and $f(b)-f(c) = k \Vert b-c \Vert$, then $f(a)-f(c) = k\Vert a - c\Vert$ and $(a, f(a)), (b, f(b)), (c, f(c))$ lies in the same line.
\end{lemma}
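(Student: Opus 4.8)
The plan is to play the two hypothesis equalities against the Lipschitz bound and the triangle inequality. First I would add the hypotheses to get $f(a)-f(c) = k\big(\lVert a-b\rVert + \lVert b-c\rVert\big)$. Since $f$ is $k$-Lipschitz, $f(a)-f(c)\le \lvert f(a)-f(c)\rvert \le k\lVert a-c\rVert$; dividing by $k$ (the case $k=0$ being degenerate, and excluded in the intended application, where a genuine bounding relationship forces $k(\ff)>0$) then yields $\lVert a-b\rVert + \lVert b-c\rVert \le \lVert a-c\rVert$. Together with the triangle inequality $\lVert a-c\rVert \le \lVert a-b\rVert + \lVert b-c\rVert$ this forces $\lVert a-b\rVert + \lVert b-c\rVert = \lVert a-c\rVert$, and hence $f(a)-f(c) = k\lVert a-c\rVert$, which is the first claim.

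Next I would convert the equality case of the Euclidean triangle inequality into geometric information about $a,b,c$. The identity $\lVert (a-b)+(b-c)\rVert = \lVert a-b\rVert + \lVert b-c\rVert$ holds in $\BR^n$ iff $a-b$ and $b-c$ are nonnegatively proportional; handling the sub-cases $a=b$, $b=c$, and strict proportionality, this is equivalent to $b = (1-s)\,a + s\,c$ for some $s\in[0,1]$. In particular $a$, $b$, $c$ are collinear in $\BR^n$ with $b$ on the segment $[a,c]$.

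Finally I would pin down $f(b)$ and lift everything to $\BR^{n+1}$. From $b=(1-s)\,a + s\,c$ we get $\lVert a-b\rVert = s\,\lVert a-c\rVert$, so the hypothesis $f(a)-f(b) = k\lVert a-b\rVert$ becomes $f(a)-f(b) = s\,k\lVert a-c\rVert = s\big(f(a)-f(c)\big)$, i.e. $f(b) = (1-s)\,f(a) + s\,f(c)$. Hence $\big(b,f(b)\big) = (1-s)\big(a,f(a)\big) + s\big(c,f(c)\big)$, so the three points $(a,f(a)),(b,f(b)),(c,f(c))$ lie on one line --- indeed $\big(b,f(b)\big)$ is a convex combination of the other two.

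The only non-routine step is extracting collinearity of $a,b,c$ from equality in the triangle inequality; the rest is bookkeeping, modulo a little care with the degenerate cases $a=c$ and $k=0$.
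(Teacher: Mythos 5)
Your proof is correct and follows essentially the same route as the paper's: sum the two hypotheses, use the $k$-Lipschitz bound to force equality in the Euclidean triangle inequality, deduce collinearity of $a,b,c$, and then lift to collinearity of the graph points. You are somewhat more careful than the paper --- explicitly writing $b=(1-s)a+sc$, deriving $f(b)=(1-s)f(a)+sf(c)$, and flagging the degenerate cases $k=0$ and $a=c$, which the paper glosses over --- but the underlying argument is identical.
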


\begin{proof}
$f(a)-f(c)=f(a)-f(b)+f(b)-f(c)=k \Vert a-b \Vert + k \Vert b-c \Vert \geq  k \Vert a-c \Vert$. Because  the Lipschitz constant of $f$ is $k$, we have $f(a)-f(c)\leq  k \Vert a-c \Vert$. Thus $f(a)-f(c)=k \Vert a-c \Vert$. Because the triangle equality holds, we have $a, b, c$ is in the same line. Furthermore, because $f(a)-f(b) = k \Vert a-b \Vert$, $f(b)-f(c) = k \Vert b-c \Vert$ and $f(a)-f(c) = k\Vert a - c\Vert$, we have $(a, f(a)), (b, f(b)), (c, f(c))$ lies in the same line. 
\end{proof}

\begin{lemma}\label{lemma_gl}
For any $x$ with $\pdv{{\mathring{J_D}(x})}{\ff(x)} > 0$, there exists a $y$ with $\pdv{{\mathring{J_D}(x})}{\ff(x)} < 0$ such that $\ff(y)-\ff(x) = k(\ff) \Vert y- x \Vert$. 

\hspace{50pt} For any $y$ with $\pdv{{\mathring{J_D}(y})}{\ff(y)} < 0$, there exists a $x$ with $\pdv{{\mathring{J_D}(x})}{\ff(x)} > 0$ such that $\ff(y)-\ff(x) = k(\ff) \Vert y- x \Vert$. 
\end{lemma}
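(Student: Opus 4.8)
The plan is to prove the first assertion; the second follows by applying the first to $-\ff$ after swapping the roles of $\CP_r$ and $\CP_g$ (and correspondingly $\phi$, $\varphi$). Throughout write $k=k(\ff)$, and note that since $\ff$ minimizes $J_D+\lambda\,k(f)^2$ it also minimizes $J_D$ over $\{f:k(f)\le k\}$; hence $\ff$ solves a \emph{convex} constrained problem, so optimality means no feasible (i.e.\ Lipschitz-preserving) perturbation can strictly decrease $J_D$. I will invoke this three times.

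First I show $A:=\{\,y\in\CS_r\cup\CS_g:\ y\neq x,\ \ff(y)-\ff(x)=k\lVert y-x\rVert\,\}$ is nonempty. By Theorem~\ref{theorem_main}(a) some $y_0\neq x$ in the support has $|\ff(y_0)-\ff(x)|=k\lVert y_0-x\rVert$; if no such $y_0$ had the $+$ sign, then — exactly as in case (iii) of the proof of Theorem~\ref{theorem_main}(a) — there would be $\epsilon,\delta>0$ with $\ff(w)-\ff(z)\le(k-\delta)\lVert w-z\rVert$ for all $z\in B(x,\epsilon)$ and all $w$, so the downward bump used there stays $k$-Lipschitz while $\pdv{\mathring{J_D}(\cdot)}{\ff(\cdot)}>0$ near $x$ makes it strictly lower $J_D$, contradicting optimality. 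Thus $A\neq\emptyset$. By Theorem~\ref{theorem_gradient}, $\nabla\ff$ equals $k\,(y-x)/\lVert y-x\rVert$ on each segment $[x,y]$, $y\in A$; evaluating at $x$ forces all members of $A$ onto a single ray from $x$, so $\bigcup_{y\in A}[x,y]$ is one segment. Since $\lVert y-x\rVert=(\ff(y)-\ff(x))/k$ is bounded on the bounded closed support, $A$ is compact; let $y^{\star}\in A$ be farthest from $x$, so $\bigcup_{y\in A}[x,y]=[x,y^{\star}]$. If some $z\neq y^{\star}$ had $\ff(z)-\ff(y^{\star})=k\lVert z-y^{\star}\rVert$, Lemma~\ref{lemma_chain} would give $z\in A$ with $\lVert z-x\rVert=\lVert z-y^{\star}\rVert+\lVert y^{\star}-x\rVert>\lVert y^{\star}-x\rVert$, impossible; so $y^{\star}$ carries no tight upper Lipschitz constraint, and by the same chaining neither does any interior point of $[x,y^{\star}]$ except from points of $[x,y^{\star}]$ itself.

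Now I claim some $y\in A$ has $\pdv{\mathring{J_D}(y)}{\ff(y)}<0$, which (since $\ff(y)-\ff(x)=k\lVert y-x\rVert$) finishes the proof; in fact $y=y^{\star}$ works. Suppose instead $\pdv{\mathring{J_D}(z)}{\ff(z)}\ge0$ for every $z\in A$. On $[x,y^{\star}]$, $\ff$ is affine with slope $k$; lower $\ff$ by a constant $\eta>0$ on a thin tubular neighborhood $T$ of $[x,y^{\star}]$ and transition smoothly back to $\ff$ as one leaves $T$. The transition perturbs $\nabla\ff$ only transversely to the segment, where $C^{1,1}$ regularity of $\ff$ makes $\lVert\nabla\ff\rVert$ fall below $k$ by an amount comparable to the tube radius, leaving room if $\eta$ is small relative to that radius; feasibility across the cap ends of $T$ uses precisely that nothing outside $T$ upper-bounds a point of $[x,y^{\star}]$, which a compactness argument upgrades to a uniformly positive slack. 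So the perturbed $g$ is still $k$-Lipschitz, and its first-order effect on $J_D$ is $-\eta\int_{A}\pdv{\mathring{J_D}(z)}{\ff(z)}\,dz$ up to $O(\eta^2)$ — contributions off $A$ vanish since the densities vanish there — which is strictly negative for small $\eta$, because the integrand is $\ge0$ on $A$ and strictly positive at $x$. This contradicts optimality of $\ff$, hence $\pdv{\mathring{J_D}(y^{\star})}{\ff(y^{\star})}<0$.

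The main obstacle is the feasibility check in the last step: since $\ff$ is already at the Lipschitz limit all along $[x,y^{\star}]$, a naive spatial bump breaks the bound, so one must lower $\ff$ along the whole upward cone of influence of $x$ — which Lemma~\ref{lemma_chain} identifies with $[x,y^{\star}]$ — compatibly with the active constraints, controlling the transverse transition via regularity of $\ff$ and the extremality of $y^{\star}$. Equivalently one can argue through the KKT/Kantorovich optimality structure: a nonnegative multiplier measure on tight pairs whose net flux at each point balances $\pdv{\mathring{J_D}}{\ff}$; strict positivity at $x$ forces charge on a tight pair $(y_1,x)$, a nonnegative derivative at $y_1$ propagates the charge to a farther collinear $y_2$, and compactness forces the chain to reach $y^{\star}$, where the absence of any tight upper constraint rules out a nonnegative derivative. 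Making rigorous the limiting step of this iteration — or the regularity needed for the explicit perturbation — is where the real work lies.
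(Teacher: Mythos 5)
Your overall strategy is the same as the paper's: assume every point in tight bounding relationship with $x$ has $\pdv{\mathring{J_D}}{\ff}\geq 0$, lower $\ff$ on the upward closure of $x$ (the set the paper calls $S(x)$, closed under chaining by Lemma~\ref{lemma_chain}), and contradict optimality; the sign argument at the end is also the paper's. Your refinement that $A$ lies on a single ray (via Theorem~\ref{theorem_gradient} applied at $x$) and hence reduces to a segment $[x,y^\star]$ whose far endpoint carries no tight upper constraint is a genuine and correct addition that the paper does not make.

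The gap is in your feasibility argument for the perturbation. You claim that ``$C^{1,1}$ regularity of $\ff$ makes $\lVert\nabla\ff\rVert$ fall below $k$ by an amount comparable to the tube radius'' transversely to the segment. This is false: nothing in the hypotheses prevents $\ff$ from being exactly affine with slope $k$ on a full neighborhood of $[x,y^\star]$ (the absence of tight constraints from \emph{support points} off the segment says nothing about $\lVert\nabla\ff\rVert$ off the segment). In that case $\nabla\ff$ is constant of norm $k$ and orthogonal to the transverse gradient of your cutoff $\chi$, so $\lVert\nabla(\ff-\eta\chi)\rVert^2=k^2+\eta^2\lVert\nabla\chi\rVert^2>k^2$ wherever $\chi$ is non-constant, and your perturbed function is not $k$-Lipschitz; at the caps the cross term is even first order in $\eta$. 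The fix is to abandon the additive cutoff and use the Lipschitz-preserving envelope $g=\min\bigl(\ff,\ \min_{w\in[x,y^\star]}(\ff(w)-\eta+k\lVert\cdot-w\rVert)\bigr)$, which satisfies $k(g)\leq k(\ff)$ by construction; the remaining work (which your closing paragraph correctly identifies as the hard part, and which the paper also elides with ``similar as the proof of (a)'') is to show that as $\eta\to 0$ the set $\{g<\ff\}$ collapses onto $[x,y^\star]$ so that the uncontrolled-sign contributions from nearby support points are dominated by the gain near $x$. A related soft spot: when the distributions have densities in $\BR^n$, lowering $\ff$ on the measure-zero segment alone changes $J_D$ by nothing, so the first-order gain must come from a positive-measure neighborhood, where only continuity of $\pdv{\mathring{J_D}}{\ff}$ near $x$ — not the contradiction hypothesis on $A$ — controls the sign.
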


\begin{proof}
Consider $x$ with $\pdv{{\mathring{J_D}(x})}{\ff(x)} > 0$. According to Theorem~\ref{theorem_main}, there exists $y$ such that $|\ff(y)-\ff(x)| = k(\ff) \Vert y- x \Vert$. Assume that for every $y$ that holds $|\ff(y)-\ff(x)| = k(\ff) \Vert y- x \Vert$, it has $\pdv{{\mathring{J_D}(y})}{\ff(y)} \geq 0$. Consider the set $S(x)=\{y \mid \ff(y)-\ff(x) = k(\ff) \Vert y- x \Vert \}$. Note that, according to Lemma~\ref{lemma_chain}, any $z$ that holds $\ff(z)-\ff(y) = k(\ff) \Vert z- y \Vert$ for any $y\in S(x)$ will also be in $S(x)$. Similar as the proof of (a) in Theorem~\ref{theorem_main}, we can decrease the value of $\ff(y)$ for all $y \in S(x)$ to construct a better $f$. By contradiction, we have that there must exist a $y$ with $\pdv{{\mathring{J_D}(x})}{\ff(x)} < 0$ such that $|\ff(y)-\ff(x)| = k(\ff) \Vert y- x \Vert$. Given the fact $\pdv{{\mathring{J_D}(x})}{\ff(x)} > 0$ and $\pdv{{\mathring{J_D}(x})}{\ff(x)} < 0$, we can conclude that $\ff(y)>\ff(x)$ and $\ff(y)-\ff(x) = k(\ff) \Vert y- x \Vert$. Otherwise, if $\ff(x)-\ff(y) = k(\ff) \Vert y- x \Vert$, then we can construct a better $f$ by decreasing $\ff(x)$ and increasing $\ff(y)$ which does not break the $k$-Lipschitz constraint. The other case can be proved similarly. 
\end{proof}

\begin{lemma}\label{lemma_xy}
For any $x$, if $\pdv{{\mathring{J_D}(x})}{f(x)} > 0$, then $\CP_g(x)>0$. For any $y$, if $\pdv{{\mathring{J_D}(y})}{f(y)} < 0$, then $\CP_r(y)>0$. 
\end{lemma}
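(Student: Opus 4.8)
The plan is to unwind the definition of $\mathring{J}_D$, after which the claim falls out of a one-line sign argument. Recall $\mathring{J}_D(x) = \CP_g(x)\phi(f(x)) + \CP_r(x)\varphi(f(x))$, hence $\pdv{\mathring{J}_D(x)}{f(x)} = \CP_g(x)\phi'(f(x)) + \CP_r(x)\varphi'(f(x))$. I would work under the standing hypotheses of Theorem~\ref{theorem_main}, i.e.\ $\phi'>0$ and $\varphi'<0$ everywhere, together with the trivial fact that $\CP_r(x),\CP_g(x)\ge 0$ since they are densities.

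For the first assertion, assume $\pdv{\mathring{J}_D(x)}{f(x)}>0$. Since $\CP_r(x)\ge 0$ and $\varphi'(f(x))<0$, the second summand $\CP_r(x)\varphi'(f(x))$ is $\le 0$; therefore the first summand $\CP_g(x)\phi'(f(x))$ must be strictly positive, and as $\phi'(f(x))>0$ this forces $\CP_g(x)>0$. The second assertion is completely symmetric: if $\pdv{\mathring{J}_D(y)}{f(y)}<0$, the summand $\CP_g(y)\phi'(f(y))$ is $\ge 0$, so $\CP_r(y)\varphi'(f(y))$ must be strictly negative, whence $\CP_r(y)>0$ because $\varphi'(f(y))<0$.

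I do not anticipate any real obstacle; the only point requiring a little care is to invoke the strict sign conditions $\phi'>0$, $\varphi'<0$ from Theorem~\ref{theorem_main} rather than the weaker Assumption~\eqref{eq_solvable}. Conceptually this lemma is just a bookkeeping device: in the proof of Theorem~\ref{theorem_chain} it lets one pass from a statement about the sign of $\nabla_{\!\ff(x)}\mathring{J_D}(x)$ to the conclusion that $x$ actually lies in the support $\CS_g$ (respectively $\CS_r$), which is what makes the bounding relationships from Lemma~\ref{lemma_gl} meaningful in terms of fake and real samples.
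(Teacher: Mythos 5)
Your proof is correct and follows exactly the paper's argument: expand $\pdv{\mathring{J}_D(x)}{f(x)} = \CP_g(x)\phi'(f(x)) + \CP_r(x)\varphi'(f(x))$ and read off the sign constraints using $\phi'>0$, $\varphi'<0$ and the non-negativity of the densities. Your version just spells out the one-line sign argument slightly more explicitly than the paper does.
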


\begin{proof}
$\pdv{{\mathring{J_D}(x})}{f(x)}=\CP_g(x)\frac{\partial \phi(f(x))}{\partial f(x)} + \CP_r(x)\frac{\partial \varphi(f(x))}{\partial f(x)}$. And we know $\phi'(x)>0$ and $\varphi'(x)<0$. Naturally, $\pdv{{\mathring{J_D}(x})}{f(x)}>0$ implies $\CP_g(x)>0$. Similarly, $\pdv{{\mathring{J_D}(y})}{f(y)}<0$ implies $\CP_r(y)>0$. 
\end{proof}

\begin{proof}[\textbf{Proof of Theorem \ref{theorem_chain}}] $ $ \newline \newline
For any $x\in \CS_g$, if $\pdv{{\mathring{J_D}(x})}{\ff(x)} > 0$, according to Lemma~\ref{lemma_gl}, there exists a $y$ with $\pdv{{\mathring{J_D}(x})}{\ff(x)} < 0$ such that $\ff(y)-\ff(x) = k(\ff) \Vert y- x \Vert$. According to Lemma~\ref{lemma_xy}, we have $\CP_r(y)>0$. That is, there is a $y \in \CS_r$ such that $\ff(y)-\ff(x) = k(\ff) \Vert y- x \Vert$. We can prove the other case symmetrically. 
\end{proof}

\begin{remark}
$\pdv{{\mathring{J_D}(x})}{\ff(x)} < 0$ for some $x\in \CS_g$ means $x$ is at the overlapping region of $\CS_r$ and $\CS_g$. It can be regarded as a $y \in \CS_r$, and one can apply the other rule which guarantees that there exists a $x' \in \CS_g$ that bounds this point. 
\end{remark}

\subsection{Proof of Theorem \ref{theorem_gradient}}
\label{lip_direction}
In this section, we will prove Theorem \ref{theorem_gradient}, i.e., Lipschitz continuity with $l_2$-norm (Euclidean Distance) can guarantee that the gradient is directly pointing towards some sample. 

Let $(x, y)$ be such that $y\neq x $, and we define $x_t = x+ t\cdot (y-x)$ with $t \in [0,1]$. 

\begin{lemma}
If $f(x)$ is $k$-Lipschitz with respect to $\Vert . \Vert_p$ and $f(y)-f(x) = k\Vert y - x \Vert_p$, then $f(x_t) = f(x)+t\cdot k\Vert y - x \Vert_p$
\end{lemma}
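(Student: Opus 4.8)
The plan is to exploit the fact that $x_t$ lies on the line segment joining $x$ and $y$, so the $p$-norm is additive along that segment, and then to run a squeeze argument against the two Lipschitz inequalities. Concretely, first I would record that since $x_t = x + t(y-x)$ we have $x_t - x = t(y-x)$ and $y - x_t = (1-t)(y-x)$, hence $\lVert x_t - x\rVert_p = t\lVert y-x\rVert_p$ and $\lVert y - x_t\rVert_p = (1-t)\lVert y-x\rVert_p$, so in particular $\lVert x_t - x\rVert_p + \lVert y - x_t\rVert_p = \lVert y-x\rVert_p$ (this is exactly where $0\le t\le 1$ is used).

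Next I would apply the $k$-Lipschitz property twice: $f(x_t)-f(x) \le k\lVert x_t-x\rVert_p$ and $f(y)-f(x_t)\le k\lVert y-x_t\rVert_p$. Adding these, and substituting the hypothesis $f(y)-f(x) = k\lVert y-x\rVert_p$ together with the additivity just noted, gives
\[ k\lVert y-x\rVert_p = f(y)-f(x) = \big(f(y)-f(x_t)\big) + \big(f(x_t)-f(x)\big) \le k\lVert y-x_t\rVert_p + k\lVert x_t-x\rVert_p = k\lVert y-x\rVert_p. \]
Since the two ends coincide, both Lipschitz inequalities must be equalities; in particular $f(x_t)-f(x) = k\lVert x_t-x\rVert_p = t\,k\lVert y-x\rVert_p$, which is the assertion.

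There is not really a serious obstacle here — the only point that needs a moment of care, and the one thing the argument genuinely relies on, is the degeneration of the triangle inequality $\lVert y-x\rVert_p \le \lVert y-x_t\rVert_p + \lVert x_t-x\rVert_p$ to an equality, which holds precisely because $x_t$ is a convex combination of $x$ and $y$; outside $[0,1]$ the sum would strictly exceed $\lVert y-x\rVert_p$ and the squeeze would fail. After this lemma, the remaining step toward Theorem~\ref{theorem_gradient} is to pass from "$f$ is affine in $t$ along the segment with slope $k\lVert y-x\rVert_p$" to the gradient statement: differentiating the identity $f(x_t)=f(x)+t\,k\lVert y-x\rVert_p$ at an interior $t$ yields $\langle \nabla f(x_t), y-x\rangle = k\lVert y-x\rVert_p$, and combining this with $\lVert \nabla f(x_t)\rVert \le k$ and the (Euclidean) Cauchy–Schwarz equality case forces $\nabla_{x_t} f(x_t) = k\,\frac{y-x}{\lVert y-x\rVert}$.
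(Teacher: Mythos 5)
Your argument is correct and is essentially the paper's own proof: the same decomposition $f(y)-f(x) = (f(y)-f(x_t)) + (f(x_t)-f(x))$, the same two applications of the Lipschitz bound with $\lVert x_t - x\rVert_p = t\lVert y-x\rVert_p$ and $\lVert y - x_t\rVert_p = (1-t)\lVert y-x\rVert_p$, and the same conclusion that the hypothesis forces both inequalities to be equalities. Your closing remarks on passing to the gradient statement also match the paper's subsequent lemma and the Cauchy--Schwarz equality-case argument it uses for Theorem~\ref{theorem_gradient}.
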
 

\begin{proof}
As we know $f(x)$ is $k$-Lipschitz, with the property of norms, we have
\begin{align}\label{eq:linear_interopation}
f(y)-f(x) &= f(y)-f(x_t)+ f(x_t)-f(x) \nonumber \\
& \leq f(y)-f(x_t)+k\Vert x_t-x\Vert_p  = f(y)-f(x_t)+t\cdot k\Vert y - x\Vert_p \nonumber \\
& \leq k\Vert y-x_t\Vert_p+t\cdot k\Vert y - x\Vert_p  = k \cdot (1-t)\Vert y - x\Vert_p+t \cdot k\Vert y - x\Vert_p \nonumber \\
& = k \Vert y - x\Vert_p. 
\end{align}
$f(y)-f(x) = k\Vert y - x \Vert_p$ implies all the inequalities is equalities. Therefore, $f(x_t) = f(x)+t\cdot k\Vert y - x \Vert_p$. \qedhere
\end{proof}

\begin{lemma}
Let $v$ be the unit vector $\frac{y-x}{\Vert y -x \Vert_2}$. If $f(x_t) = f(x)+t\cdot k\Vert y - x \Vert_2$, then $\tpdv{f(x_t)}{v}$ equals to $k$. 
\end{lemma}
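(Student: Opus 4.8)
The plan is to reduce the directional derivative to a one–variable difference quotient and exploit the fact, established in the previous lemma, that $f$ restricted to the segment $\{x_t : t\in[0,1]\}$ is affine in $t$. First I would rewrite the perturbed point $x_t + h v$ as another point on the same segment. Since $y-x = \Vert y-x\Vert_2\, v$, we have $x_t = x + t\,(y-x) = x + t\,\Vert y-x\Vert_2\, v$, and therefore
\begin{equation*}
x_t + h v = x + \bigl(t\,\Vert y-x\Vert_2 + h\bigr) v = x_{s}, \qquad s := t + \frac{h}{\Vert y-x\Vert_2}.
\end{equation*}
For $t\in(0,1)$ and $\lvert h\rvert$ small enough, $s\in[0,1]$, so the hypothesis $f(x_s) = f(x) + s\,k\,\Vert y-x\Vert_2$ may be legitimately invoked.

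Next I would compute the difference quotient directly, using $s-t = h/\Vert y-x\Vert_2$:
\begin{equation*}
\frac{f(x_t + h v) - f(x_t)}{h} = \frac{s\,k\,\Vert y-x\Vert_2 - t\,k\,\Vert y-x\Vert_2}{h} = \frac{(s-t)\,k\,\Vert y-x\Vert_2}{h} = k .
\end{equation*}
Since this quotient is identically equal to the constant $k$ for every sufficiently small $h\neq 0$, the limit as $h\to 0$ exists and equals $k$; that is, $\tpdv{f(x_t)}{v} = k$. For the endpoints $t\in\{0,1\}$ the identical computation yields the appropriate one–sided directional derivative, which is all that is meaningful there.

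There is essentially no hard step here: the entire content is the bookkeeping observation that moving from $x_t$ in the direction $v$ keeps us on the segment joining $x$ and $y$, combined with the affineness of $f$ along that segment. The only point needing a little care is restricting $h$ so that $s$ stays in $[0,1]$ (so that the hypothesis applies) and handling $t=0,1$ via one-sided limits. One may optionally add the closing remark that, together with differentiability of $f$ and $k(f)=k$, the Cauchy–Schwarz inequality $\tpdv{f(x_t)}{v} = \nabla_{\!x_t} f(x_t)\cdot v \le \Vert \nabla_{\!x_t} f(x_t)\Vert \le k$ must then hold with equality throughout, forcing $\nabla_{\!x_t} f(x_t) = k\,v$, which is exactly the conclusion of Theorem~\ref{theorem_gradient}.
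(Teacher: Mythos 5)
Your proof is correct and is essentially the same computation as the paper's: both rewrite $x_t + hv$ as $x_{t+h/\Vert y-x\Vert_2}$ and evaluate the difference quotient using the affineness of $f$ along the segment, obtaining the constant value $k$. Your extra care about keeping $s\in[0,1]$ and treating the endpoints one-sidedly is a minor tidying of the same argument, and your closing Cauchy--Schwarz remark mirrors exactly how the paper then deduces Theorem~\ref{theorem_gradient}.
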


\begin{proof}
\begin{equation}
\begin{aligned}
\vspace{-20pt}
\tpdv{f(x_t)}{v} 
&= \lim\limits_{h\rightarrow0} \frac{f(x_t+hv)-f(x_t)}{h} =\lim\limits_{h\rightarrow0} \frac{f(x_t+h\frac{y-x}{\Vert y-x \Vert_2})-f(x_t)}{h} \\ 
& =\lim\limits_{h\rightarrow0}\frac{f(x_{t+\frac{h}{\Vert y-x \Vert_2}})-f(x_t)}{h}  =\lim\limits_{h\rightarrow0}\frac{\frac{h}{\Vert y-x \Vert_2}\cdot k\Vert y-x \Vert_2}{h}=k. \nonumber \qedhere
\end{aligned}
\end{equation}
\end{proof}

\begin{proof}[\textbf{Proof of Theorem \ref{theorem_gradient}}] $ $ 
Assume $p=2$. According to \cite{adler2018banach}, if $f(x)$ is $k$-Lipschitz with respect to $\Vert.\Vert_2$ and $f(x)$ is differentiable at $x_t$, then $\Vert \nabla f(x_t) \Vert_2 \leq k$.  Let $v$ be the unit vector $\frac{y-x}{\Vert y -x \Vert_2}$. We have 
\begin{align}
k^2 =k\tpdv{f(x_t)}{v} &=k\left<v,\nabla f(x_t)\right>= \left<kv, \nabla f(x_t) \right> \leq \Vert kv \Vert_2\Vert \nabla f(x_t) \Vert_2 = k^2.
\end{align}
Because the equality holds only when $\nabla f(x_t) = kv = k\frac{y-x}{\Vert y -x \Vert_2}$, we have that $\nabla f(x_t) = k\frac{y-x}{\Vert y -x \Vert_2}$.
\end{proof}

\subsection{Proof of the New Dual Form of Wasserstein Distance} \label{app_dual_form}

We here provide a proof for our new dual form of Wasserstein distance, i.e., Eq.~\eqref{eq_w_dual_form_1}. 

The Wasserstein distance is given as follows
\begin{equation}\label{eq_w_primal_app}
W_1(\CP_r,\CP_g) =  \inf_{\pi \in \Pi(\CP_r,\CP_g)} \, \E_{(x,y) \sim \pi} \, [d(x, y)],
\end{equation}
where $\Pi(\CP_r, \CP_g)$ denotes the set of all probability measures with marginals $\CP_r$ and $\CP_g$ on the first and second factors, respectively. The Kantorovich-Rubinstein (KR) dual \cite{oldandnew} is written as
\begin{equation}
\begin{aligned}
W_{KR}(\CP_r,\CP_g) &= {\sup}_{f} \,\, \E_{x \sim \CP_r} \, [f(x)] - \E_{x \sim \CP_g} \, [f(x)],  \, \\
&\emph{s.t.} \, f(x) - f(y) \leq d(x, y), \,\, \forall x, \forall y.
\end{aligned}
\label{eq_w_dual_form_app1}
\end{equation}
We will prove that Wasserstein distance in its dual form can also be written as
\begin{equation}
\begin{aligned}
W_{LL}(\CP_r,\CP_g) &= {\sup}_{f} \,\, \E_{x \sim \CP_r} \, [f(x)] - \E_{x \sim \CP_g} \, [f(x)],  \, \\
&\emph{s.t.} \, f(x) - f(y) \leq d(x, y), \,\, \forall x \in \CS_r, \forall y \in \CS_g,
\end{aligned}
\label{eq_w_dual_form_app2}
\end{equation}
which relaxes the constraint in the KR dual form of Wasserstein distance. 

\begin{theorem}
Given $W_{KR}(\CP_r,\CP_g)=W_1(\CP_r,\CP_g)$, we have $W_{KR}(\CP_r,\CP_g)=W_{LL}(\CP_r,\CP_g)=W_1(\CP_r,\CP_g)$. 
\end{theorem}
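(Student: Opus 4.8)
The plan is to sandwich $W_{LL}$ between $W_1$ and $W_{KR}$, i.e. to show
\[
W_1(\CP_r,\CP_g) = W_{KR}(\CP_r,\CP_g) \;\le\; W_{LL}(\CP_r,\CP_g) \;\le\; W_1(\CP_r,\CP_g),
\]
so that all three quantities coincide. The leftmost equality is the hypothesis. The first inequality is a soft ``relaxation'' observation, and the second is the only step with real content, proved by pairing an arbitrary dual-feasible $f$ against an arbitrary primal transport plan.

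First I would verify $W_{KR} \le W_{LL}$: any $f$ that satisfies $f(x)-f(y)\le d(x,y)$ for all $x,y\in\BR^n$ a fortiori satisfies it for all $x\in\CS_r$ and $y\in\CS_g$, while the objective $\E_{x\sim\CP_r}[f(x)]-\E_{x\sim\CP_g}[f(x)]$ is unchanged. Hence the supremum defining $W_{LL}$ ranges over a superset of the feasible set for $W_{KR}$, which gives $W_{LL}\ge W_{KR}$.

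Next, for $W_{LL}\le W_1$, take any $f$ feasible for $W_{LL}$ and any $\pi\in\Pi(\CP_r,\CP_g)$. Because the marginals of $\pi$ are $\CP_r$ and $\CP_g$, we have $\pi(\CS_r^c\times\BR^n)=\CP_r(\CS_r^c)=0$ and $\pi(\BR^n\times\CS_g^c)=\CP_g(\CS_g^c)=0$, so $\pi$ is concentrated on $\CS_r\times\CS_g$; consequently the constraint $f(x)-f(y)\le d(x,y)$ holds $\pi$-almost surely. Therefore
\[
\E_{x\sim\CP_r}[f(x)]-\E_{x\sim\CP_g}[f(x)] = \E_{(x,y)\sim\pi}\big[f(x)-f(y)\big] \le \E_{(x,y)\sim\pi}\big[d(x,y)\big].
\]
Taking the infimum over $\pi\in\Pi(\CP_r,\CP_g)$ on the right yields $\E_{x\sim\CP_r}[f(x)]-\E_{x\sim\CP_g}[f(x)]\le W_1(\CP_r,\CP_g)$, and then the supremum over feasible $f$ yields $W_{LL}\le W_1$. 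Combining the three bounds finishes the proof.

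The only points needing care are measure-theoretic housekeeping, which I expect to be the mildest of obstacles rather than a genuine one: that $\pi$ charges only $\CS_r\times\CS_g$ (immediate from the marginals, as above) and that the identity $\E_{\CP_r}[f]-\E_{\CP_g}[f]=\E_\pi[f(x)-f(y)]$ is legitimate, which just requires both expectations to be finite. Finiteness is automatic under the standing first-moment assumptions: fixing base points $x_0\in\CS_r$ and $y_0\in\CS_g$, the $W_{LL}$-constraint gives $f(x)\le f(y_0)+d(x,y_0)$ on $\CS_r$ and $f(y)\ge f(x_0)-d(x_0,y)$ on $\CS_g$, so the objective is bounded above by a finite constant for every feasible $f$, and any $f$ with $\E_{\CP_r}[f]=-\infty$ or $\E_{\CP_g}[f]=+\infty$ makes the objective $-\infty$ and can be dropped from the supremum.
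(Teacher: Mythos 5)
Your proposal is correct and follows essentially the same route as the paper's proof: the sandwich $W_{KR}\le W_{LL}\le W_1$, with the first inequality from feasible-set containment and the second from pairing a dual-feasible $f$ against an arbitrary $\pi\in\Pi(\CP_r,\CP_g)$ concentrated on $\CS_r\times\CS_g$. Your added remarks on finiteness of the expectations are a small measure-theoretic refinement the paper leaves implicit, but not a different argument.
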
 

\begin{proof} $ $\newline$ $\newline 
    (i) For any $f$ that satisfies ``$f(x) - f(y) \leq d(x, y), \,\, \forall x, \forall y$'', it must satisfy ``$f(x) - f(y) \leq d(x, y), \,\, \forall x \in \CS_r, \forall y \in \CS_g$''. 
    
    \hspace{13pt} Thus, $W_{KR}(\CP_r,\CP_g)\leq W_{LL}(\CP_r,\CP_g)$.  
    
    \vspace{3pt}
    (ii) Let $F_{LL}=\{f| \, f(x) - f(y) \leq d(x, y), \,\, \forall x \in \CS_r, \forall y \in \CS_g \}$.
    
    \hspace{13pt} Let $A=\{(x,y) \,|\, x \in \CS_r, y \in \CS_g \}$ and $I_A=\begin{cases}
    1, \quad (x,y) \in A; \\
    0, \quad otherwise
    \end{cases}$.
    
    \hspace{13pt} Let $A^c$ denote the complementary set of $A$ and define $I_{A^c}$ accordingly.
        
    \vspace{4pt}
    \hspace{13pt} $\forall \pi \in \Pi(\CP_r, \CP_g)$, we have the following:
    \vspace{2pt}
    \begin{align}
    W_{LL}(\CP_r,\CP_g) &= \nonumber {\sup}_{f\in F_{LL}} \,\, \E_{x \sim \CP_r} \, [f(x)] - \E_{x \sim \CP_g} \, [f(x)]  \\ \nonumber
    & = {\sup}_{f\in F_{LL}} \,\, \E_{(x,y) \sim \pi} [f(x)-f(y)]  \\ \nonumber
    & = {\sup}_{f\in F_{LL}} \,\, \E_{(x,y) \sim \pi} [(f(x)-f(y)) I_A] + \E_{(x,y) \sim \pi} [(f(x)-f(y)) I_{A^c}] \\\nonumber
    & = {\sup}_{f\in F_{LL}} \,\, \E_{(x,y) \sim \pi} [(f(x)-f(y)) I_A] \\\nonumber
    & \leq \E_{(x,y) \sim \pi} [\lVert y - x \rVert I_A]  \\\nonumber
    & \leq \E_{(x,y) \sim \pi} [d(x,y)].
    \end{align}
    
    \hspace{15pt}$W_{LL}(\CP_r,\CP_g)\leq \E_{(x,y) \sim \pi} [d(x,y)], \forall \pi \in \Pi(\CP_r, \CP_g)$ 
    
    \hspace{15pt}$\Rightarrow W_{LL}(\CP_r,\CP_g)\leq \inf_{\pi \in \Pi(\CP_r,\CP_g)} \, \E_{(x,y) \sim \pi} \, [d(x, y)] = W_1(\CP_r,\CP_g)$.
    
    \vspace{3pt}
    (iii) Combining (i) and (ii), we have $W_{KR}(\CP_r,\CP_g)\leq W_{LL}(\CP_r,\CP_g)\leq W_1(\CP_r,\CP_g)$.
    
    \hspace{15pt} Given $I(\CP_r,\CP_g)=W_1(\CP_r,\CP_g)$, we have $I(\CP_r,\CP_g)=W_{LL}(\CP_r,\CP_g)=W_1(\CP_r,\CP_g)$.        
\end{proof}

\section{The Practical Behaviors of Gradient Uninformativeness}
\label{hyper_para}

To study the practical behaviors of gradient uninformativeness, we conducted a set of experiments with various hyper-parameter settings. We use the Least-Squares GAN in this experiments as an representative of traditional GANs. The value surface and the gradient of generated samples under various situations are plotted as follows. 

\begin{figure}[!h]
	\begin{subfigure}{0.33\linewidth}
		\centering
		\includegraphics[width=0.99\columnwidth]{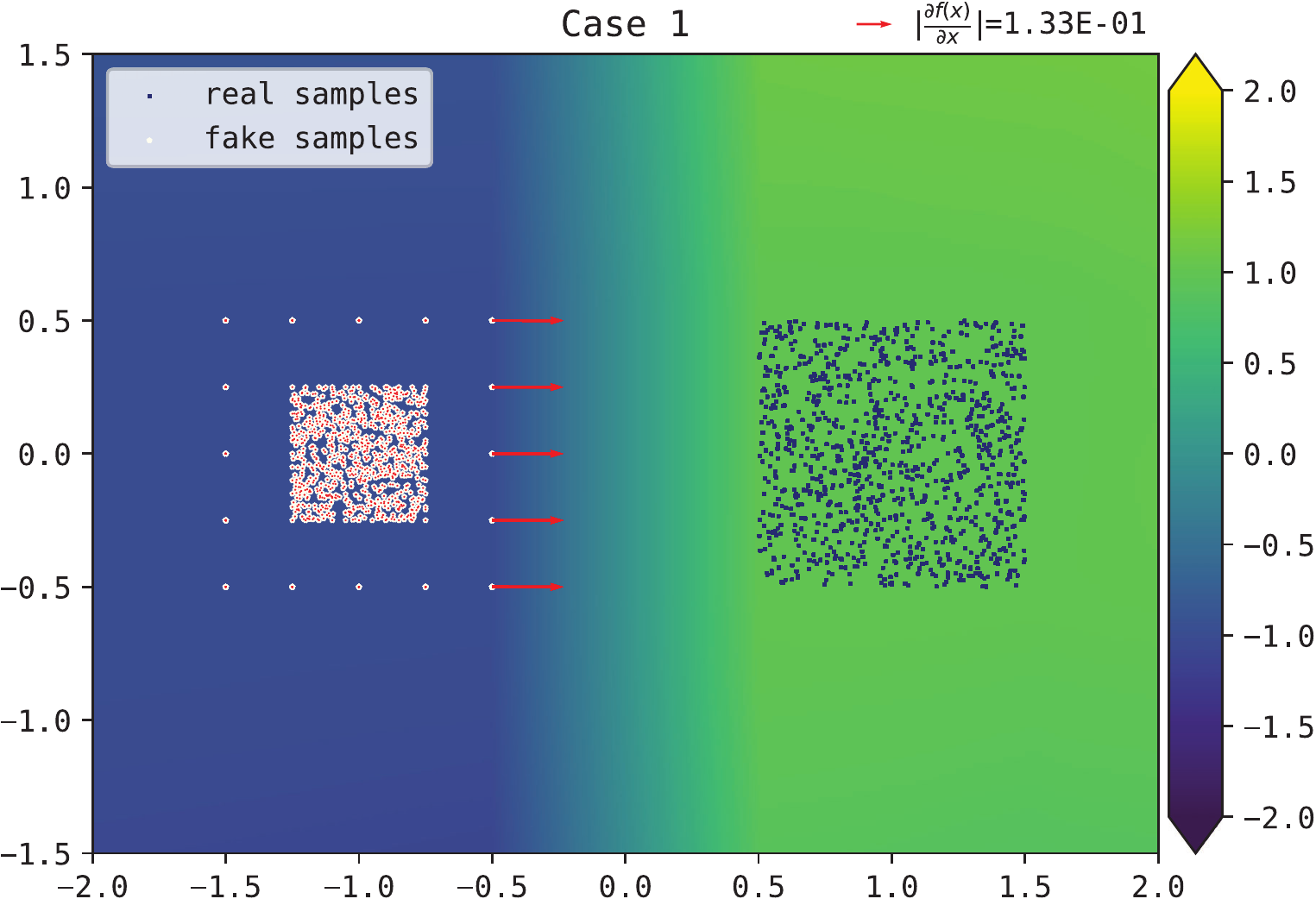}
		\label{fig_case1_lsgan_adam_1e-2_relu_1024*1_toy}
	\end{subfigure}
	\begin{subfigure}{0.33\linewidth}
		\centering
		\includegraphics[width=0.99\columnwidth]{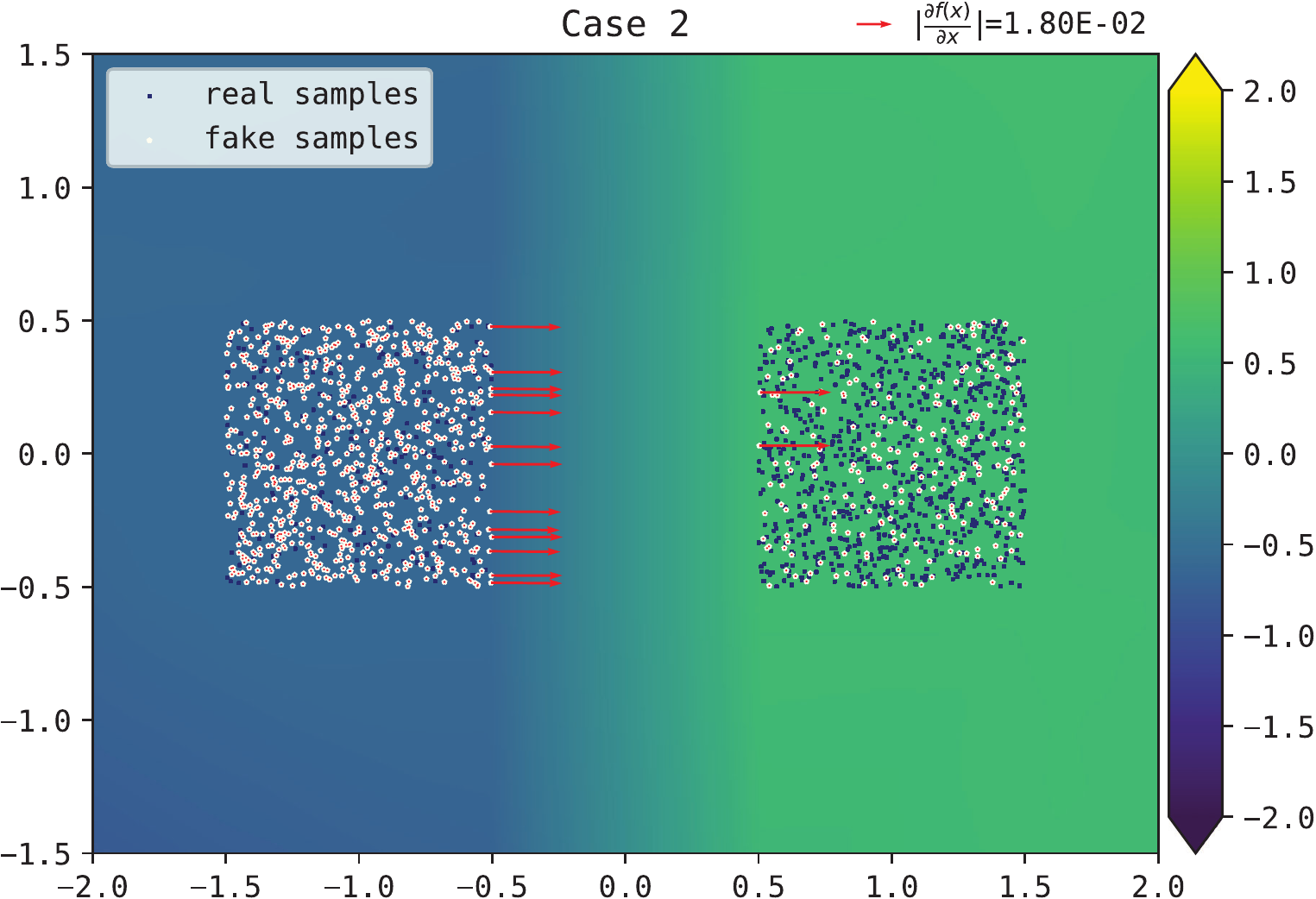}
		\label{fig_case2_lsgan_adam_1e-2_relu_1024*1_toy}
	\end{subfigure}
	\begin{subfigure}{0.33\linewidth}
		\centering
		\includegraphics[width=0.99\columnwidth]{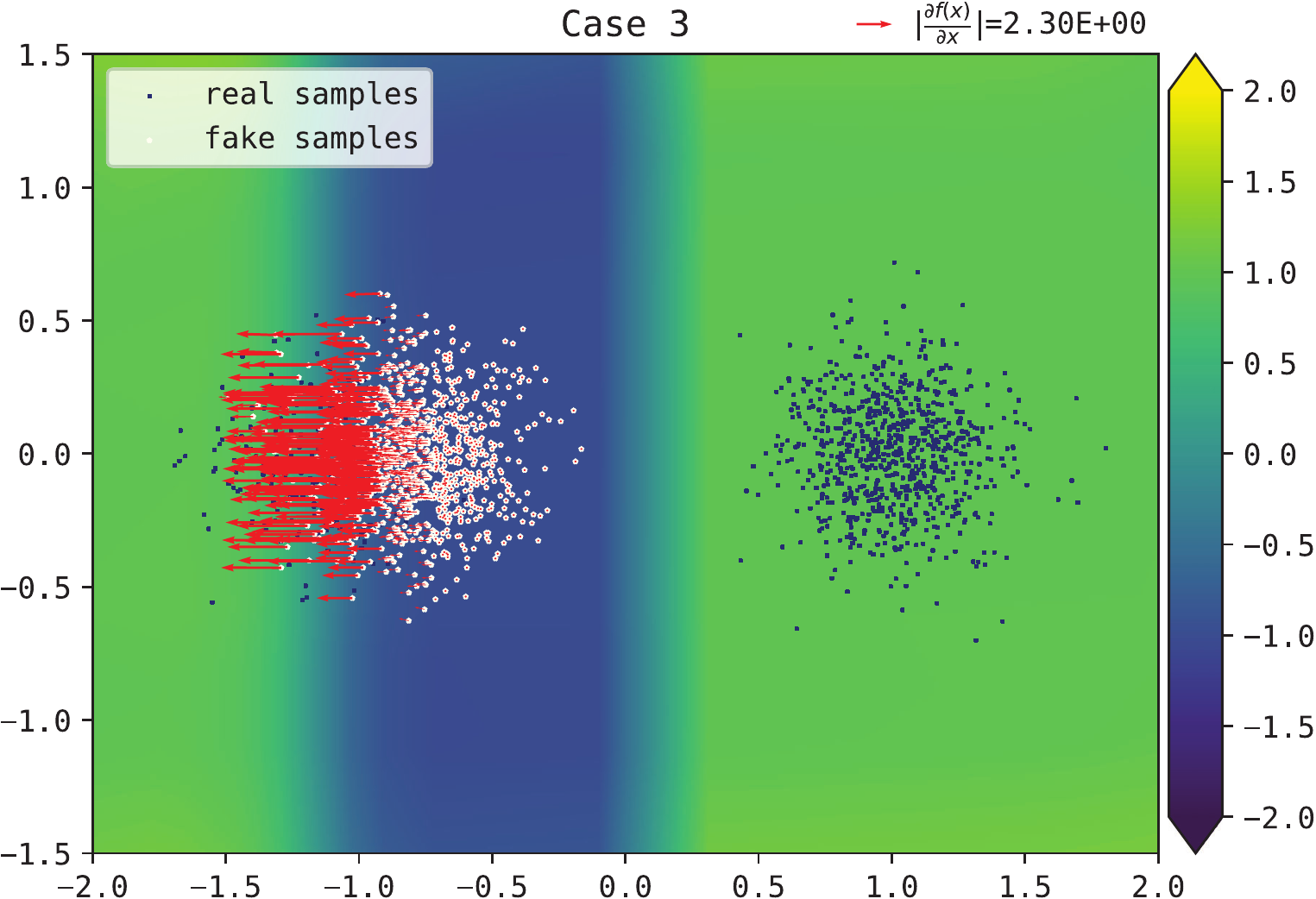}
		\label{fig_case3_lsgan_adam_1e-2_relu_1024*1_toy}
	\end{subfigure}
	\vspace{-10pt}
	\caption{ADAM with lr=1e-2, beta1=0.0, beta2=0.9. MLP with RELU activations, \#hidden units=1024, \#layers=1.}
	\label{hyper_test_1}
\end{figure}

\begin{figure}[!h]
	\begin{subfigure}{0.33\linewidth}
		\centering
		\includegraphics[width=0.99\columnwidth]{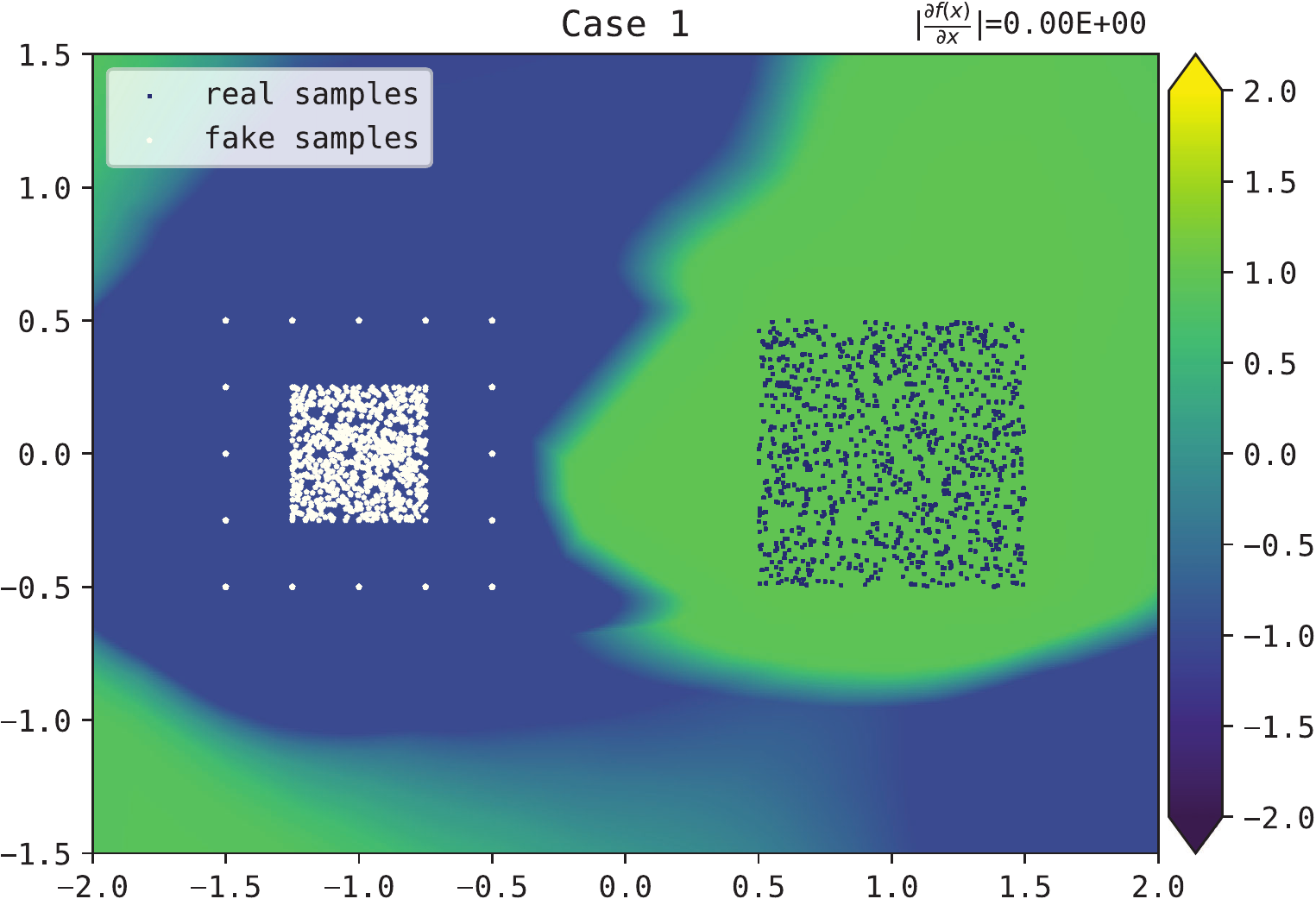}
		\label{fig_case1_lsgan_adam_1e-2_relu_1024*4_toy}
	\end{subfigure}
	\begin{subfigure}{0.33\linewidth}
		\centering
		\includegraphics[width=0.99\columnwidth]{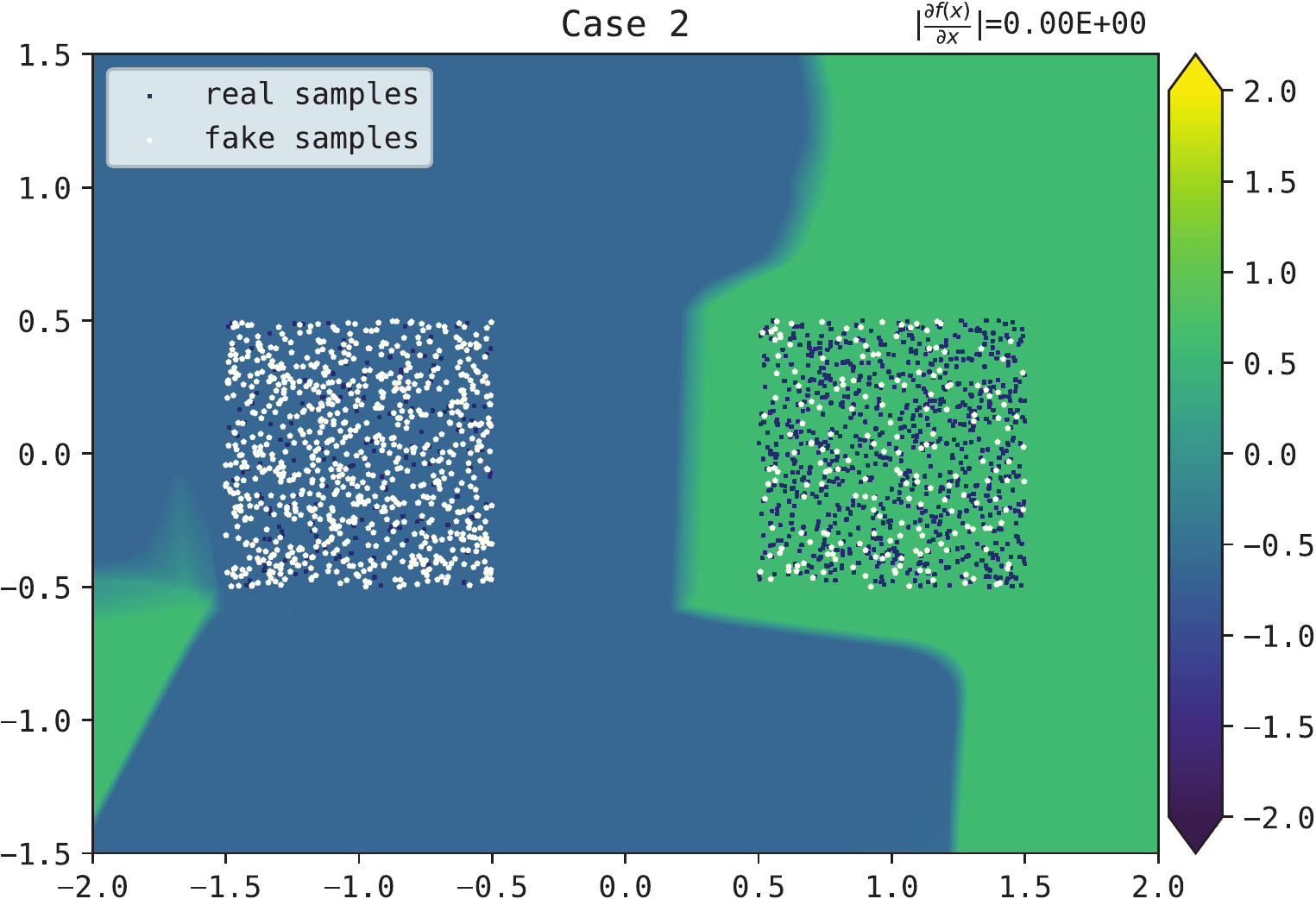}
		\label{fig_case2_lsgan_adam_1e-2_relu_1024*4_toy}
	\end{subfigure}
	\begin{subfigure}{0.33\linewidth}
		\centering
		\includegraphics[width=0.99\columnwidth]{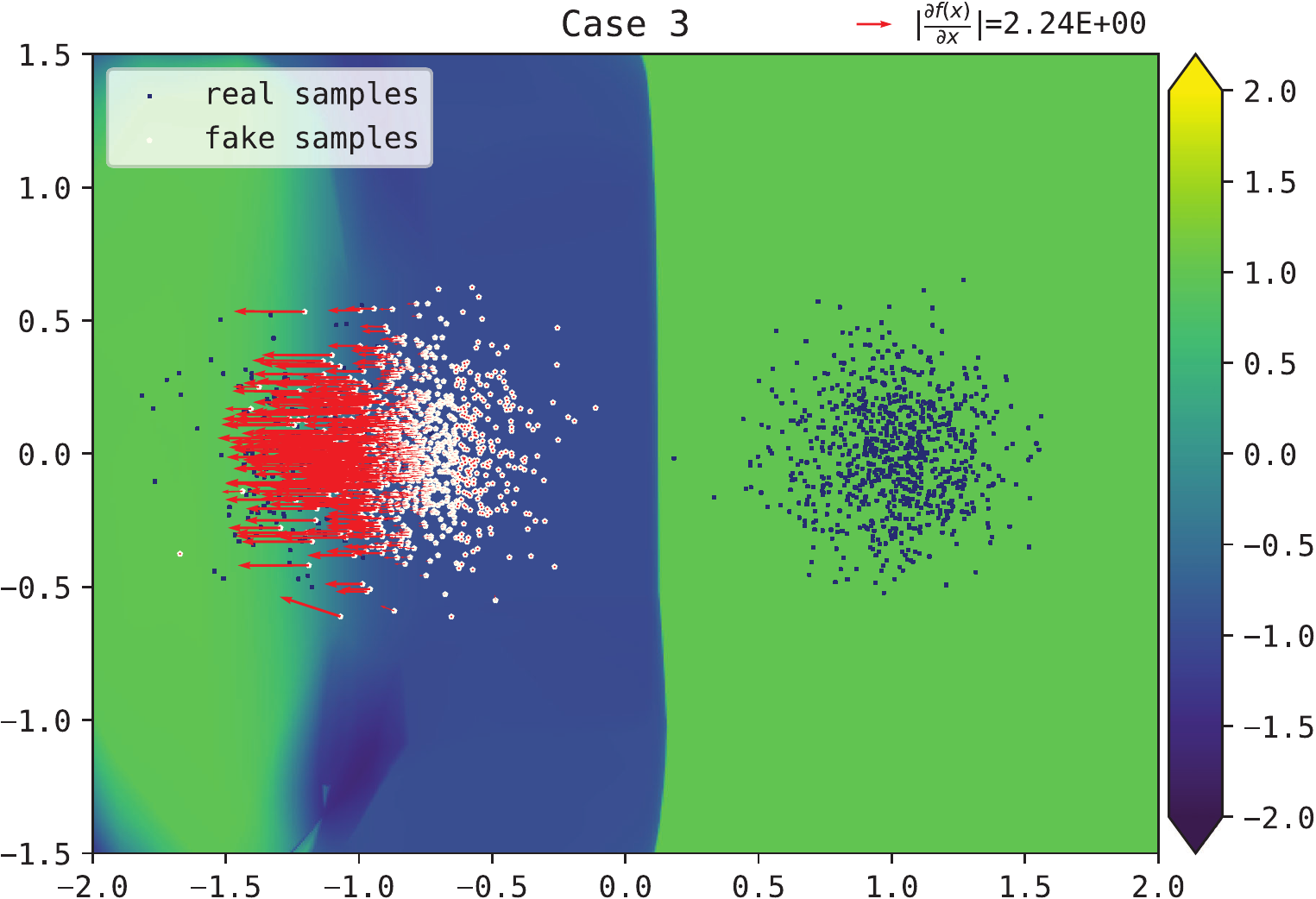}
		\label{fig_case3_lsgan_adam_1e-2_relu_1024*4_toy}
	\end{subfigure}
	\vspace{-10pt}
	\caption{ADAM with lr=1e-2, beta1=0.0, beta2=0.9. MLP with RELU activations, \#hidden units=1024, \#layers=4.}
\end{figure}

\begin{figure}[!h]
	\begin{subfigure}{0.33\linewidth}
		\centering
		\includegraphics[width=0.99\columnwidth]{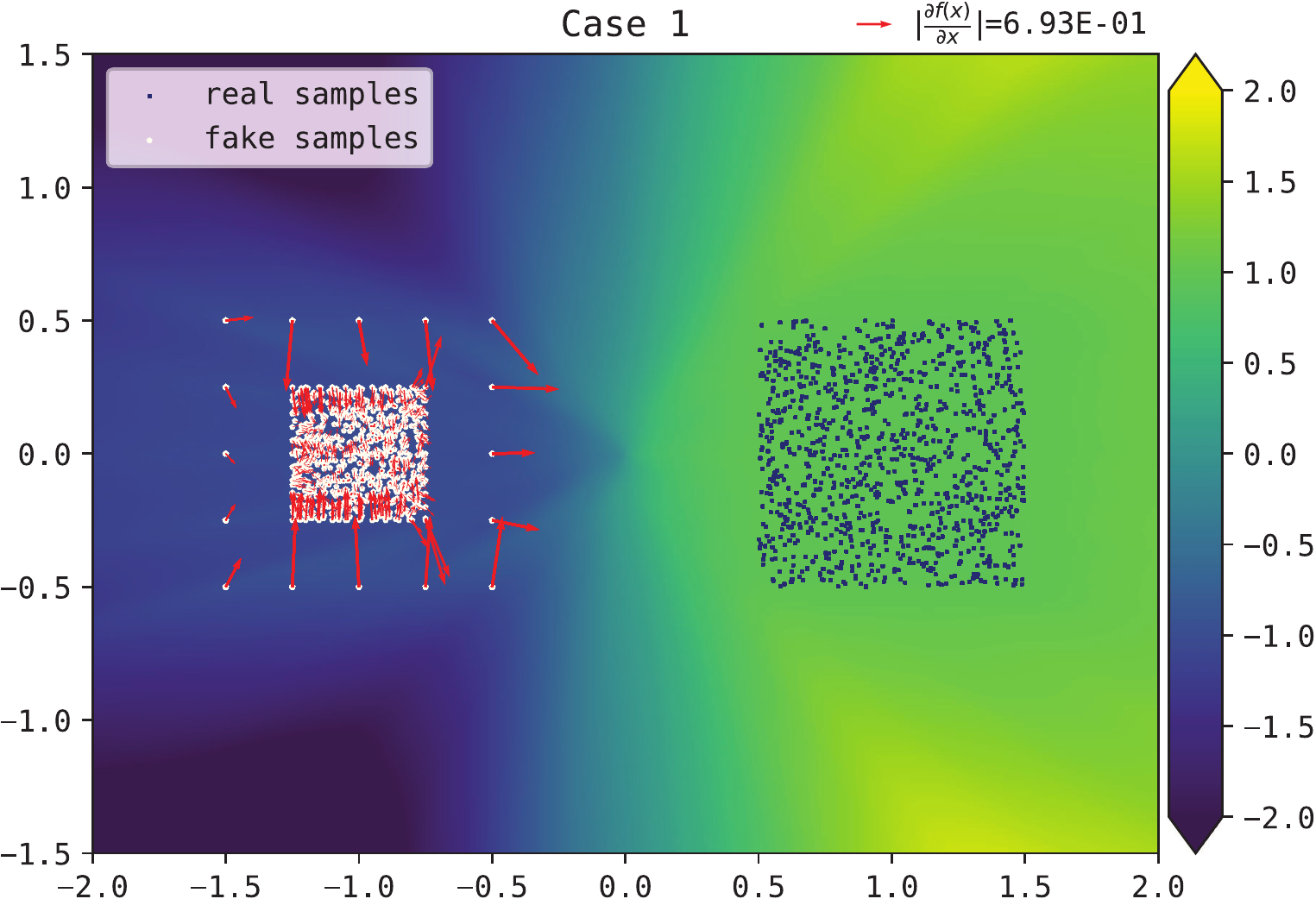}
		\label{fig_case1_lsgan_adam_1e-5_relu_1024*4_toy}
	\end{subfigure}
	\begin{subfigure}{0.33\linewidth}
		\centering
		\includegraphics[width=0.99\columnwidth]{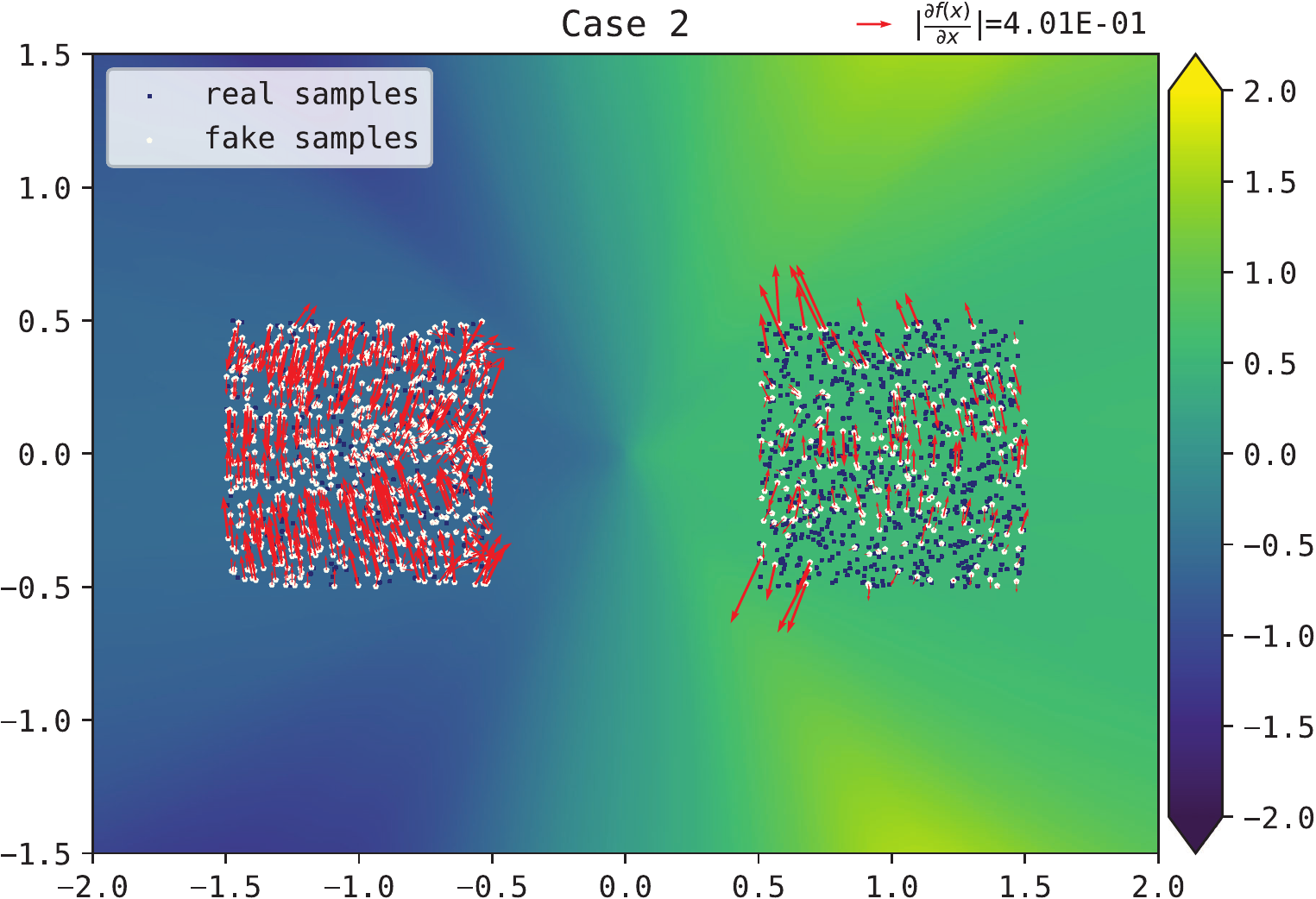}
		\label{fig_case2_lsgan_adam_1e-5_relu_1024*4_toy}
	\end{subfigure}
	\begin{subfigure}{0.33\linewidth}
		\centering
		\includegraphics[width=0.99\columnwidth]{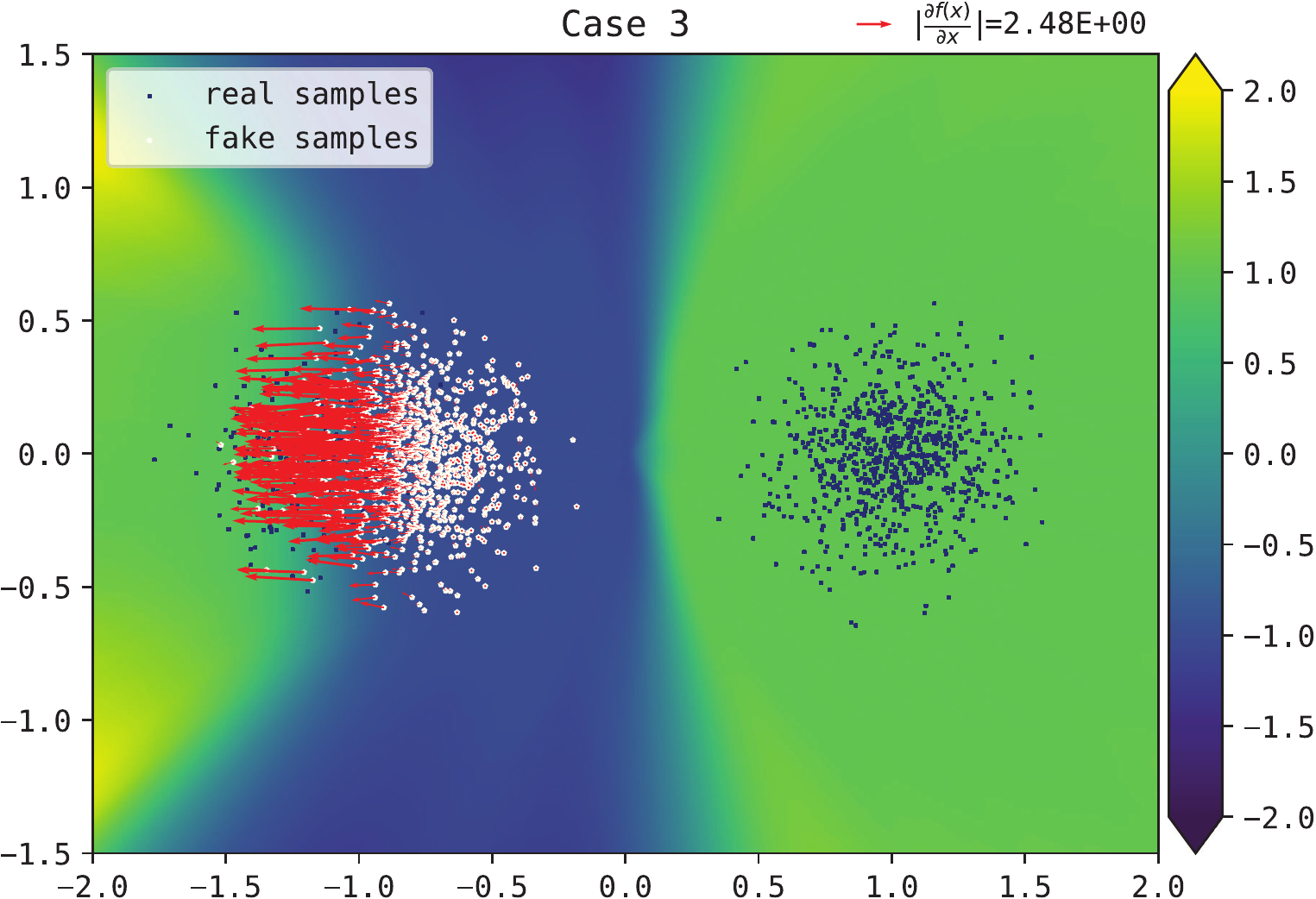}
		\label{fig_case3_lsgan_adam_1e-5_relu_1024*4_toy}
	\end{subfigure}
	\vspace{-10pt}
	\caption{ADAM with lr=1e-5, beta1=0.0, beta2=0.9. MLP with RELU activations, \#hidden units=1024, \#layers=4.}
\end{figure}

\begin{figure}[!h]
	\begin{subfigure}{0.33\linewidth}
		\centering
		\includegraphics[width=0.99\columnwidth]{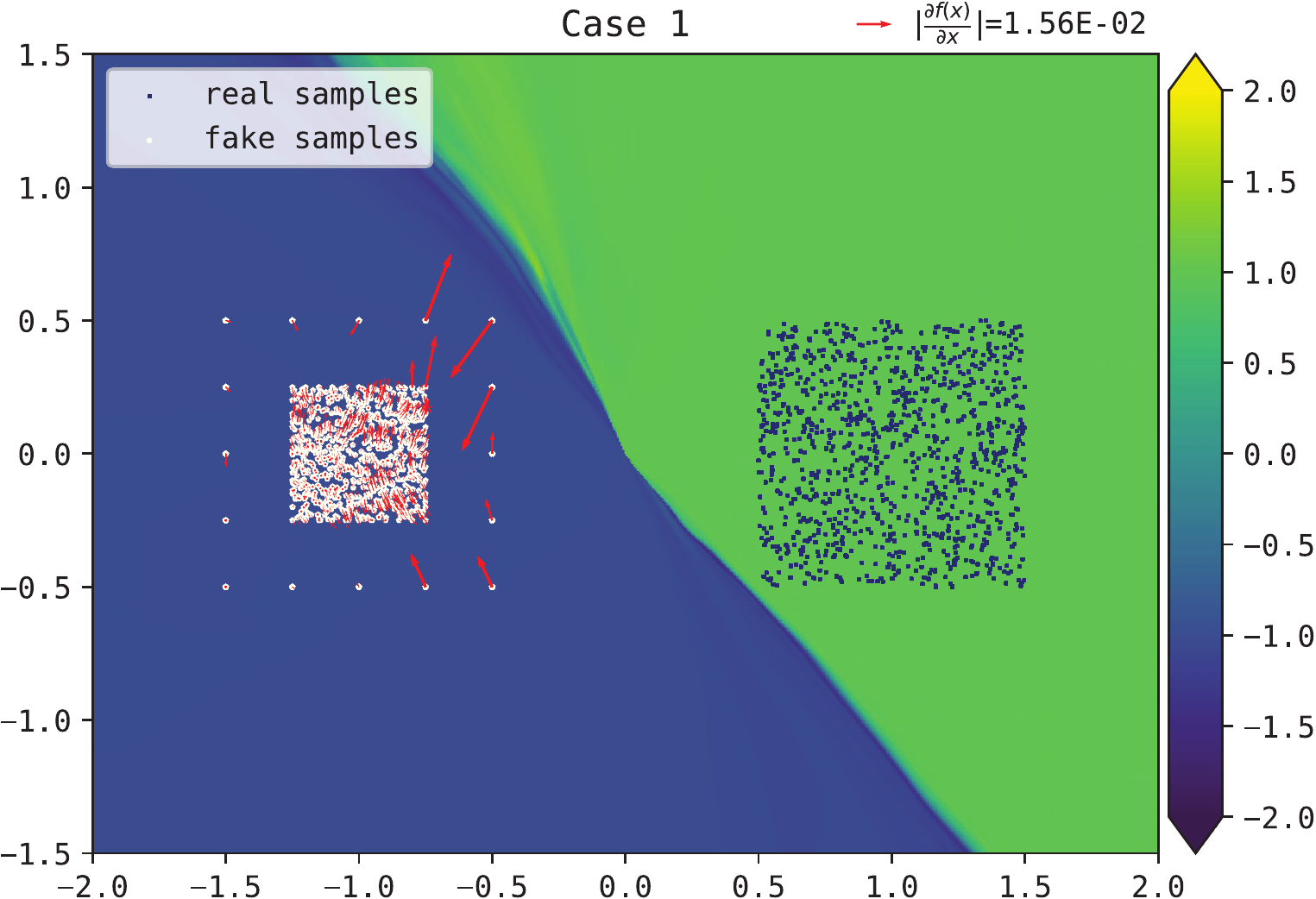}
		\label{fig_case1_lsgan_sgd_1e-3_relu_128*64_toy}
	\end{subfigure}
	\begin{subfigure}{0.33\linewidth}
		\centering
		\includegraphics[width=0.99\columnwidth]{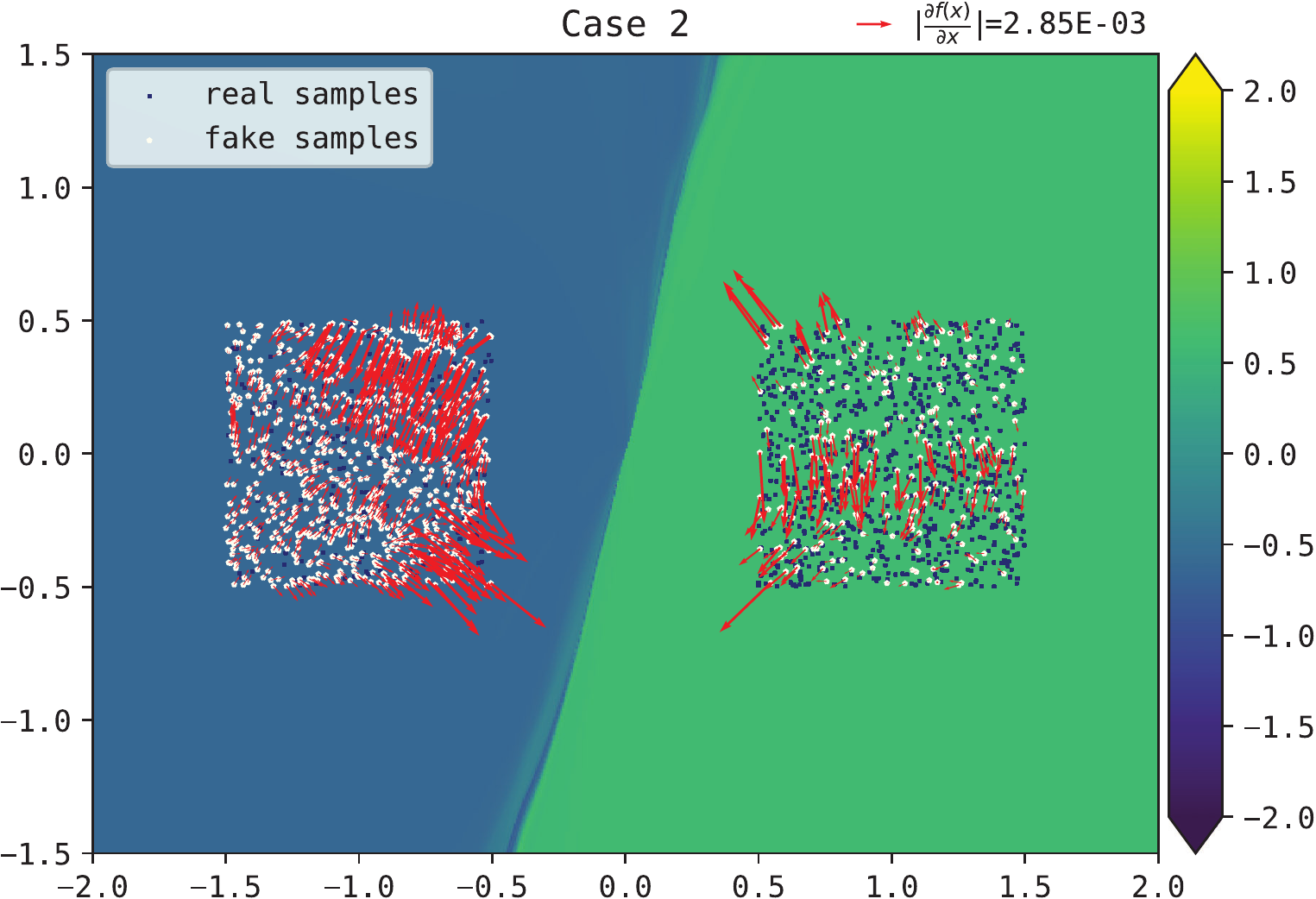}
		\label{fig_case2_lsgan_sgd_1e-3_relu_128*64_toy}
	\end{subfigure}
	\begin{subfigure}{0.33\linewidth}
		\centering
		\includegraphics[width=0.99\columnwidth]{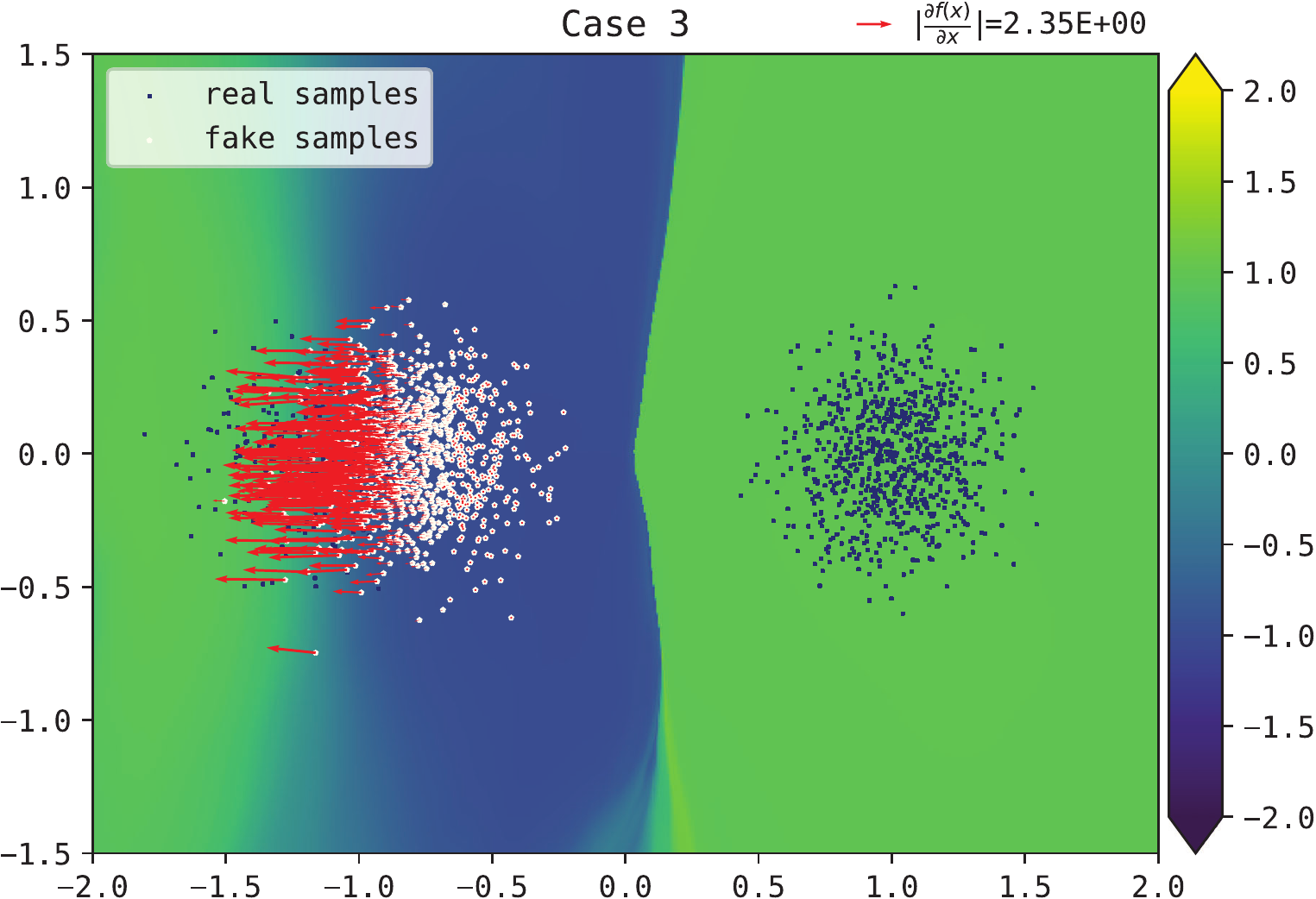}
		\label{fig_case3_lsgan_sgd_1e-3_relu_128*64_toy}
	\end{subfigure}
	\vspace{-10pt}
	\caption{SGD with lr=1e-3. MLP with SELU activations, \#hidden units=128, \#layers=64.}
\end{figure}

\begin{figure}[!h]
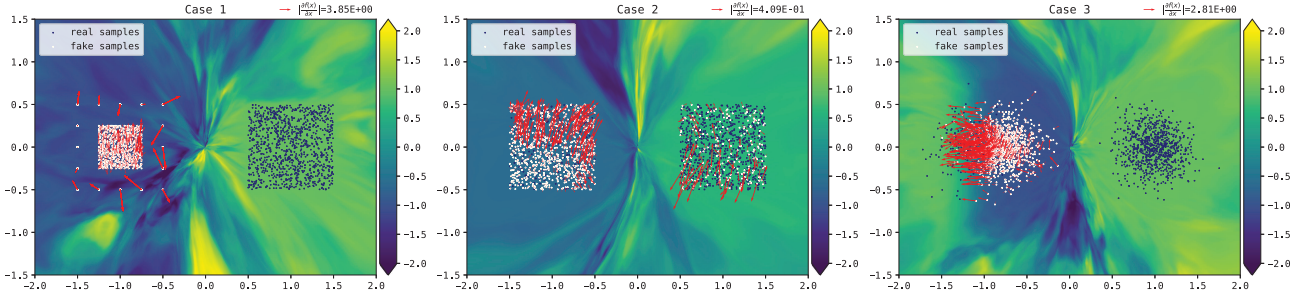

	\begin{subfigure}{0.33\linewidth}
		\centering
		\includegraphics[width=0.99\columnwidth]{figures/case1_lsgan_sgd_1e-4_selu_128x64_toy-crop.pdf}
		\label{fig_case1_lsgan_sgd_1e-4_relu_128*64_toy}
	\end{subfigure}
	\begin{subfigure}{0.33\linewidth}
		\centering
		\includegraphics[width=0.99\columnwidth]{figures/case2_lsgan_sgd_1e-4_selu_128x64_toy-crop.pdf}
		\label{fig_case2_lsgan_sgd_1e-4_relu_128*64_toy}
	\end{subfigure}
	\begin{subfigure}{0.33\linewidth}
		\vspace{-0pt}
		\centering
		\includegraphics[width=0.99\columnwidth]{figures/case3_lsgan_sgd_1e-4_selu_128x64_toy-crop.pdf}
		\label{fig_case3_lsgan_sgd_1e-4_relu_128*64_toy}
	\end{subfigure}
	\vspace{-10pt}
	\caption{SGD with lr=1e-4. MLP with SELU activations, \#hidden units=128, \#layers=64.}
	\label{hyper_test_2}
	\vspace{-10pt}
\end{figure}

These experiments shown that the practical $f$ highly depend on the hyper-parameter setting. Given limited capacity, the neural network try to learn the best $f$. When the neural network is capable of learning approximately the optimal $\ff$, how the actual $f$ approaches $\ff$ and how the points whose gradients are theoretically undefined behave highly depends the optimization details and the characteristics of the network. 

\section{On the Implementation of Lipschitz continuity for GANs} \label{app_maxgp}

Typical techniques for enforcing $k$-Lipschitz includes: spectral normalization \citep{sngan}, gradient penalty \citep{wgangp}, and Lipschitz penalty \citep{wganlp}. Before moving into the detailed discussion of these methods, we would like to provide several important notes in the first place.

Firstly, enforcing $k$-Lipschitz in the blending-region of $\CP_r$ and $\CP_g$ is actually sufficient.

Define $B(\CS_r,\CS_g) = \{\hat{x}=x \cdot t + y \cdot (1-t) \mid x\in\CS_r$ and $y\in \CS_g$ and $t \in [0,1] \}$. It is clear that $f$ is 1-Lipschitz in $B(\CS_r,\CS_g)$ implies $f(x)-f(y) \leq d(x, y), \forall x \in \CS_r, \forall y \in \CS_g$. Thus, it is a sufficient constraint for Wasserstein distance in Eq.~\eqref{eq_w_dual_form_1}. In fact, $f(x)$ is $k$-Lipschitz in $B(\CP_r,\CP_g)$ is also a sufficient condition for all properties described in Lipschitz GANs. 

Secondly, enforcing $k$-Lipschitz with regularization would provide a dynamic Lipschitz constant $k$. 

\vspace{2pt}
\begin{lemma}\label{lem_lipschitz_constant}
With Wasserstein GAN objective, we have $\min_{f \in \mathcal{F}_\text{k-Lip}} J_D(f) = k \cdot \min_{f \in \mathcal{F}_\text{1-Lip}} J_D(f)$. 
\end{lemma}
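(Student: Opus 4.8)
The plan is to exploit the fact that, for the Wasserstein GAN losses $\phi(x)=x$ and $\varphi(x)=-x$, the discriminator objective $J_D(f)=\E_{x\sim\CP_g}[f(x)]-\E_{x\sim\CP_r}[f(x)]$ is a \emph{linear} functional of $f$, together with the trivial observation that scaling a function scales its Lipschitz constant by the same factor. First I would dispatch the degenerate case $k=0$: here $\mathcal{F}_\text{0-Lip}$ consists only of constant functions, on which $J_D\equiv 0$, so both sides of the claimed identity equal $0$. Hence assume $k>0$.

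Next I would set up the bijection $T_k\colon \mathcal{F}_\text{1-Lip}\to \mathcal{F}_\text{k-Lip}$ defined by $T_k(g)=k\cdot g$. Since $\lVert k g\rVert_{Lip}=k\,\lVert g\rVert_{Lip}$, the map $T_k$ is well defined, and its inverse is $f\mapsto f/k$; thus $T_k$ is a bijection between the two constraint sets. This is the only structural point that needs to be checked, and it is immediate from the definition of the Lipschitz constant in Eq.~\eqref{eq_lip}.

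Then I would compute directly, using linearity of expectation:
\begin{equation}
J_D(k\cdot g)=\E_{x\sim\CP_g}[k\,g(x)]-\E_{x\sim\CP_r}[k\,g(x)]=k\big(\E_{x\sim\CP_g}[g(x)]-\E_{x\sim\CP_r}[g(x)]\big)=k\cdot J_D(g).
\end{equation}
Combining this with the bijection $T_k$ and the fact that the infimum is attained (which follows from the existence lemma for the Wasserstein discriminator proved earlier in Appendix~\ref{app_proof}), I get
\begin{equation}
\min_{f\in\mathcal{F}_\text{k-Lip}} J_D(f)=\min_{g\in\mathcal{F}_\text{1-Lip}} J_D(k\cdot g)=\min_{g\in\mathcal{F}_\text{1-Lip}} k\cdot J_D(g)=k\cdot\min_{g\in\mathcal{F}_\text{1-Lip}} J_D(g),
\end{equation}
where the last equality uses $k>0$ so that multiplication by $k$ commutes with taking the minimum. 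This completes the argument.

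There is essentially no hard part here: the statement is a homogeneity identity, and the entire content is that the WGAN objective is positively homogeneous of degree one in $f$ while the Lipschitz ball scales linearly. The only thing worth being careful about is not conflating $\min$ with $\inf$ — so I would either invoke the earlier existence result, or, if one prefers, state and prove the identity with $\inf$ in place of $\min$, in which case no existence claim is needed at all and the chain of equalities above goes through verbatim.
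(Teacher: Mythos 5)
Your proof is correct, and it matches the reasoning the paper itself relies on: the lemma is stated in the appendix without an explicit proof, but the paper's definition $J_D^*(k)=\min_{f\in\mathcal{F}_\text{1-Lip},\,b}\,\E_{x\sim\CP_g}[\phi(k\cdot f(x)+b)]+\E_{x\sim\CP_r}[\varphi(k\cdot f(x)+b)]$ uses exactly the same reparametrization of the $k$-Lipschitz ball by scaling $1$-Lipschitz functions, combined with the degree-one homogeneity of the WGAN objective. Your handling of $k=0$ and the $\min$ versus $\inf$ caveat are both sound and, if anything, more careful than the paper.
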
 
\vspace{-5pt}

Assuming we can directly control the Lipschitz constant $k(f)$ of $f$, the total loss of the discriminator becomes $J(k) \triangleq \min_{f \in \mathcal{F}_\text{k-Lip}} J_D(f) + \lambda \cdot (k-k_0)^2$. 
With Lemma \ref{lem_lipschitz_constant}, let $\alpha=-\min_{f \in \mathcal{F}_\text{1-Lip}} J_D(f)$, then $J(k)=-k\cdot\alpha+\lambda\cdot (k-k_0)^2$, and $J(k)$ achieves its minimum when $k=\frac{\alpha}{2\lambda}+k_0$. When $\alpha$ goes to zero, i.e., $\CP_g$ converges to $\CP_r$, the optimal $k$ decreases. And when $\CP_r=\CP_g$, we have $\alpha=0$ and the optimal $k=k_0$. The similar analysis applies to Lipschitz GANs. 

\subsection{Existing Methods}

For practical methods, though spectral normalization \citep{sngan} recently demonstrates their excellent results in training GANs, spectral normalization is an absolute constraint for Lipschitz over the entire space, i.e., constricting the maximum gradient of the entire space, which is unnecessary. On the other side, we also notice both penalty methods proposed in \citep{wgangp} and \citep{wganlp} are not exact implementation of the Lipschitz continuity condition, because it does not directly penalty the maximum gradient, but penalties all gradients towards the given target Lipschitz constant or penalties all these greater than one towards the given target. 

We also empirically found that the existing methods including spectral normalization \citep{sngan}, gradient penalty \citep{wgangp}, and Lipschitz penalty \citep{wganlp} all fail to converge to the optimal $\ff(x)$ in some of our synthetic experiments. 

\begin{figure*}[t]
	\centering
	\begin{subfigure}{0.45\linewidth}
		\centering
		\includegraphics[width=0.95\columnwidth]{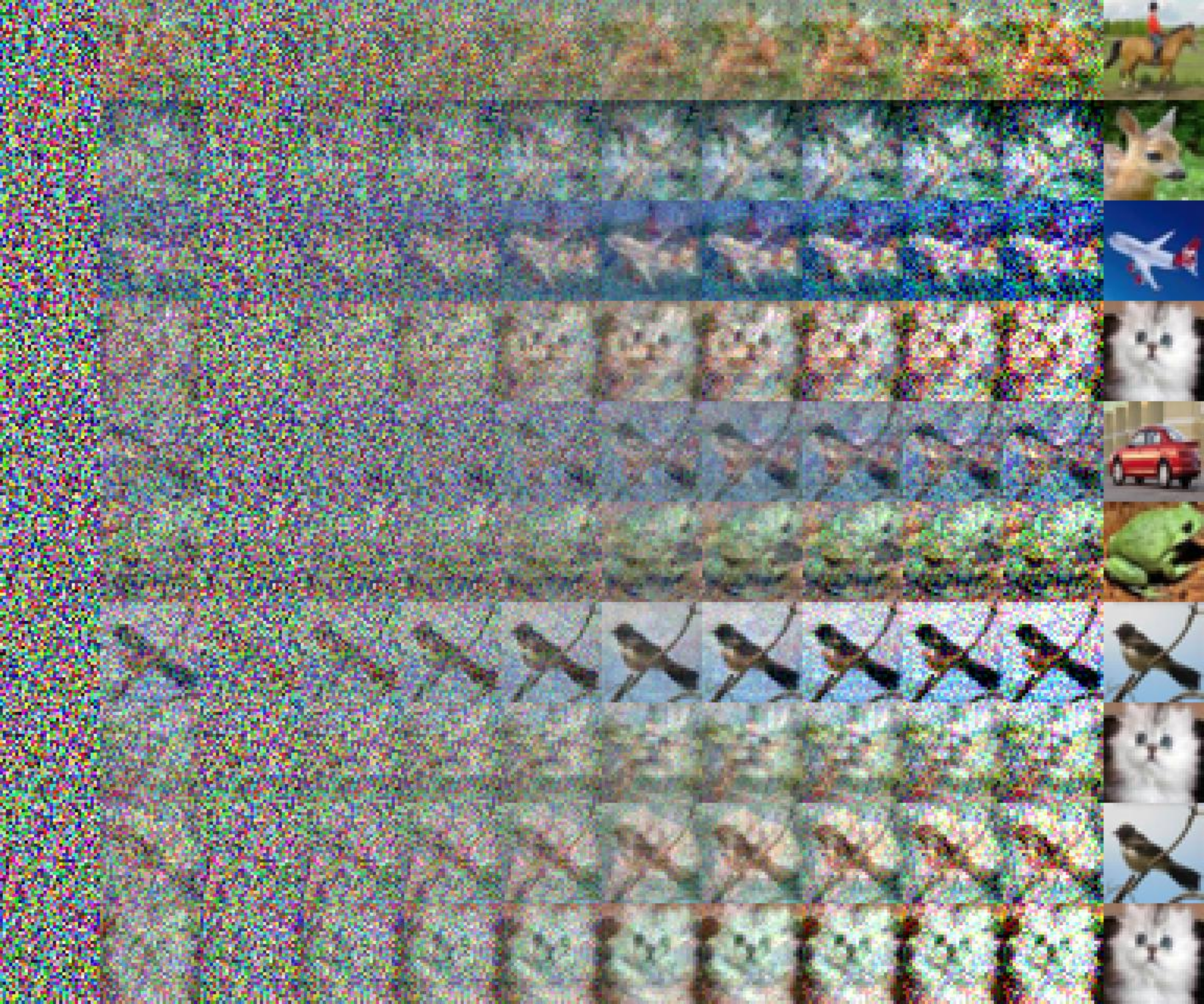}
		\caption{Gradient Penalty}
		\label{fig_gp}
	\end{subfigure}
	\begin{subfigure}{0.45\linewidth}
		\centering	
		\includegraphics[width=0.95\columnwidth]{figures/Case_5-10++_wgans_gp1_0_adam_1e-4_mlpdense256x16_cifar10.pdf}
		\caption{Maximum Gradient Penalty}
		\label{fig_maxgp}
	\end{subfigure}	
	\vspace{-5pt}
	\caption{Comparison between gradient penalty and maximum gradient penalty, with $\CP_r$ and $\CP_g$ consist of ten real and noise images, respectively. The leftmost in each row is a $x \in \CS_g$ and the second is its gradient $\nabla_{\!x} \ff(x)$. The interiors are $x+\epsilon\cdot\nabla_{\!x} \ff(x)$ with increasing $\epsilon$, which will pass through a real sample, and the rightmost is the nearest $y \in \CS_r$.}
	\label{fig_gradient_direction_maxgp_compare}
\end{figure*}

\subsection{The New Method}
Note that this practical method of imposing Lipschitz continuity is not the key contribution of this work. We leave the more rigorous study on this topic as our further work. We introduce it for the necessity for understanding our paper and reproducing of experiments. 

Combining the idea of spectral normalization and gradient penalty, we developed a new way of implementing the regularization of Lipschitz continuity in our experiments. Spectral normalization is actually constraining the maximum gradient over the entire space. And as we argued previously, enforcing Lipschitz continuity in the blending region is sufficient. Therefore, we propose to restricting the maximum gradient over the blending region:
\begin{align}
J_{\text{maxgp}} = \lambda \max_{x\sim B(\CS_r,\CS_g)} [ \big\lVert \nabla_{\!x} f(x) \big\rVert^2] 
\end{align}
In practice, we sample $x$ from $B(\CS_r,\CS_g)$ as in \citep{wgangp,wganlp} using training batches of real and fake samples. 

We compare the practical result of (centralized) gradient penalty $\mE_{x\sim B} [ \big\lVert \nabla_{\!x} f(x) \big\rVert^2]$ and the proposed maximum gradient penalty in Figure \ref{fig_gradient_direction_maxgp_compare}. Before switching to maximum gradient penalty, we struggled for a long time and cannot achieve a high quality result as shown in Figure \ref{fig_maxgp}. The other forms of gradient penalty \citep{wgangp,wganlp} perform similar as $\mE_{x\sim B} [ \big\lVert \nabla_{\!x} f(x) \big\rVert^2]$.

To improve the stability and reduce the bias introduced via batch sampling, one can further keep track $x$ with the maximum $\big\lVert \nabla_{\!x} f(x) \big\rVert$. A practical and light weight method is to maintain a list $S_\text{max}$ that has the currently highest (top-k) $\big\lVert \nabla_{\!x} f(x) \big\rVert_2$ (initialized with random samples), use the $S_\text{max}$ as part of the batch that estimates $J_{\text{maxgp}}$, and update the $S_\text{max}$ after each batch updating of the discriminator. According to our experiments, it is usually does not improve the training significantly. 

\section{Extended Discussions and More Details} \label{sec_details}

\subsection{Various $\phi$ and $\varphi$ That Satisfies Eq.~(\ref{eq_solvable})}

For Lipschitz GANs, $\phi$ and $\varphi$ are required to satisfy Eq.~\eqref{eq_solvable}. Eq.~(\ref{eq_solvable}) is actually quite general and there exists many other instances, e.g., $\phi(x)=\varphi(-x)=x$, $\phi(x)=\varphi(-x)=-\log(\sigma(-x))$, $\phi(x)=\varphi(-x)=x+\sqrt{x^2+\alpha}$ with $\alpha>0$, $\phi(x)=\varphi(-x)=\exp(x)$, etc. We plot these instances of $\phi$ and $\varphi$ in Figure \ref{fig_function_curve}. 

To devise a loss satisfies Eq.~(\ref{eq_solvable}), it is practical to let $\phi$ be an increasing function with non-decreasing derivative and set $\phi(x)=\varphi(-x)$.  Note that rescaling and offsetting along the axes are trivial operation to found more $\phi$ and $\varphi$ within a function class, and linear combination of two or more $\phi$ or $\varphi$ from different function classes also keep satisfying Eq.~\eqref{eq_solvable}. 

\subsection{Experiment Details}

In our experiments with real datas (CIFAR-10, Tiny Imagenet and Oxford 102), we follow the network architecture and hyper-parameters in \citep{wgangp}. The network architectures are detailed in Table \ref{tab:my_label}. We use Adam optimizer with beta1=0.0, beta2=0.9, and the learning rate is 0.0002 which linear decays to zero in 200, 000 iterations. We use 5 discriminator updates per generator update. We use MaxGP for all our experiments of LGANs and search the best penalty weight $\lambda$ in $[0.01, 0.1, 1.0, 10.0]$. Please check more details in our codes. For all experiments in Table~\ref{table2}, we only change $\phi$ and $\varphi$ and the dataset, and all other components are fixed. 

We plot the IS training curve of LGANs in Figure~\ref{training_curve_cifar_icp} and \ref{training_curve_tiny_icp}. We provide the visual results of LGANs in Figure~\ref{fig_cifar10}, Figure~\ref{fig_tiny} for CIFAR-10 and Tiny Imagenet, respectively. As an extra experiment, we also provide the visual results of LGANs on Oxford 102 in Figure~\ref{fig_flowers}. 

\begin{figure*}[h]
	\centering
	\includegraphics[width=0.99\columnwidth]{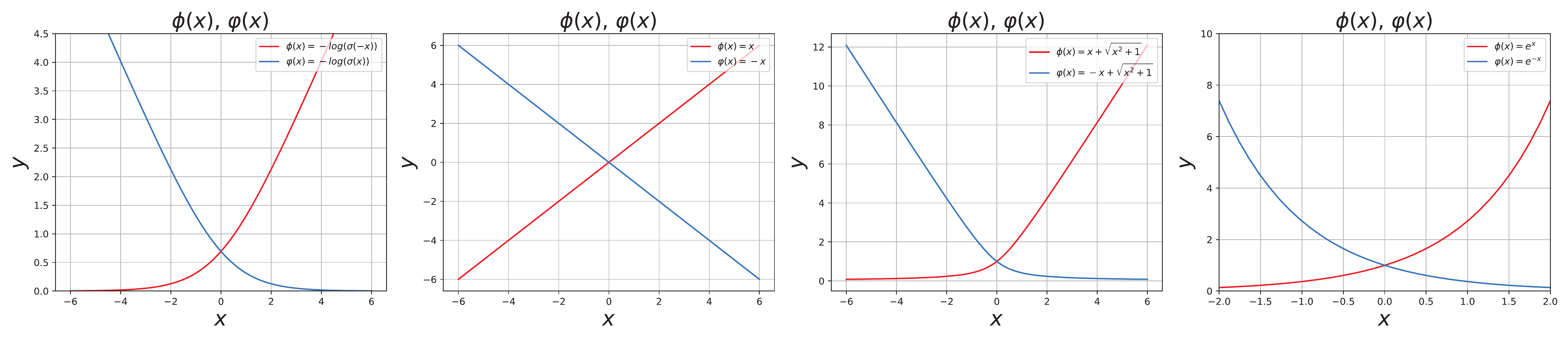}
	\vspace{-7pt}
	\caption{Various $\phi$ and $\varphi$ that satisfies Eq.~\eqref{eq_solvable}.}
	\label{fig_function_curve} 
\end{figure*}

\begin{figure*}[h]
\begin{minipage}{.5\textwidth}	
    \centering    
    \includegraphics[width=0.99\columnwidth]{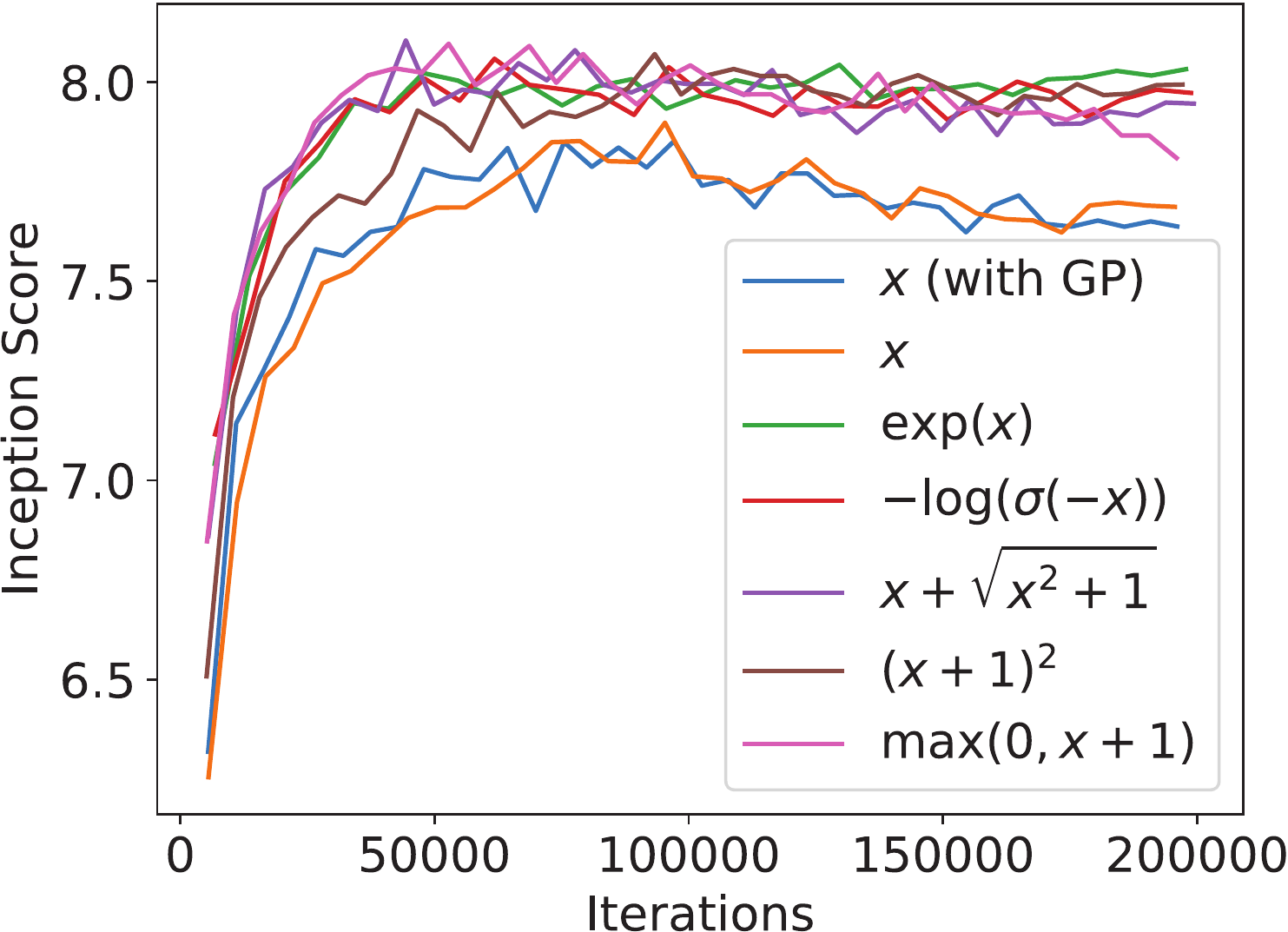}
    \caption{IS training curves on CIFAR-10.}
    \label{training_curve_cifar_icp}
\end{minipage}
\begin{minipage}{.5\textwidth}	
    \centering    
    \includegraphics[width=0.99\columnwidth]{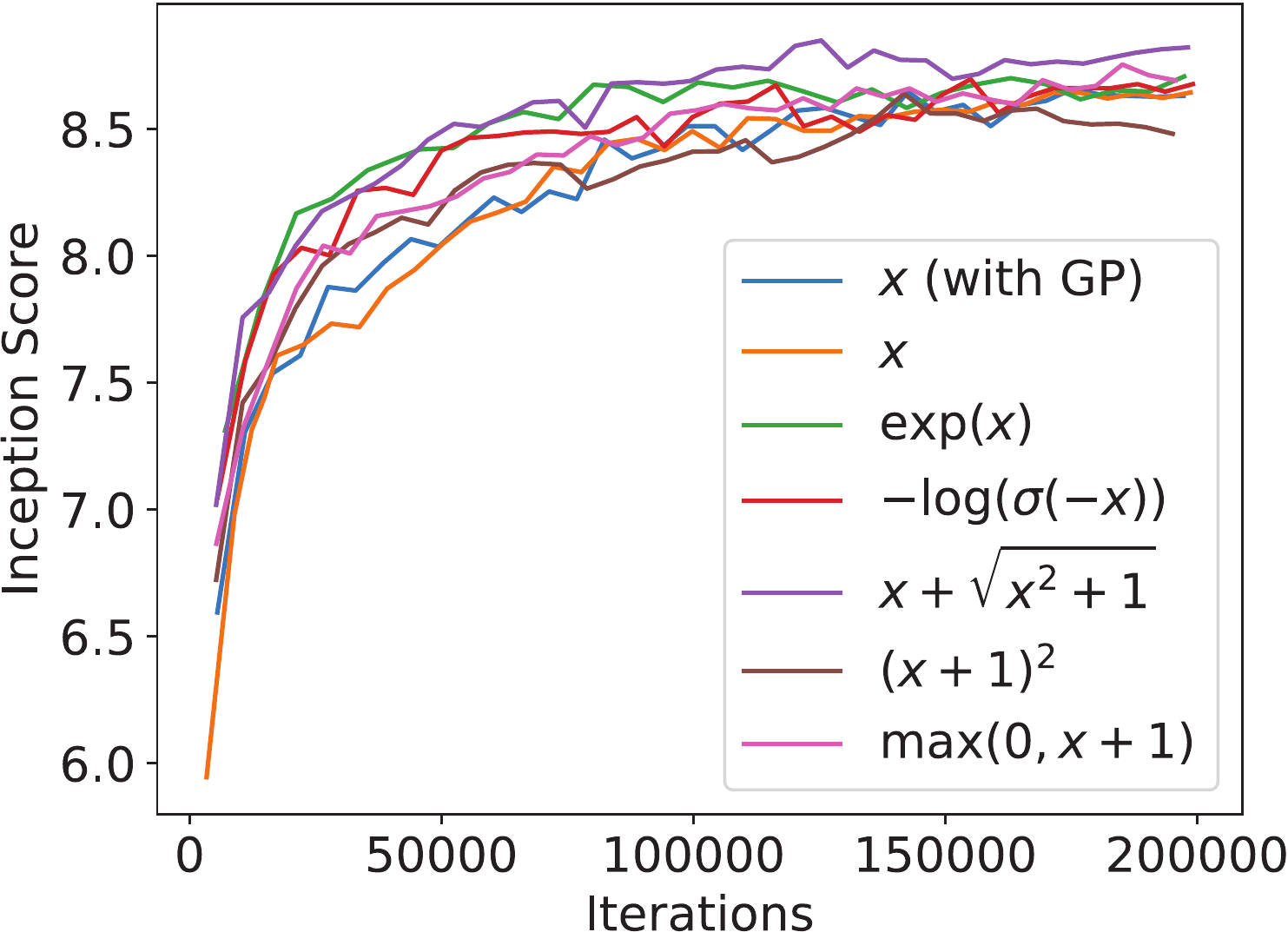}
    \caption{IS training curves on Tiny ImageNet.}
    \label{training_curve_tiny_icp}
\end{minipage}
\vspace{-5pt}
\end{figure*} 

\begin{table}[!h]

\begin{minipage}{.50\textwidth}	
    \centering
    \begin{tabular}{c|c|c|c}
    \multicolumn{4}{l}{Generator:} \\
    \hline \Tstrut 
    Operation     & Kernel & Resample & Output Dims \\[5pt]
    \hline \Tstrut
    Noise         &   N/A     &    N/A     & 128\\[5pt]
    Linear        &   N/A  &   N/A   &  128$\times$4$\times$4   \\[5pt]
    Residual block & 3$\times$3 &UP& 128$\times$8$\times$8 \\[5pt]
    Residual block & 3$\times$3 & UP & 128$\times$16$\times$16 \\[5pt]
    Residual block & 3$\times$3 & UP & 128$\times$32$\times$32 \\[5pt]
    Conv \& Tanh & 3$\times$3 & N/A & 3$\times$32$\times$32 \\[5pt]
    \hline
    \end{tabular}
\end{minipage}
\begin{minipage}{.50\textwidth}	
    \centering
    \begin{tabular}{c|c|c|c}
    \multicolumn{4}{l}{Discriminator:} \\
    \hline \Tstrut
    Operation     & Kernel & Resample & Output Dims \\[5pt]
    \hline \Tstrut
    Residual Block & 3$\times$3$\times$2 & Down & 128$\times$16$\times$16\\[5pt]
    Residual Block & 3$\times$3$\times$2 & Down & 128$\times$8$\times$8\\[5pt]
    Residual Block &3$\times$3$\times$2 & N/A & 128$\times$8$\times$8\\[5pt]
    Residual Block &3$\times$3$\times$2 & N/A & 128$\times$8$\times$8\\[5pt]
    ReLU,mean pool & N/A & N/A & 128 \\[5pt]
    Linear & N/A & N/A & 1 \\[5pt]
    \hline
    \end{tabular}
    \end{minipage}
\caption{The network architectures.}
\label{tab:my_label}
\end{table}

\begin{figure}[!htbp]
\vspace{5pt}
	\centering
	\begin{subfigure}{0.427\linewidth}
	    \centering
	    \includegraphics[width=0.95\columnwidth]{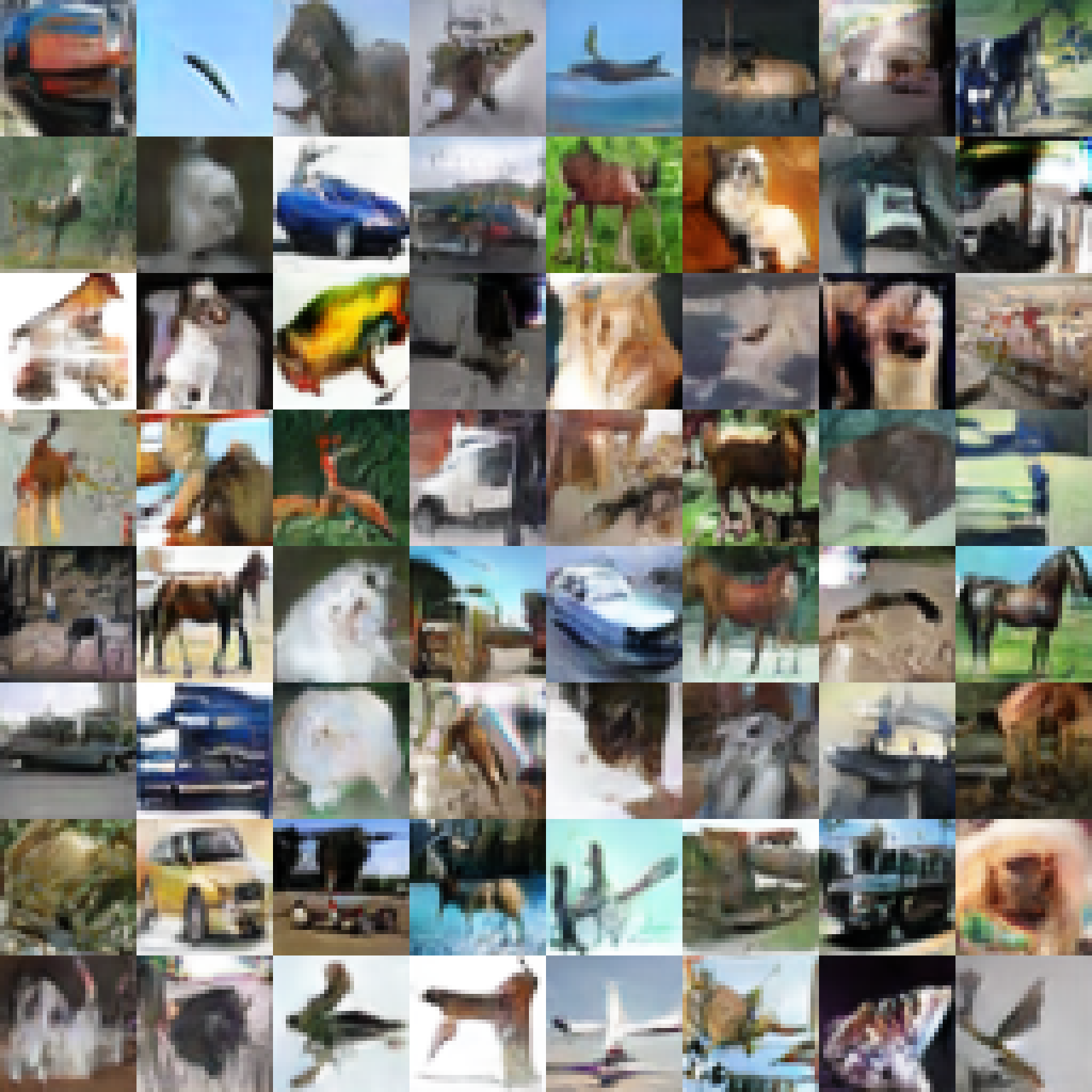}
	    \vspace{-3pt}
	    \caption{$x$}
	\end{subfigure}
	\begin{subfigure}{0.427\linewidth}
    	\centering
	    \includegraphics[width=0.95\columnwidth]{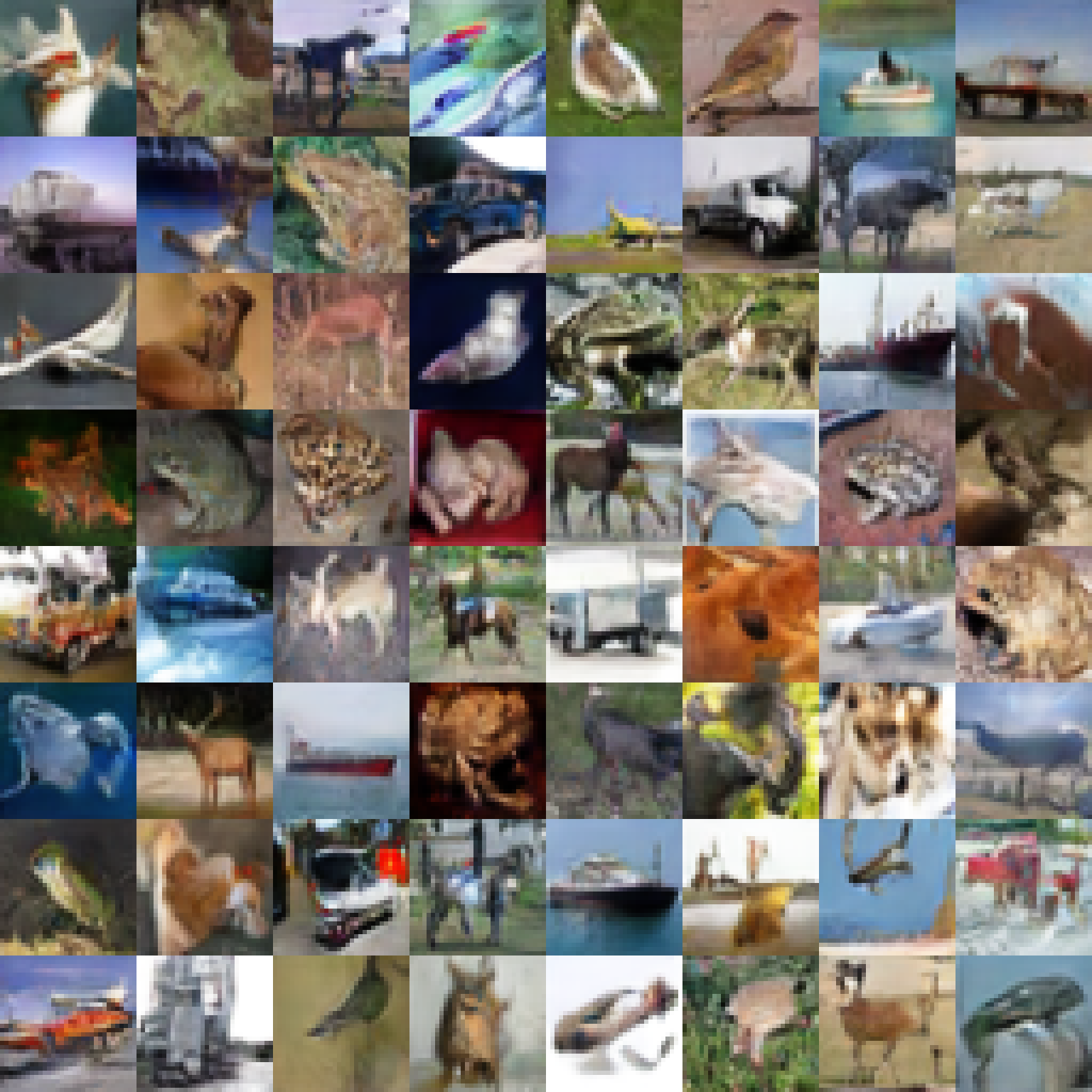}
	    \vspace{-3pt}
	    \caption{$\exp(x)$}
	\end{subfigure}
	\begin{subfigure}{0.427\linewidth}
		\centering
		\includegraphics[width=0.95\columnwidth]{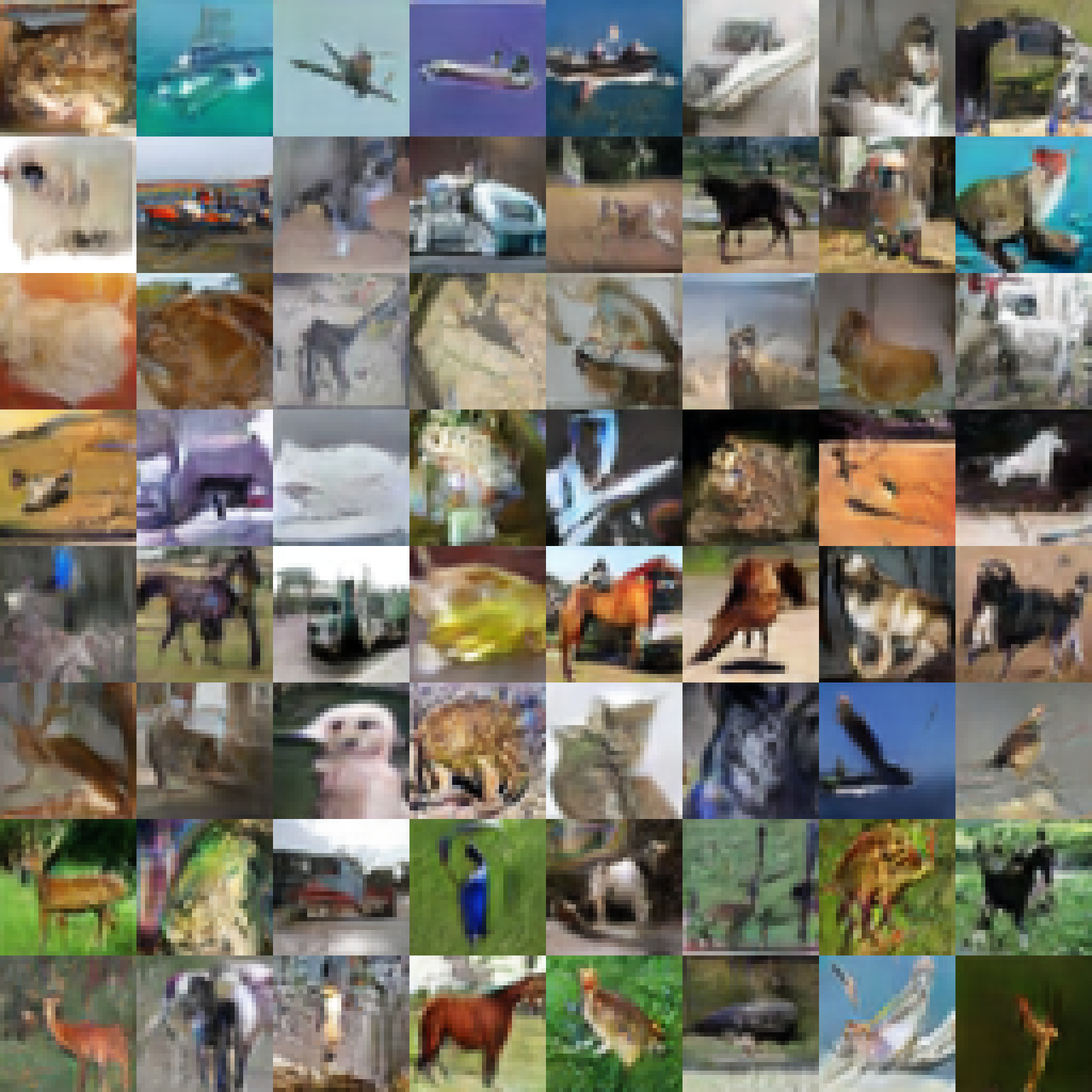}
		\vspace{-3pt}
		\caption{$-\log(\sigma(-x))$}
	\end{subfigure}
	\begin{subfigure}{0.427\linewidth}
		\centering	
		\includegraphics[width=0.95\columnwidth]{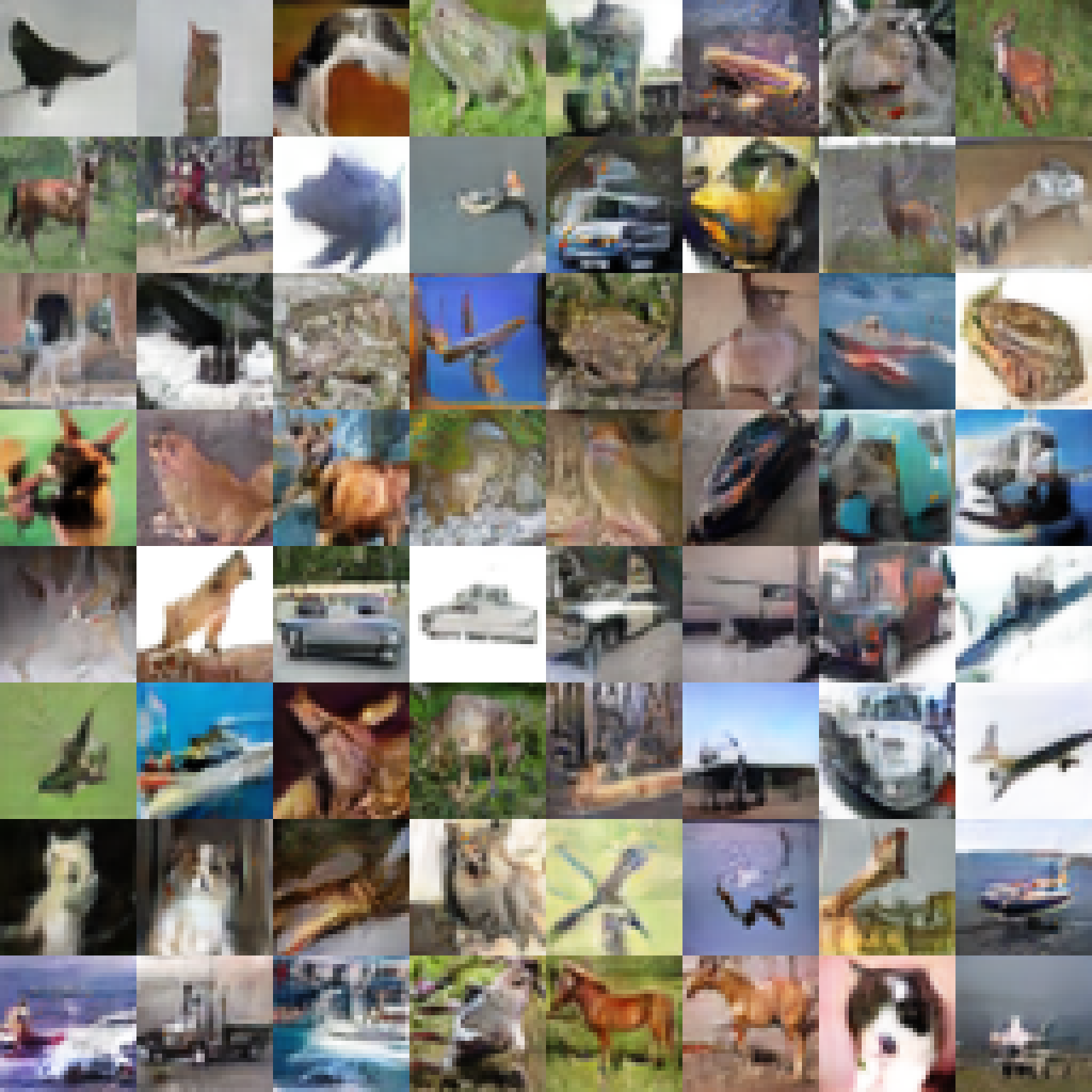}
		\vspace{-3pt}
		\caption{$x+\sqrt{x^2+1}$}
	\end{subfigure}	
	\begin{subfigure}{0.427\linewidth}
		\centering	
		\includegraphics[width=0.95\columnwidth]{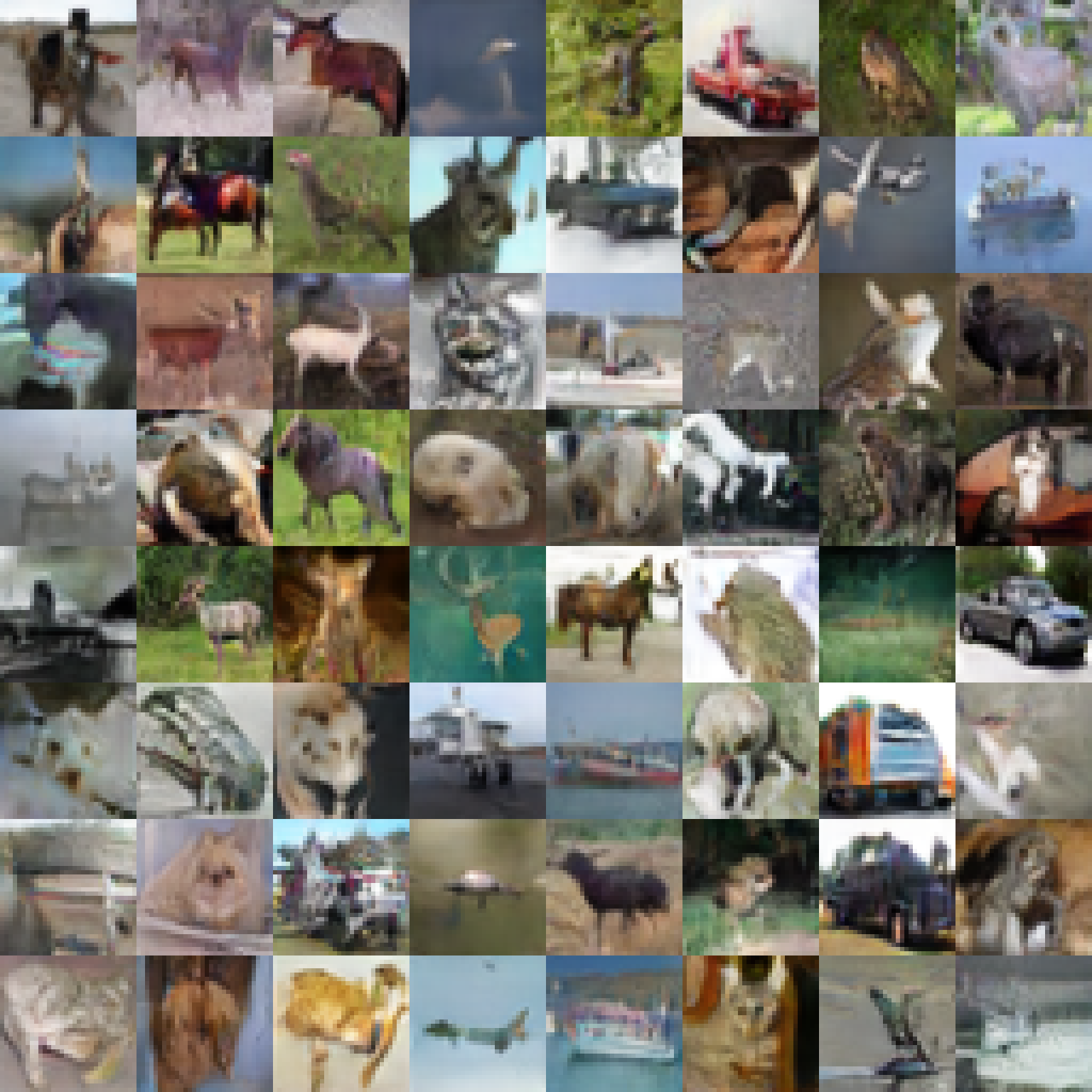}
		\vspace{-3pt}
		\caption{$(x+1.0)^2$}
	\end{subfigure}	
	\begin{subfigure}{0.427\linewidth}
		\centering	
		\includegraphics[width=0.95\columnwidth]{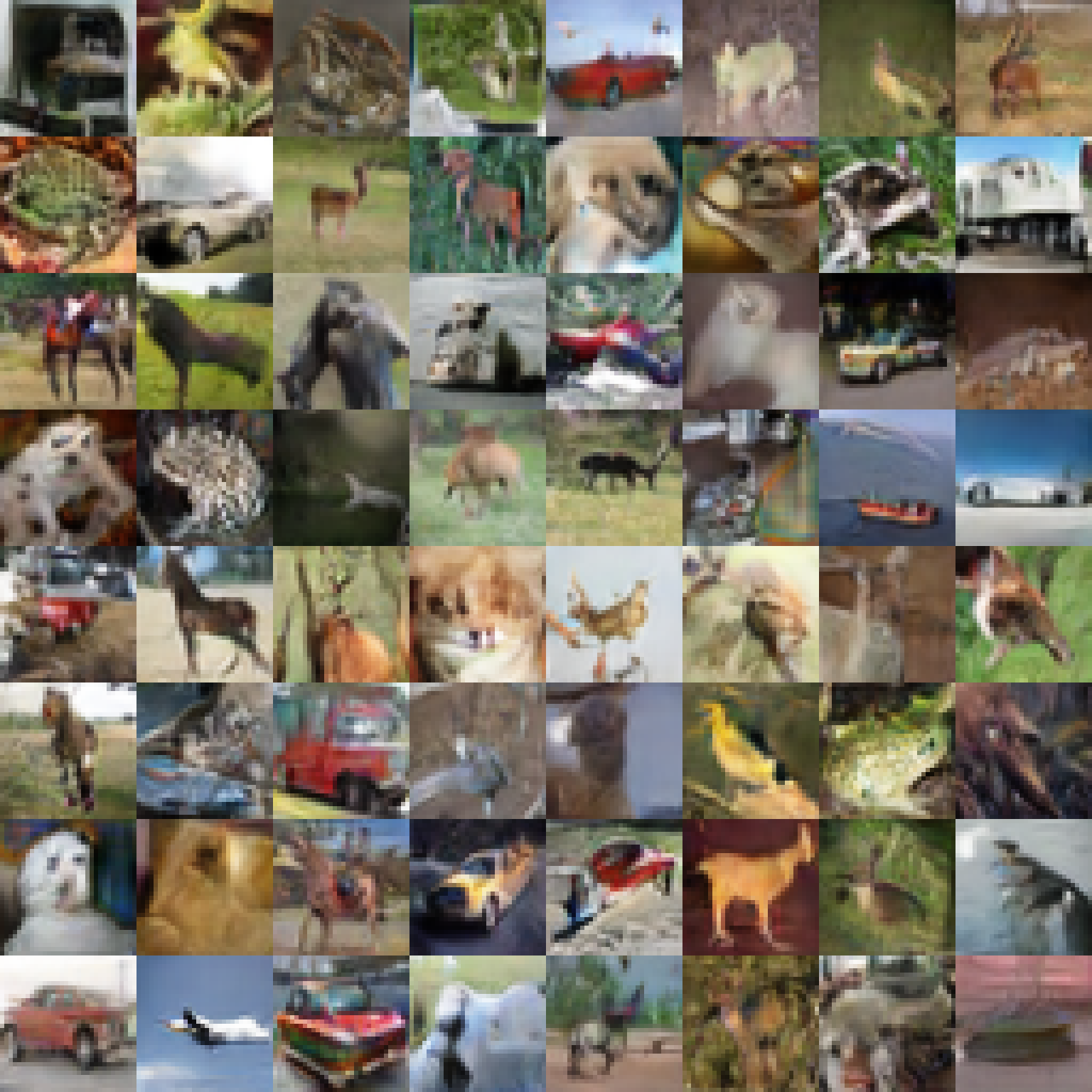}
		\vspace{-3pt}
		\caption{$\max(0, x+1.0)$}
	\end{subfigure}	
	\vspace{-5pt}
	\caption{Random samples of LGANs with different loss metrics on CIFAR-10.}
	\label{fig_cifar10}
\end{figure}

\begin{figure}[!htbp]
\vspace{5pt}
	\centering
	\begin{subfigure}{0.427\linewidth}
	    \centering
	    \includegraphics[width=0.95\columnwidth]{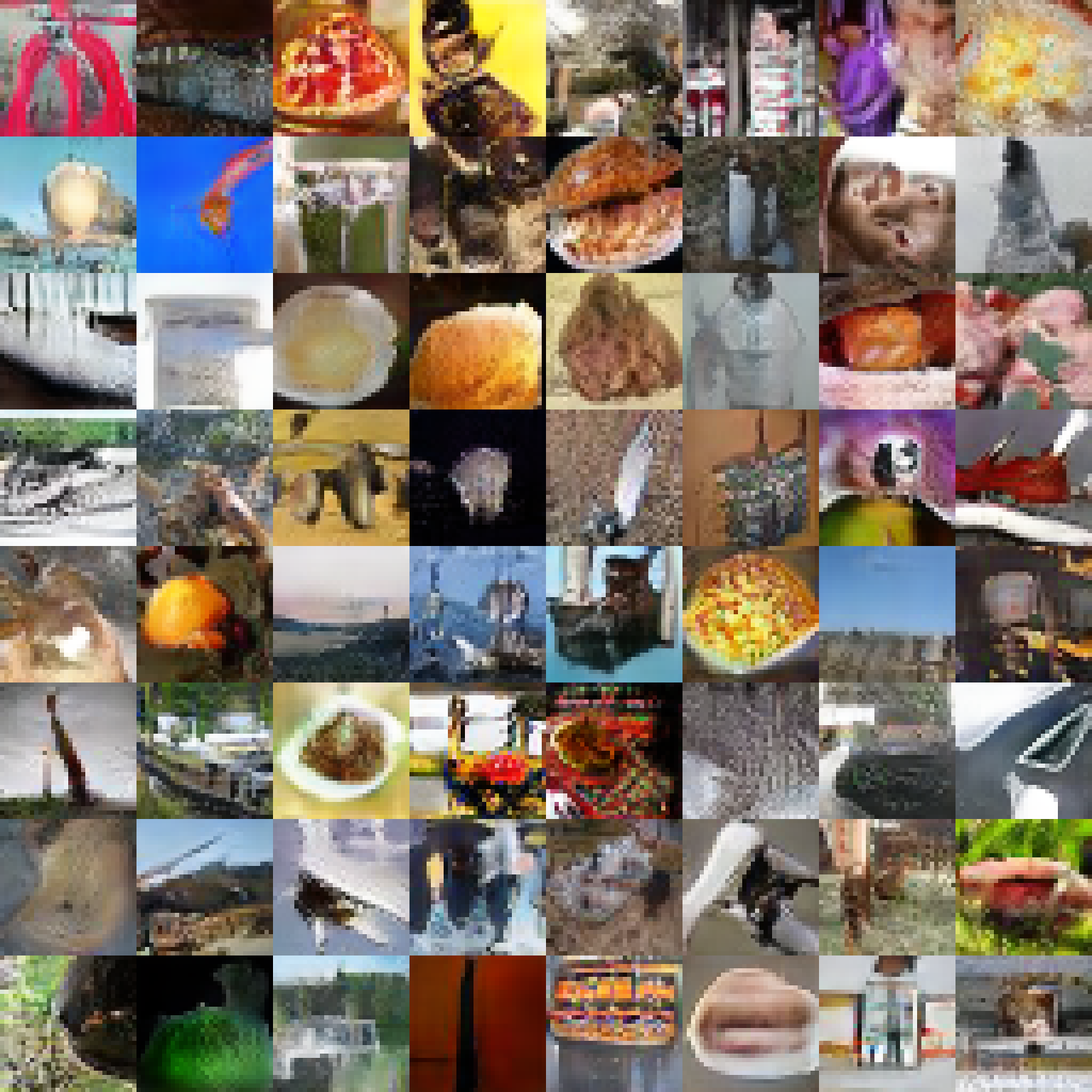}
	    \vspace{-3pt}
	    \caption{$x$}
	\end{subfigure}
	\begin{subfigure}{0.427\linewidth}
		\centering	
		\includegraphics[width=0.95\columnwidth]{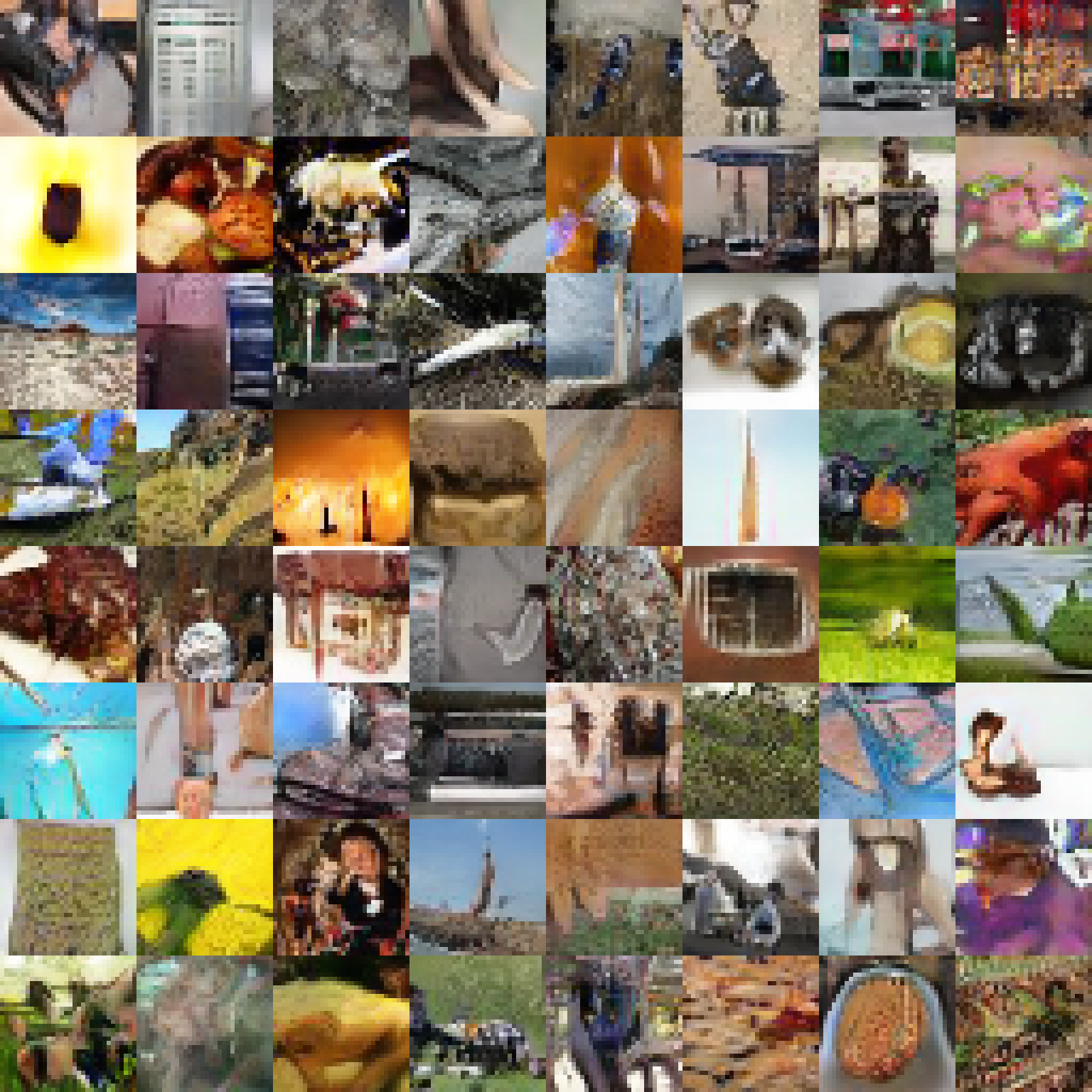}
		\vspace{-3pt}
		\caption{$\exp(x)$}
	\end{subfigure}	
	\begin{subfigure}{0.427\linewidth}
		\centering
		\includegraphics[width=0.95\columnwidth]{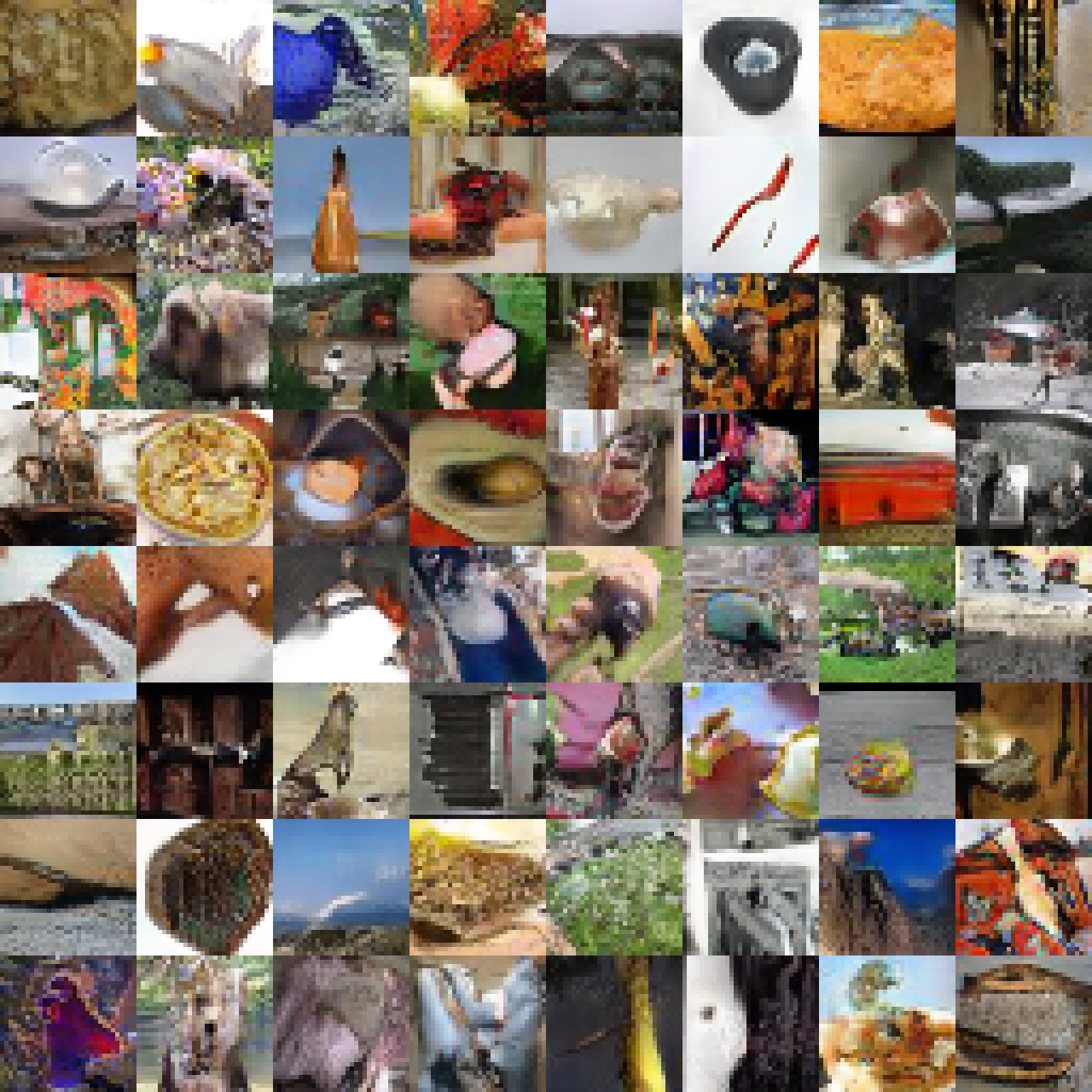}
		\vspace{-3pt}
		\caption{$-\log(\sigma(-x))$}
	\end{subfigure}
	\begin{subfigure}{0.427\linewidth}
		\centering	
		\includegraphics[width=0.95\columnwidth]{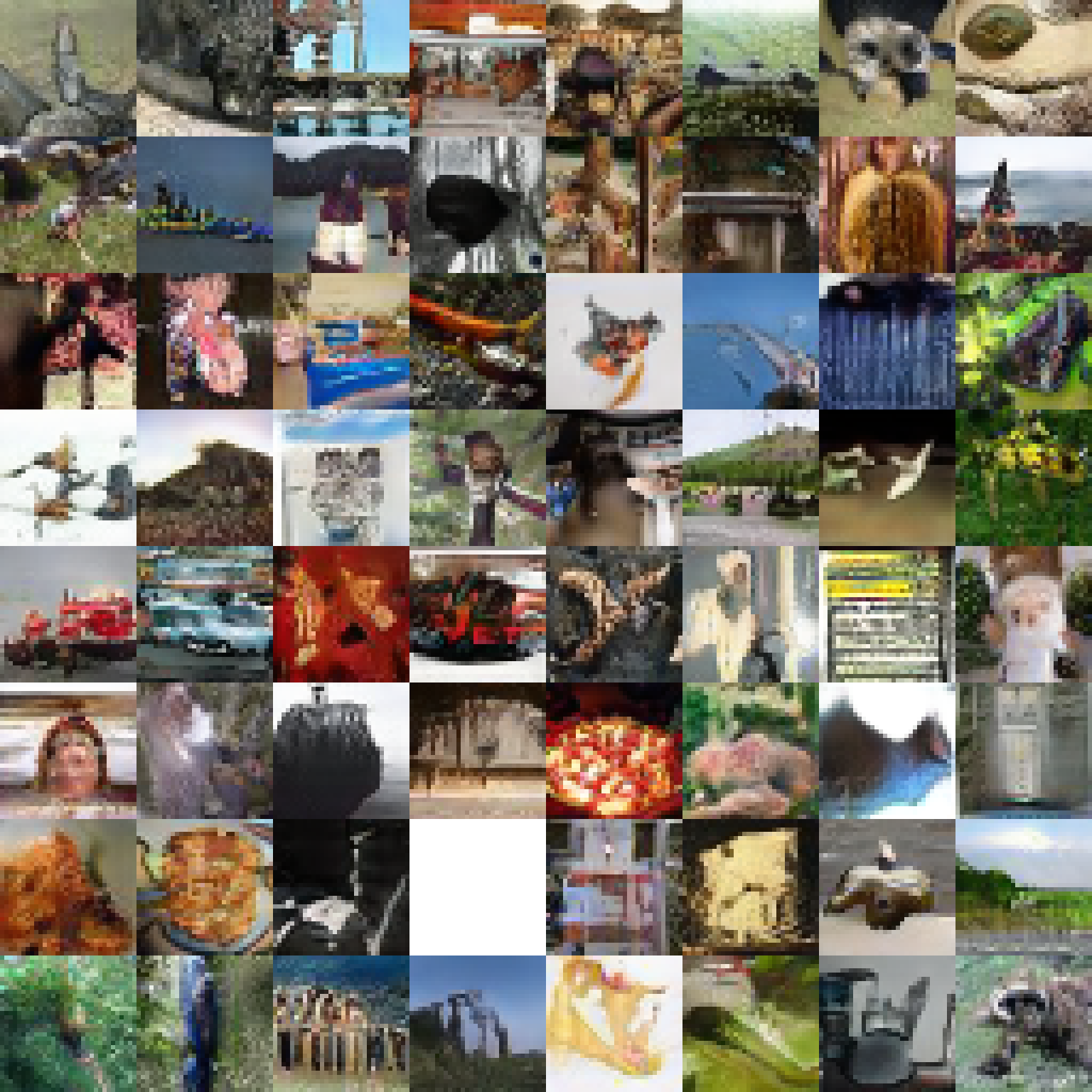}
		\vspace{-3pt}
		\caption{$x+\sqrt{x^2+1}$}
	\end{subfigure}	
	\begin{subfigure}{0.427\linewidth}
    	\centering
	    \includegraphics[width=0.95\columnwidth]{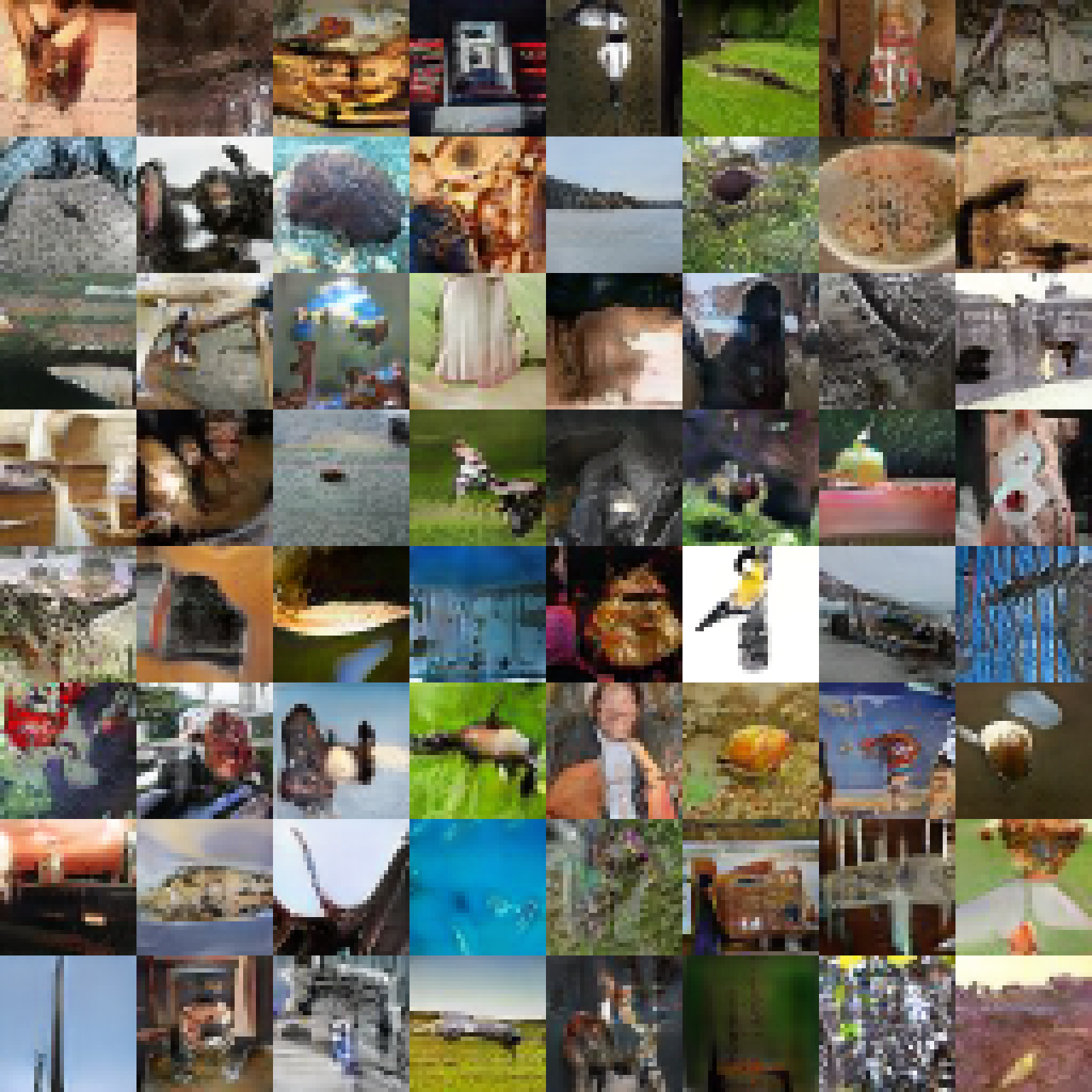}
	    \vspace{-3pt}
	    \caption{$(x+1.0)^2$}
	\end{subfigure}
	\begin{subfigure}{0.427\linewidth}
		\centering	
		\includegraphics[width=0.95\columnwidth]{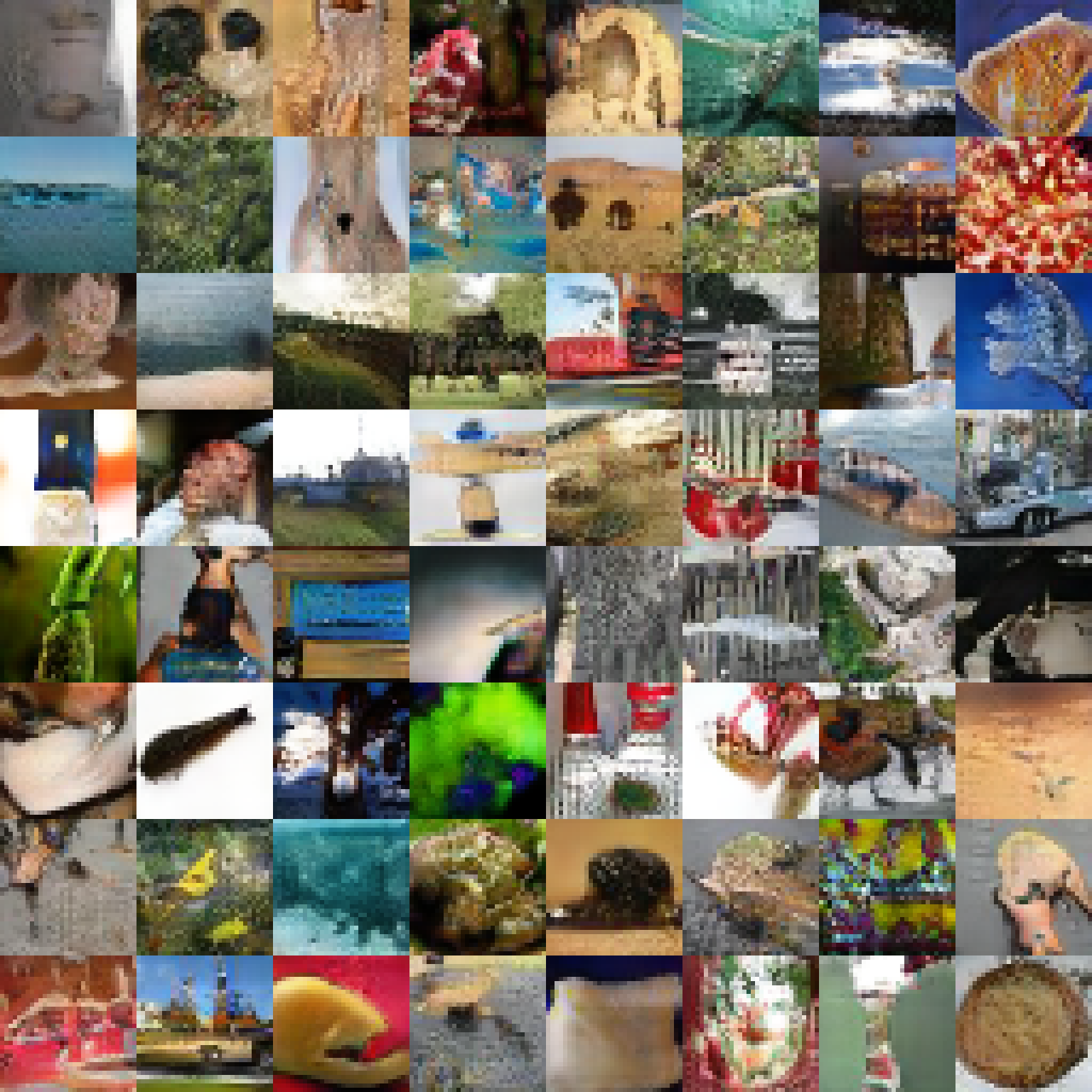}
		\vspace{-3pt}
		\caption{$\max(0, x+1.0)$}
	\end{subfigure}	
	\vspace{-5pt}
	\caption{Random samples of LGANs with different loss metrics on Tiny Imagenet.}
	\label{fig_tiny}
\end{figure}

\begin{figure}[!htbp]
\vspace{5pt}
	\centering
	\begin{subfigure}{0.427\linewidth}
	    \centering
	    \includegraphics[width=0.95\columnwidth]{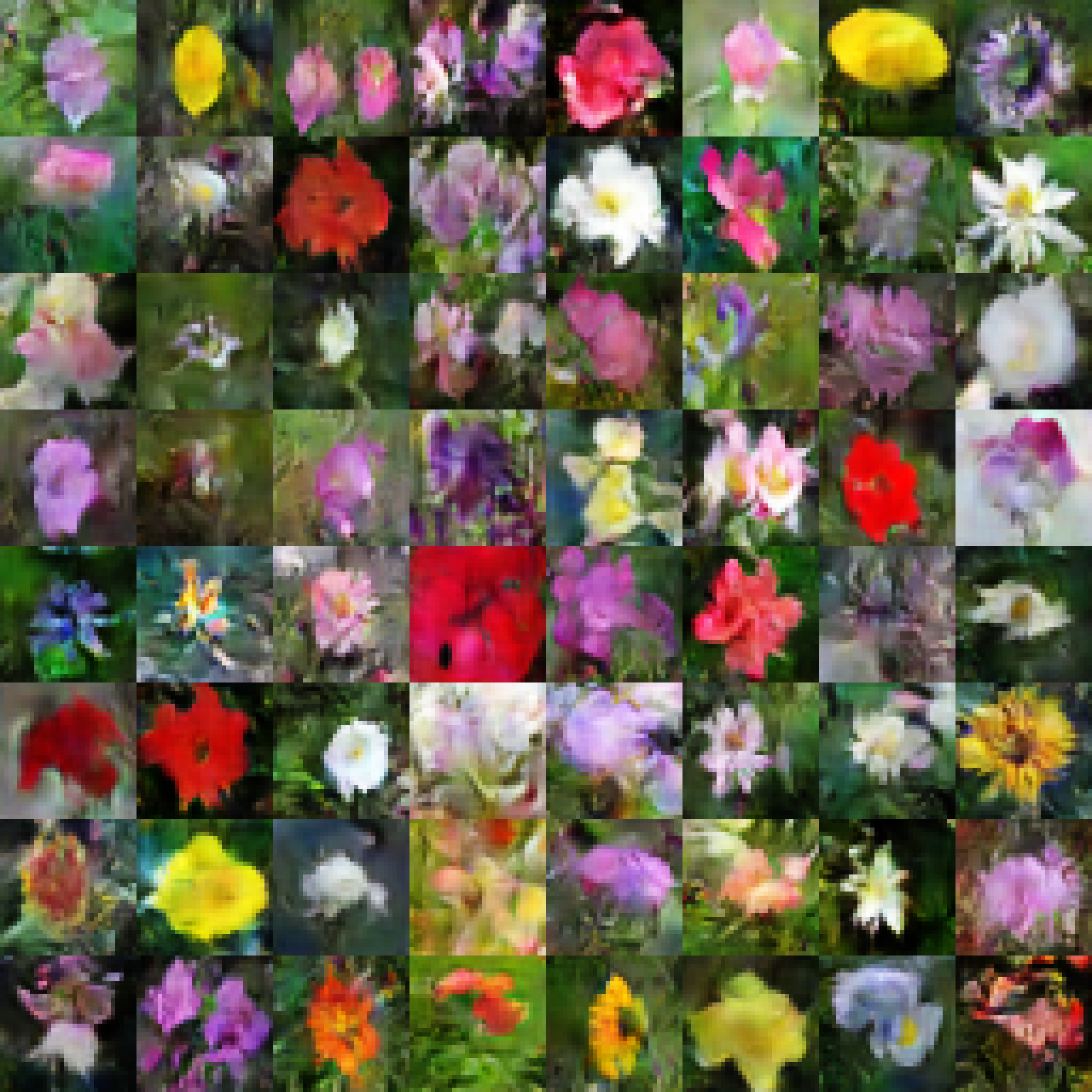}
	    \vspace{-3pt}
	    \caption{$x$}
	\end{subfigure}
	\begin{subfigure}{0.427\linewidth}
		\centering
		\includegraphics[width=0.95\columnwidth]{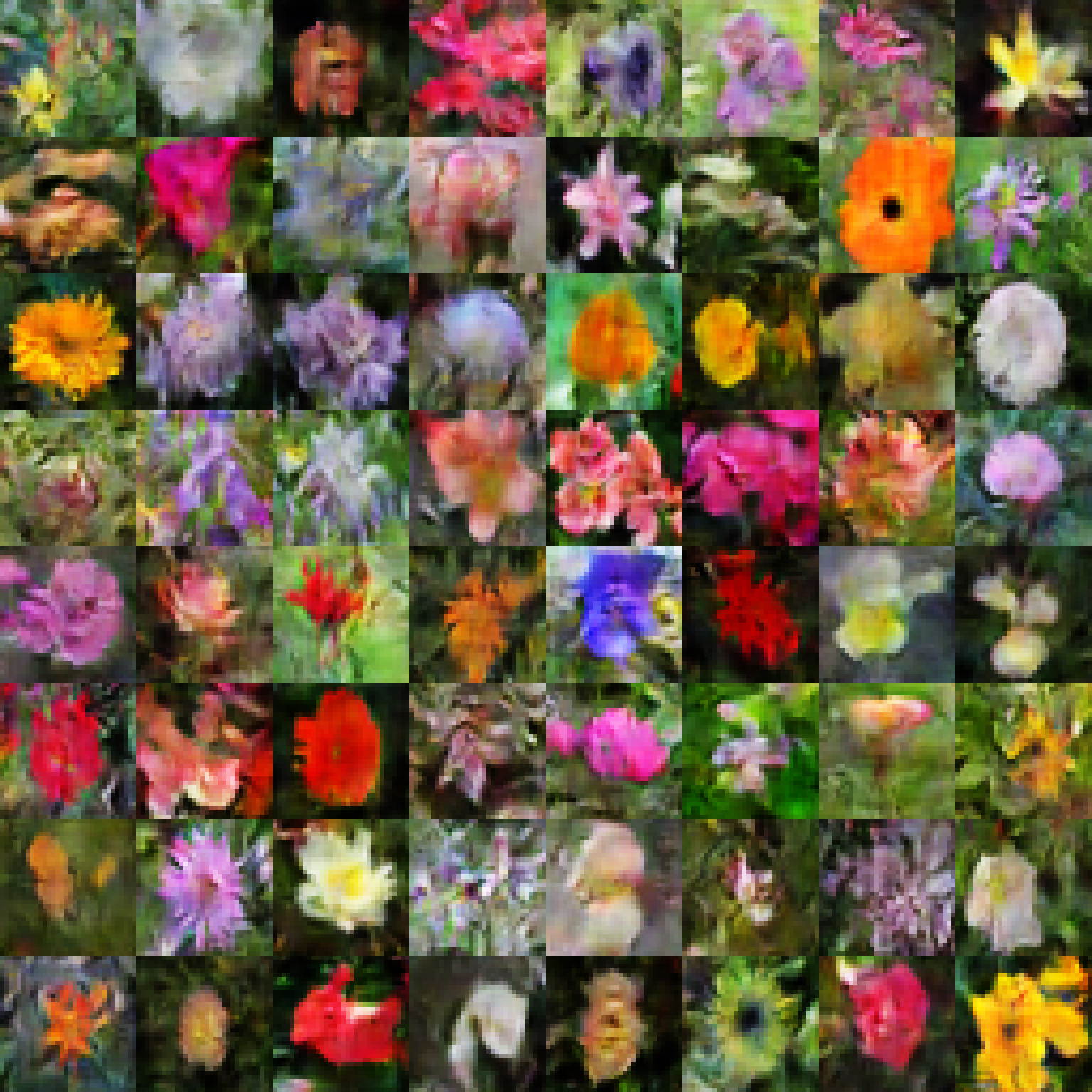}
		\vspace{-3pt}
		\caption{$\exp(x)$}
	\end{subfigure}
	\begin{subfigure}{0.427\linewidth}
		\centering	
		\includegraphics[width=0.95\columnwidth]{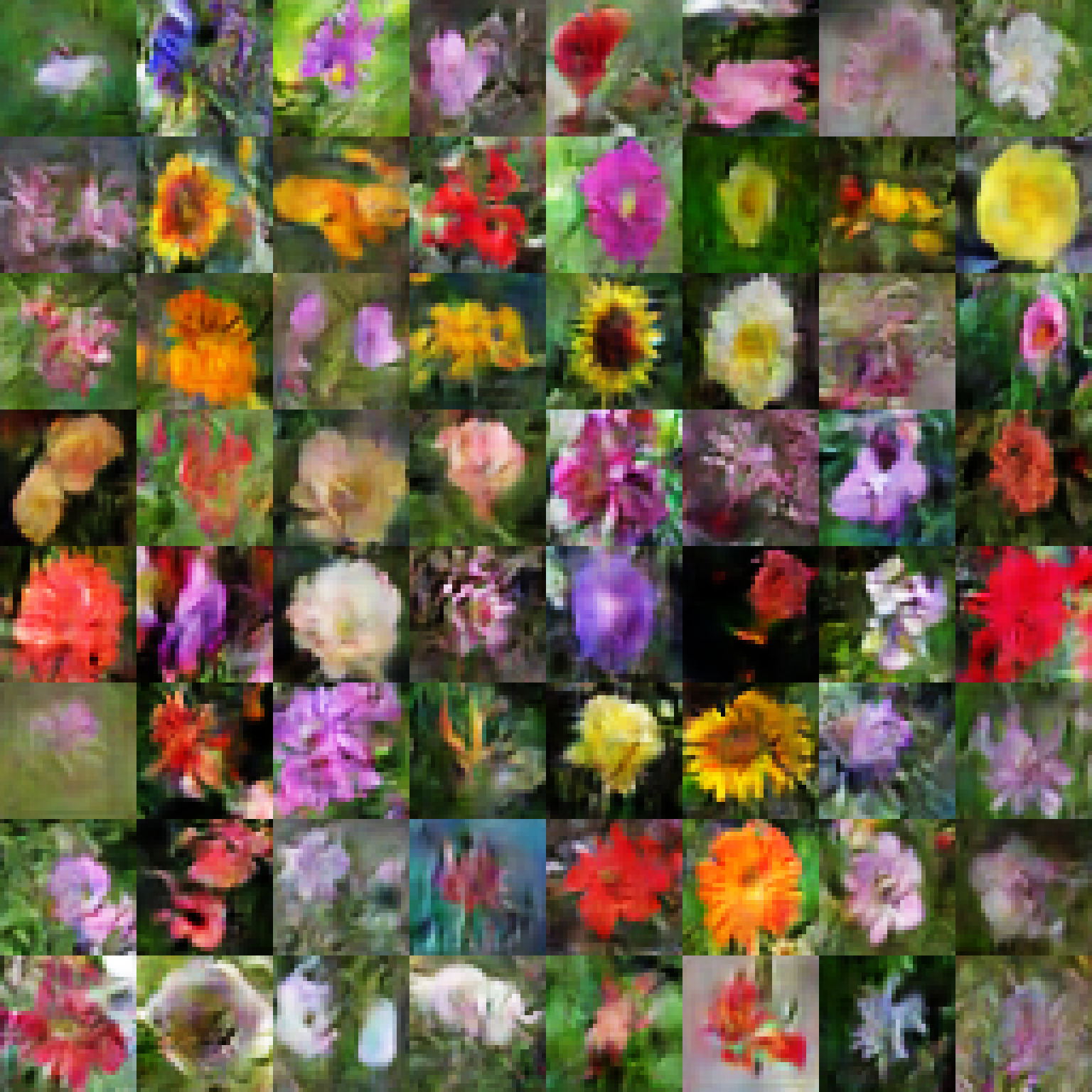}
		\vspace{-3pt}
		\caption{$-\log(\sigma(-x))$}
	\end{subfigure}	
	\begin{subfigure}{0.427\linewidth}
		\centering	
		\includegraphics[width=0.95\columnwidth]{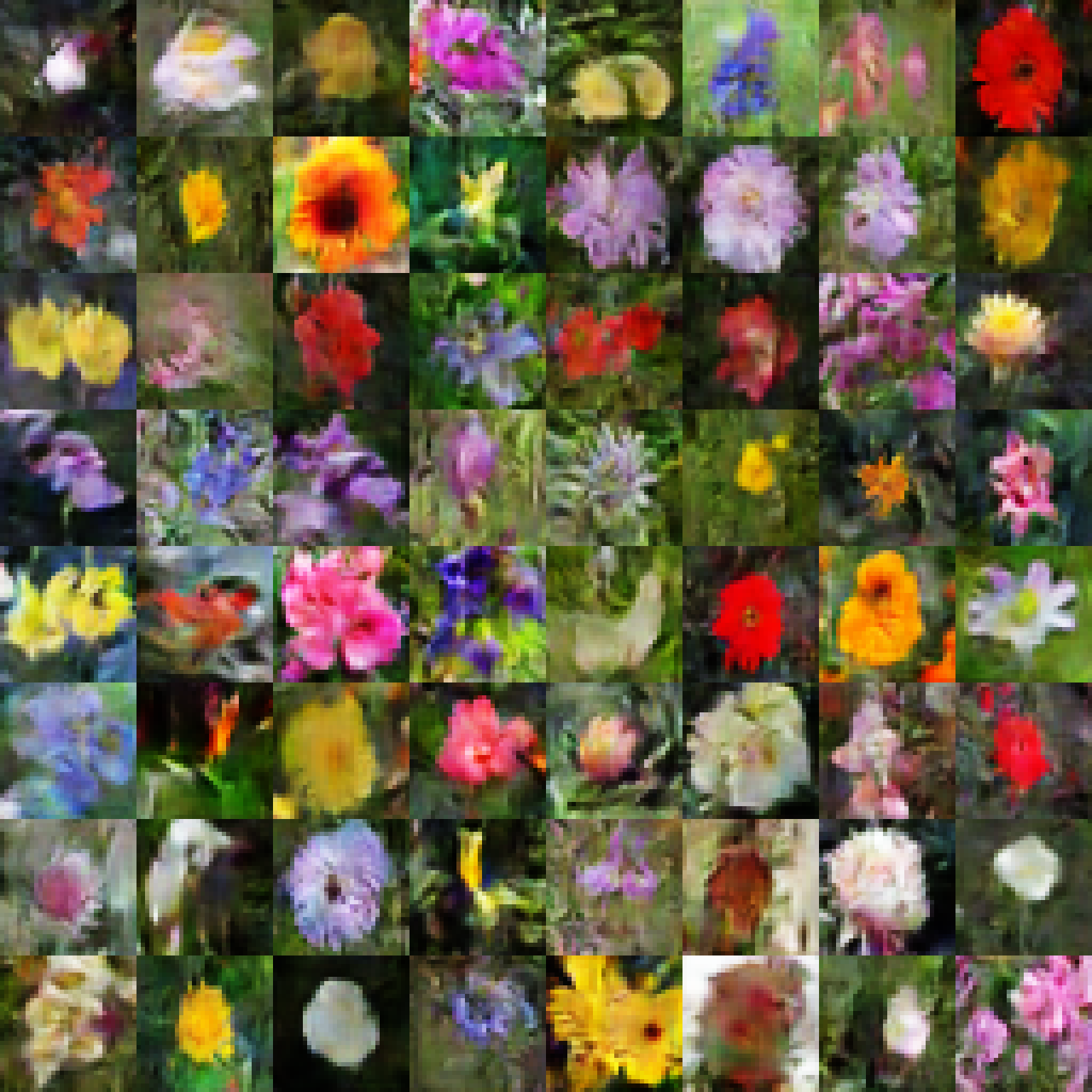}
		\vspace{-3pt}
		\caption{$x+\sqrt{x^2+1}$}
	\end{subfigure}	
	\begin{subfigure}{0.427\linewidth}
    	\centering
	    \includegraphics[width=0.95\columnwidth]{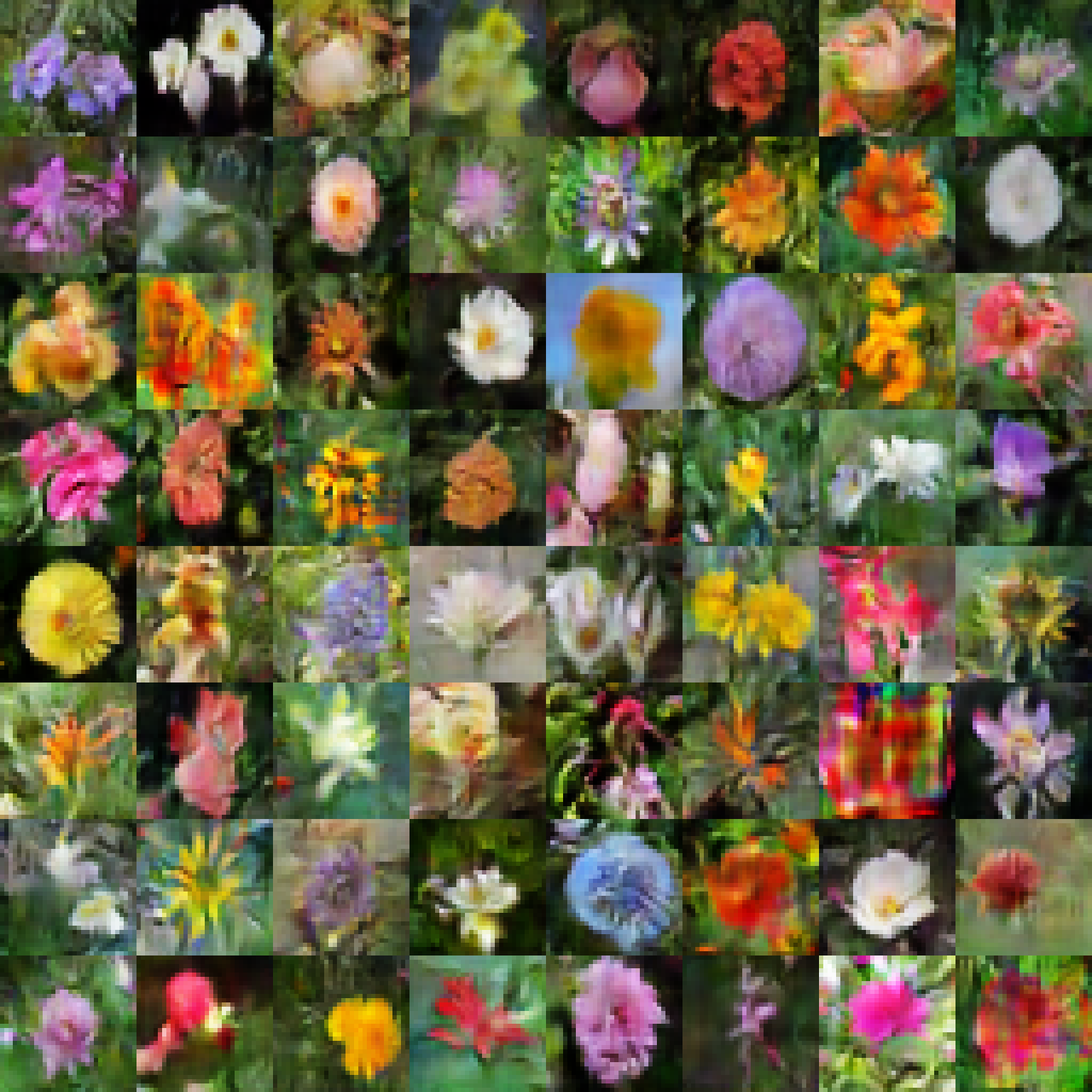}
	    \vspace{-3pt}
	    \caption{$(x+1.0)^2$}
	\end{subfigure}
	\begin{subfigure}{0.427\linewidth}
		\centering	
		\includegraphics[width=0.95\columnwidth]{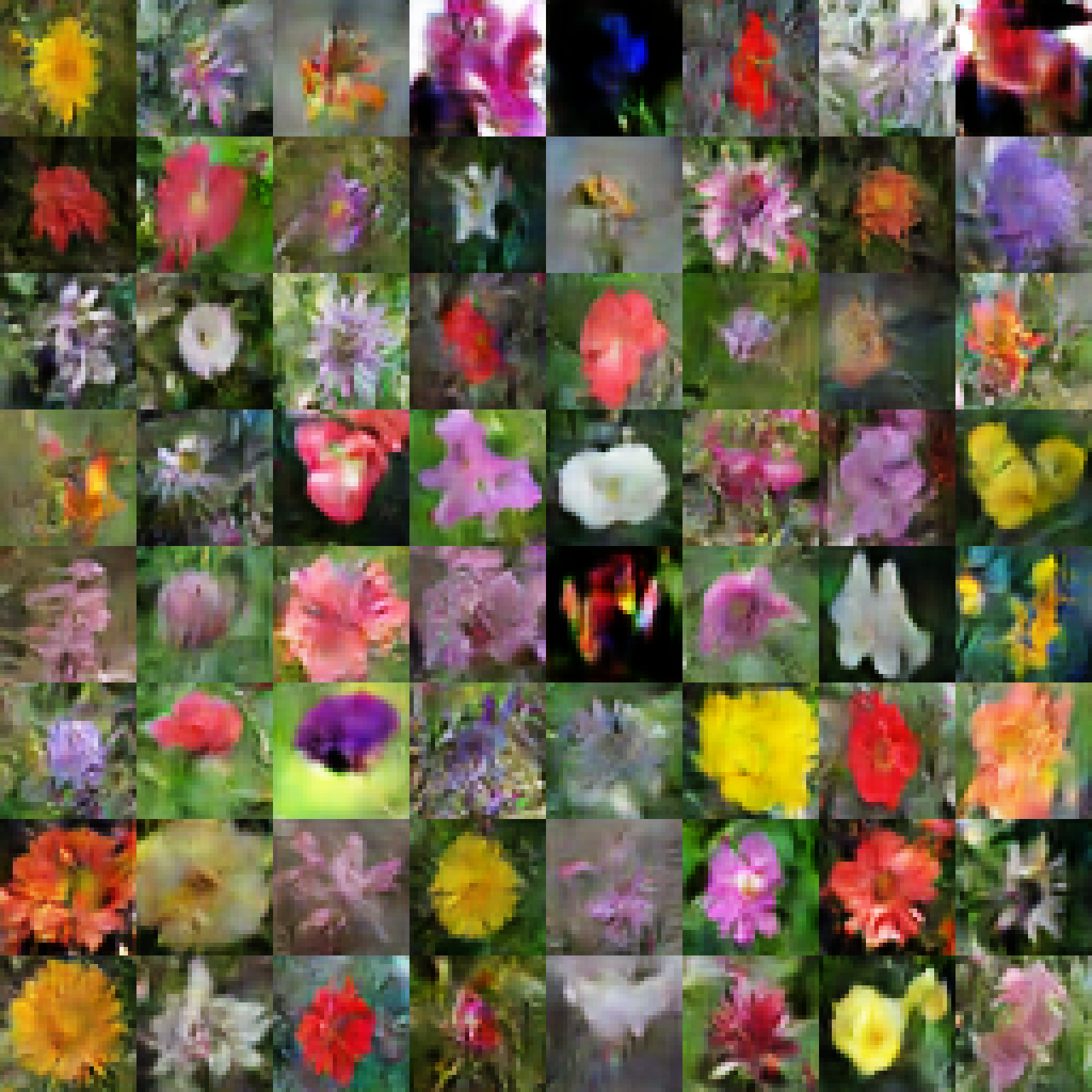} 
		\vspace{-3pt}
		\caption{$\max(0, x+1.0)$}
	\end{subfigure}	
	\vspace{-5pt}
	\caption{Random samples of LGANs with different loss metrics on Oxford 102.} 
	\label{fig_flowers}
\end{figure}

\begin{figure}[!htbp]
\vspace{5pt}
\centering
\includegraphics[width=0.75\columnwidth]{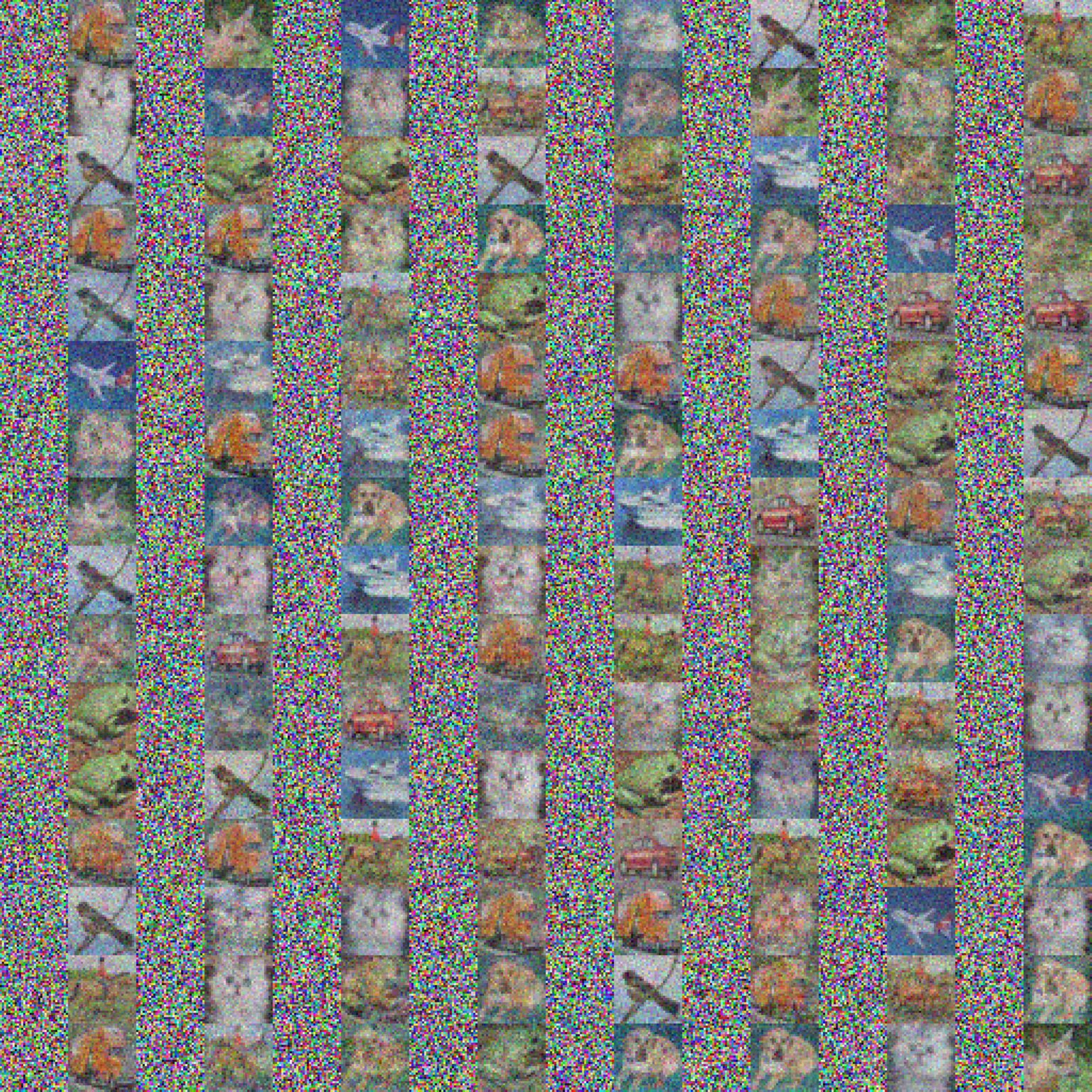}

\vspace{10pt}

\includegraphics[width=0.75\columnwidth]{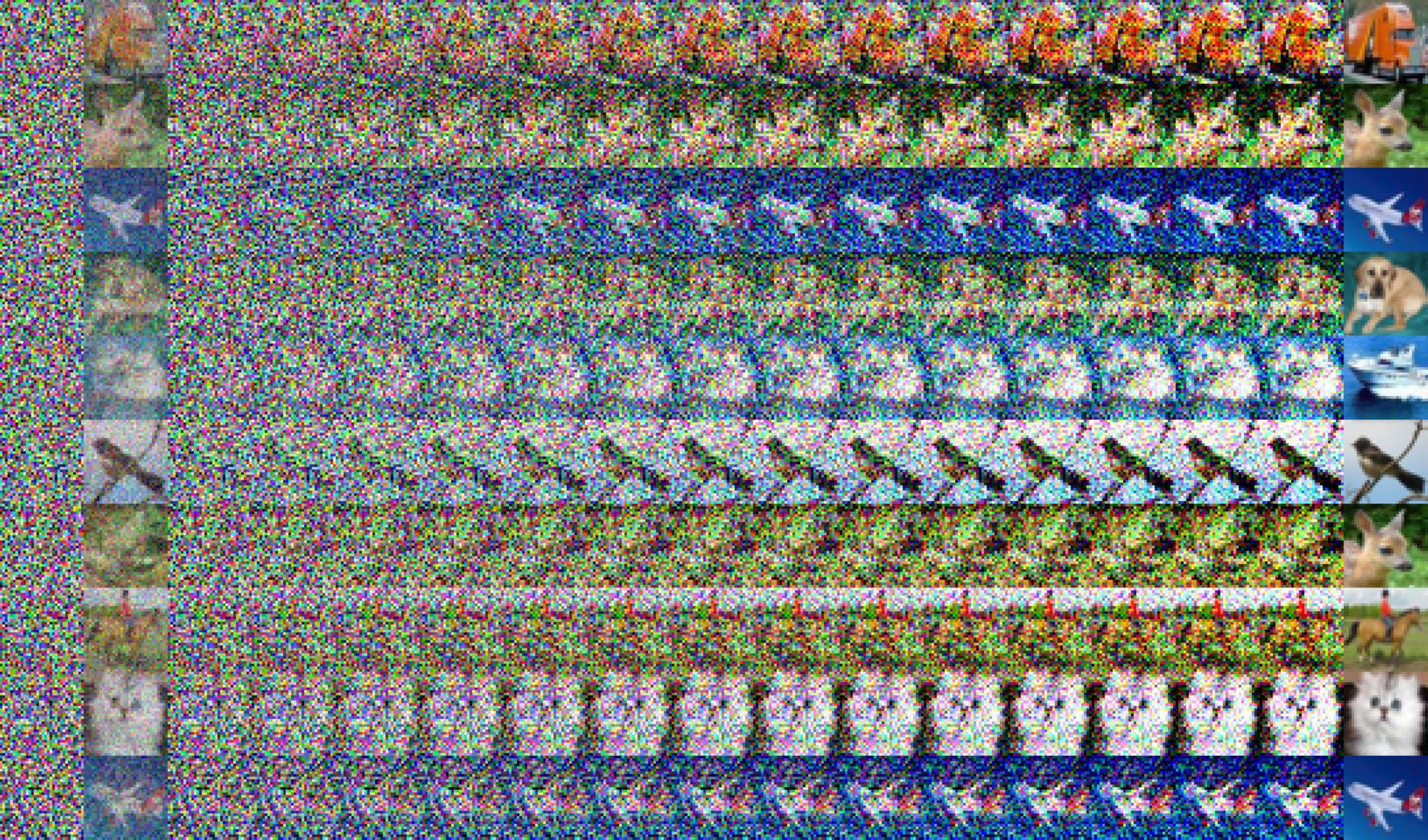}
\vspace{3pt}
\caption{The gradient of LGANs with real world data, where $\CP_r$ consists of ten images and $\CP_g$ is Gaussian noise. Up: Each odd column are $x \in \CS_g$ and the nearby column are their gradient $\nabla_{\!x} \ff(x)$. Down: the leftmost in each row is $x \in \CS_g$, the second are their gradients $\nabla_{\!x} \ff(x)$, the interiors are $x+\epsilon\cdot\nabla_{\!x} \ff(x)$ with increasing $\epsilon$, and the rightmost is the nearest $y \in \CS_r$.} 
\label{fig_gradient_direction_continuous}
\end{figure}

\end{document}